\def\T={\buildrel {\scriptscriptstyle\triangle} \over =}
\def\ba{\begin{array}}
\def\ea{\end{array}}
\newcommand{\Sa}{S_{i_{1}^{*}}^{(1)}}
\newcommand{\Sb}{S_{i_{2}^{*}}^{(1)}}
\newcommand{\Sc}{S_{k_{1}^{*}}^{(3)}}
\newcommand{\Sd}{S_{k_{2}^{*}}^{(3)}}
\newcommand{\Xa}{X_{i_{1}^{*}}^{1}}
\newcommand{\Xb}{X_{i_{2}^{*}}^{1}}
\newcommand{\Xc}{X_{k_{1}^{*}}^{3}}
\newcommand{\Xd}{X_{k_{2}^{*}}^{3}}
\newcommand{\Ma}{m^{(1)}_{i_{1}^{*}} }
\newcommand{\Mb}{m^{(1)}_{i_{2}^{*}}}
\newcommand{\Mc}{m^{(3)}_{k_{1}^{*}}}
\newcommand{\Md}{m^{(3)}_{k_{2}^{*}}}
\newcommand{\R}{\mathbb{R}}
\newcommand{\vct}[1]{\bm{#1}}
\newcommand{\mtx}[1]{\bm{#1}}
\newcommand{\vX}{\vct{X}}
\newcommand{\vW}{\vct{W}}
\newcommand{\vA}{\vct{A}}
\newcommand{\vB}{\vct{B}}
\newcommand{\vC}{\vct{C}}
\newcommand{\vD}{\vct{D}}
\newcommand{\vT}{\vct{T}}
\newtheorem{theorem}{Theorem}[section]
\newtheorem{lemma}[theorem]{Lemma}
\newtheorem{assumption}{Assumption}[section]
\newtheorem{corollary}[theorem]{Corollary}
\newtheorem{definition}{Definition}[section]
\newtheorem*{metatheorem}{Meta-Theorem}
\newenvironment{remark}[1][Remark]{\begin{trivlist}
\item[\hskip \labelsep {\bfseries #1}]}{\end{trivlist}}
\newtheorem*{remarks}{Remarks}
\begin{document}

%\icmladdress{Colorado School of Mines}

% You may provide any keywords that you 
% find helpful for describing your paper; these are used to populate 
% the "keywords" metadata in the PDF but will not be shown in the document
%\icmlkeywords{manifold learning,  signal recovery, 3machine learning,}

%\vskip 0.3in
\title{Optimal Low-Rank Tensor Recovery from Separable Measurements: Four Contractions Suffice}

\maketitle

% It is OKAY to include author information, even for blind
% submissions: the style file will automatically remove it for you
% unless you've provided the [accepted] option to the icml2011
% package.
\begin{center}
\author{Parikshit Shah}\footnote{Yahoo! Labs \quad parikshit@yahoo-inc.com} \qquad
%\icmladdress{Wisconsin Institutes of Discovery}
\author{Nikhil Rao}\footnote{UT Austin \quad nikhilr@cs.utexas.edu} \qquad
%\icmladdress{University of Texas, Austin}
\author{Gongguo Tang}\footnote{Colorado School of Mines \quad gtang@mines.edu}
\end{center}
%%%%%%%%%%%%%%%%%%%%%%%%%%%%%%%%%%%%%%%%%%%%%%%%
%%%%%%%%%%%%%%%%%%%%%%%%%%%%%%%%%%%%%%%%%%%%%%%%
%%%%%%%%%%%%%%%%%%%%%%%%%%%%%%%%%%%%%%%%%%%%%%%%
%%%%%%%%%%%%%%%%%%%%%%%%%%%%%%%%%%%%%%%%%%%%%%%%
%%%%%%%%%%%%%%%%%%%%%%%%%%%%%%%%%%%%%%%%%%%%%%%%
%%%%%%%%%%%%%%%%%%%%%%%%%%%%%%%%%%%%%%%%%%%%%%%%
%%%%%%%%%%%%%%%%%%%%%%%%%%%%%%%%%%%%%%%%%%%%%%%%
%%%%%%%%%%%%%%%%%%%%%%%%%%%%%%%%%%%%%%%%%%%%%%%%
%%%%%%%%%%%%%%%%%%%%%%%%%%%%%%%%%%%%%%%%%%%%%%%%
%%%%%%%%%%%%%%%%%%%%%%%%%%%%%%%%%%%%%%%%%%%%%%%%

\begin{abstract} 
Tensors play a central role in many modern machine learning and signal processing applications. In such applications, the target tensor is usually of low rank, i.e., can be expressed as a sum of a small number of rank one tensors. This motivates us to consider the problem of low rank tensor recovery from a class of linear measurements called \emph{separable measurements}. As specific examples, we focus on two distinct types of separable measurement mechanisms (a) Random projections, where each measurement corresponds to an inner product of the tensor with a suitable random tensor, and (b) the completion problem where measurements constitute revelation of a random set of entries. We present a computationally efficient algorithm, with rigorous and \emph{order-optimal} sample complexity results (upto logarithmic factors) for tensor recovery. Our method is based on reduction to matrix completion sub-problems and adaptation of Leurgans' method for tensor decomposition. We extend the methodology and sample complexity results to higher order tensors, and experimentally validate our theoretical results

%UAI: 
%
%Motivated by modern applications of tensors in machine learning, we consider the problem of tensor completion - i.e. recovering a low-rank tensor from partially revealed entries. When the underlying tensor is of dimension $n_1 \times n_2 \times n_3$ (where $n_3$ is the largest dimension) and rank $r \leq n_1$, we show that (under mild assumptions) $O(rn_3 \log^2 n_3)$ measurements suffice to provably and exactly recover the underlying tensor - a near-optimal sample complexity. The key observation that drives our method is that samples from only \emph{four slices of the tensor suffice}, i.e. if there are enough measurements in four different slices of the tensor to solve the related matrix completion problems, then the full tensor can be pieced together efficiently by spectral methods. Our algorithm relies on a reduction to matrix completion sub-problems for tensor slices and an adaptation of Leurgans' method for tensor decomposition, and is therefore computationally efficient and scalable, unlike competing approaches proposed recently in the literature. We extend the methodology and sample complexity results to higher order tensors, and experimentally validate our theoretical results.
\end{abstract} 

%%%%%%%%%%%%%%%%%%%%%%%%%%%%%%%%%%%%%%%%%%%%%%%%
%%%%%%%%%%%%%%%%%%%%%%%%%%%%%%%%%%%%%%%%%%%%%%%%
%%%%%%%%%%%%%%%%%%%%%%%%%%%%%%%%%%%%%%%%%%%%%%%%
%%%%%%%%%%%%%%%%%%%%%%%%%%%%%%%%%%%%%%%%%%%%%%%%
%%%%%%%%%%%%%%%%%%%%%%%%%%%%%%%%%%%%%%%%%%%%%%%%
%%%%%%%%%%%%%%%%%%%%%%%%%%%%%%%%%%%%%%%%%%%%%%%%
%%%%%%%%%%%%%%%%%%%%%%%%%%%%%%%%%%%%%%%%%%%%%%%%
%%%%%%%%%%%%%%%%%%%%%%%%%%%%%%%%%%%%%%%%%%%%%%%%
%%%%%%%%%%%%%%%%%%%%%%%%%%%%%%%%%%%%%%%%%%%%%%%%
%%%%%%%%%%%%%%%%%%%%%%%%%%%%%%%%%%%%%%%%%%%%%%%%

%-------------------------------------------------
\section{Introduction} \label{sec:intro}
%Introduce notion of contraction early
% Consilidate algos - Leurgans and then sensing and completion
% Throughout consistently use random sensing and completion
% Motivate Gaussian sensing better - what do the measurements mean? a CandesTao type example?
% Any connections to Radon transform possible?
% Connections to matrix recovery: notion of contractions
% Look up RFP, CR, CS off grid etc. papers for ideas to improve intro
% Mention applications of tensor completion - recsys, multiview models with missing data, etc.
% Look up intro of other tensor papers
% Which notion of tensor rank and why
% low rank (rank smaller than dimension)
% tensors hard, decomp generically easy
% look up results in Hillar&Lim
% Our algorithm is simple and relatively low complexity
% Reduces to mat comp, and to make it scalable other fast heuristics may be employed - algo will work if heuristics work
% Instead of JMLR, how about FOCM? Or Applied and Comp. Harmonics?

Tensors provide compact representations for multi-dimensional, multi-perspective data in many problem domains, including image and video processing \cite{Zhang:2008ey, Liu:2013bh, {Kang:2002iu}}, collaborative filtering \cite{Karatzoglou:2010bm,Chen:2005jn}, statistical modeling \cite{Anandkumar:2014vz, animatensor}, array signal processing \cite{Lim:2010if, Sidiropoulos:2000gb}, psychometrics \cite{Wold:1987iw,Smilde:2005wf},  neuroscience \cite{Beckmann:2005fg, MartnezMontes:2004ic}, and large-scale data analysis \cite{Papalexakis:2013tu, Sun:2009tc, Sun:2006ga, Anandkumar:2013va, Cichocki:2014vy}. In this paper we consider the problem of tensor recovery - given partial information of a tensor via linear measurements, one wishes to learn the entire tensor. While this inverse problem is ill-posed in general, we will focus on the setting where the underlying tensor is simple. The notion of simplicity that we adopt is based on the \emph{(Kruskal) rank} of the tensor, which much like the matrix rank is of fundamental importance - tensors of lower rank have fewer constituent components and are hence simple. For example, video sequences are naturally modeled as tensors, and these third order tensors  have low rank as a result of homogeneous variations in the scene \cite{wang2014low}. Unlike the matrix case, however, computational tasks related to the tensor rank such as spectral decompositions, rank computation, and regularization are fraught with computational intractability \cite{tensorhard,tensorKolda} in the worst case.
%such as video data, graphical models with latent variables \cite{tensorcomp}, audio classification \cite{audio}, psychometrics \cite{psycho}, and neuroscience \cite{neuro}.

We focus on linear inverse problems involving tensors. Linear measurements of an unknown tensor $\vX$ are specified by $y = \mathcal{L}(\vX)$ where $\mathcal{L}$ is a linear operator and $y \in \R^{m}$. Here the quantity $m$ refers to the number of measurements, and the minimum number of measurements \footnote{We use the terms measurements and samples interchangeably.} $m$ required to reliably recover $\vX$ (called the \emph{sample complexity}) is of interest. While in general, such problems are ill-posed and unsolvable when $m$ is smaller than the dimensionality of $\vX$, the situation is more interesting when the underlying signal (tensor) is structured, and the sensing mechanism $\mathcal{L}(\cdot)$ is able to exploit this structure. For instance, similar ill-posed problems are solvable, even if $m$  is substantially lower than the ambient dimension, when the underlying signal is a sparse vector, or a low-rank matrix, provided that $\mathcal{L} (\cdot)$ has appropriate properties. 

We focus for the most part on tensors of order $3$, and show later that all our results extend to the higher order case in a straightforward way. We introduce a class of measurement operators known as \emph{separable measurements}, and present an algorithm for low-rank tensor recovery for the same. We  focus on two specific measurement mechanisms that are special cases of separable mechanisms:
\begin{itemize} 
\item  \emph{Separable random projections:} For tensors of order $3$, we consider observations where the $i^{th}$ measurement is of the form $\mathcal{L}_i(\vX):=\langle a\otimes A_i, \vX \rangle$, where $a$ is a random unit vector, $A_i$ is a random matrix, and $\otimes$ represents an outer product of the two. For higher order tensors, the measurements are defined in an analogous manner. Here $\langle \cdot, \cdot \rangle$ is the tensor inner product (to be made clear in the sequel).
\item \emph{Completion:} The measurements here are simply a subset of the entries of the true tensor. The entries need to be restricted to merely four slices of the tensor, and can be random within these slices. 
%Our algorithm breaks down the task of learning the entire tensor to that of learning just four slices of the tensor - a set of matrix completion problems. We show that one needs only sufficiently many entries in four slices - two across two dimensions - to be able to complete these four slices, and this is sufficient information to learn the entire low-rank tensor.
\end{itemize} 
For both the random projection and completion settings, we analyze the performance of our algorithm and prove sample complexity bounds.

The random sampling mechanisms mentioned above are of relevance in practical applications. For instance, the Gaussian random projection mechanism described above is a natural candidate for compressively sampling video and multi-dimensional imaging data. For applications where such data is ``simple'' (in the sense of low rank), the Gaussian sensing mechanism may be a natural means of compressive encoding.

The completion framework is especially relevant to machine learning applications. For instance, it is useful in the context of multi-task learning \cite{icml2013_romera-paredes13}, where each individual of a collection of inter-related tasks corresponds to matrix completion. Consider the tasks of predicting ratings assigned by users for different clothing items, this is naturally modeled as a matrix completion problem \cite{candesmatcomp}. Similarly, the task of predicting ratings assigned by the same set of users to accessories is another matrix completion problem. The multi-task of jointly predicting the ratings assigned by the users to baskets of items consisting of both clothing items and accessories is a tensor completion problem. 

Another application of tensor completion is that of extending the matrix completion framework for contextual recommendation systems. In such a setup, one is given a \emph{rating matrix} that is indexed by users and items, and the entries correspond to the ratings given by different users to different items. Each user provides ratings for only a fraction of the items (these constitute the sensing operator $\mathcal{L}\left( \cdot \right))$, and one wishes to infer the ratings for all the others. Assuming that such a rating matrix is low rank is equivalent to assuming the presence of a small number of latent variables that drive the rating process. An interesting twist to this setup which requires a tensor based approach is \emph{contextual} recommendation - i.e. where different users provide ratings in different contexts (e.g., location, time, activity). Such a setting is naturally modeled via tensors; the three modes of the tensor are indexed by users, items, and contexts. The underlying tensor may be assumed to be low rank to model the small number of latent variables that influence the rating process.  In this setting, our approach would need a few samples about users making decisions in two different contexts (this corresponds to two slices of the tensor along the third mode), and enough information about two different users providing ratings in a variety of different contexts (these are two slices along the first mode). Once the completion problem restricted to these slices is solved, one can complete the entire tensor by performing simple linear algebraic manipulations.

Of particular note concerning our algorithm and the performance guarantees are the following:
\begin{itemize}
%\item We consider \emph{low rank} sensing operators for tensor recovery. Our operators are very efficient from a storage point of view. Indeed, to sense an order $3$ tensor of size $n \times n \times n$, we require $O(n^2 + n)$ space compared to $O(n^3)$ for traditional sensing methods. 
\item \textbf{Sample complexity: }
In the absence of noise, our algorithm, named T-ReCs (Tensor Recovery via Contractions), provably and exactly recovers the true tensor and achieves an \emph{order-optimal} sample complexity for exact recovery of the underlying tensor in the context of random sensing, and order optimal modulo logarithmic factors in the context of tensor completion. Specifically, for a third order tensor of rank $r$ and largest dimension $n$, the achieved sample complexity is $O(nr)$ for recovery from separable random projections, and $O(nr \log^2n)$ for tensor completion. (These correspond to Theorems \ref{thm:exact_rec_sense} and \ref{thm:exact_rec_comp} respectively.) More generally, for order $K$ tensors the corresponding sample complexities are $O(Knr)$ and $O(Knr \log^2n)$ respectively (Theorems \ref{thm:exact_rec_high} and \ref{thm:exact_rec_highcomp}). 
%
%
%In the context of random sensing, our algorithm, named  in the absence of noise. We believe this to be the first result of this flavor. For a (third order) tensor of rank $r$ and largest dimension $n$, the achieved sample complexity is $O(nr)$, commensurate with the number of degrees of freedom in the model and hence optimal in its dependence on the rank as well as the dimension. More generally, for tensors of order $K$, dimension $n$, and rank $r$, the achieved sample complexity is $O(Knr)$.
%
%
%Similarly, for the completion problem, T-ReCs provably and exactly recovers the tensor
%
%
%In context of the completion problem, our algorithm provably and exactly recovers the true tensor and also achieves an almost order-optimal sample complexity for exact recovery of the underlying tensor in the absence of noise. For a (third order) tensor of rank $r$ and largest dimension $n$, the achieved sample complexity is $O(nr \log^2 n)$, which is order optimal (modulo logarithmic factors) in its dependence on the rank and the dimension. More generally, for tensors of order $K$, dimension $n$, and rank $r$, the achieved sample complexity is $O(Knr \log^2 n)$.

\item \textbf{Factorization:} Equally important is the fact that our method recovers a minimal rank factorization in addition to the unknown tensor. This is of importance in applications such as dimension reduction and also latent variable models \cite{animatensor} involving tensors where the factorization itself holds meaningful interpretational value.

\item \textbf{Absence of strong assumptions:} Unlike some prior art, our analysis relies only on relatively weak assumptions - namely that the rank of the tensor be smaller than the (smallest) dimension, that the factors in the rank decomposition be linearly independent, non-degenerate, and (for the case of completion) other standard assumptions such as incoherence between the factors and the sampling operator. We do not, for instance, require orthogonality-type assumptions of the said factors, as is the case in \cite{animatensor,Oh_tensor}.
 
\item \textbf{Computational efficiency:} Computationally, our algorithm essentially reduces to linear algebraic operations and the solution of matrix nuclear norm (convex) optimization sub-problems, and is hence extremely tractable. Furthermore, our nuclear norm minimization methods deal with matrices that are potentially much smaller, up to factors of $n$, than competing methods that ``matricize" the tensor via unfolding \cite{squaredeal, tomioka}. In addition to recovering the true underlying tensor, it also produces its unique rank decomposition.

\item \textbf{Simplicity:} Our algorithm is conceptually simple - both to implement as well as to analyze. Indeed the algorithm and its analysis follow in a transparent manner from Leurgans' algorithm (a simple linear algebraic approach for tensor decomposition) and standard results for low-rank matrix recovery and completion. We find this intriguing, especially considering the ``hardness'' of most tensor problems \cite{tensorhard,tensorKolda}. Recent work in the area of tensor learning has focused on novel regularization schemes and algorithms for learning low rank tensors; the proposed approach potentially obviates the need for developing these in the context of separable measurements.
\end{itemize}

The fundamental insight in this work is that while solving the tensor recovery problem directly may seem challenging (for example we do not know of natural tractable extensions of the ``nuclear norm'' for tensors), very important information is encoded in a two-dimensional matrix ``sketch'' of the tensor which we call a contraction. (This idea seems to first appear in \cite{unique1}, and is expanded upon in \cite{Moitra_tensor,Vempala_tensor}  in the context of tensor decomposition.) These sketches are formed by taking linear combinations of two-dimensional slices of the underlying tensor - indeed the  slices themselves may be viewed as ``extremal'' contractions. For the Gaussian random projections case, the contractions will be random linear combinations of slices, whereas for the completion setting the contractions we work with will be the slices themselves, randomly subsampled. Our method focuses on recovering these contractions efficiently (using matrix nuclear norm regularization) as a first step, followed by additional processing to recover the true tensor.

\subsection{Related Work and Key Differences}
With a view to computational tractability, the notion of \emph{Tucker rank} of a tensor has been explored; this involves \emph{matricizations} along different modes of the tensor and the ranks of the associated matrices. Based on the idea of Tucker rank, Tomioka et al. \cite{tomioka} have proposed and analyzed a nuclear norm heuristic for tensor completion, thereby bringing tools from matrix completion \cite{candesmatcomp} to bear for the tensor case. Mu et al. \cite{squaredeal}, have extended this idea further by studying reshaped versions of tensor matricizations. However, to date, the sample complexity associated to matrix-based regularization seem to be orders far from the anticipated sample complexity (for example based on a count of the degrees of freedom in the problem) \cite{squaredeal}. In this paper we resolve this conundrum by providing an efficient algorithm that provably enjoys order optimal sample complexity in the order, dimension, and rank of the tensor. 

%Alternating mimimization approaches for recovering tensor factors was proposed in \cite{Oh_tensor}. Again, the sample complexity bounds do not scale optimally with the dimensions or the rank. Unlike alternating minimization schemes that rely on a careful initialization of the method, our method directly solves convex optimization programs. 

In contrast to the matricization approach, alternative approaches for tensor completion with provable guarantees have appeared in the literature. In the restricted setting when the tensor has a symmetric factorization \cite{MOITRA} (in contrast we are able to work in the general non-symmetric setting), the authors propose employing the Lasserre hierarchy via a semidefinite programming based approach. Unfortunately, the method proposed in \cite{MOITRA} is not scalable - it requires solving optimization problems at the $6^{th}$ level of the Lasserre hierarchy which makes solving even moderate-sized problems numerically impractical as the resulting semidefinite programs grow rapidly with the dimension. Furthermore, the guarantees provided in \cite{MOITRA} are of a different flavor - they provide error bounds in the noisy setting, whereas we provide exact recovery results in the noiseless setting. Alternate methods based on thresholding in the noisy setting have also been studied in \cite{Aswani}.
An alternating minimization approach for tensor completion was proposed in \cite{Oh_tensor}. Their approach relies on the restrictive assumptions also - that the underlying tensor be symmetric and orthogonally decomposable (we make no such assumptions), and neither do the sample complexity bounds scale optimally with the dimensions or the rank. Unlike alternating minimization schemes that are efficient but rely on careful initializations, our method directly solves convex optimization programs followed by linear algebraic manipulations. Also relevant is \cite{yuan}, where the authors propose solving tensor completion using the tensor nuclear norm regularizer; this approach is not known to be computationally tractable (no polynomial time algorithm is known for minimizing the tensor nuclear norm) and the guarantees they obtain do not scale optimally with the dimension and rank. Finally a method based on the tubal rank and $t$-SVD of a tensor \cite{Aeron} has also recently been proposed, however the sample complexity does not scale optimally. As a final point of contrast to the aforementioned work, our method is also conceptually very simple - both to implement and analyze. 

In Table \ref{table:comparisons}, we provide a brief comparison of the relevant approaches, their sample complexities in both the third order and higher order settings as well as a few key features of each approach.

\small
\begin{table}[H] 
%\caption {Table Title} 
\label{table:comparisons} 
\begin{center}
%\begin{tabular}{ p{1.7cm} p{4.25cm}  p{4.25cm}  p{5cm} } 
\begin{tabular}{ccc p{4.5cm}}
\hline
 \textbf{Reference} & \textbf{Sample Complexity}  & \textbf{Sample Complexity} & \textbf{Key Features} \\ 
 &\textbf{($3^{rd}$ order)} &  \textbf{($K$th order)} & \\
 \hline \\
 \cite{tomioka} & $O(rn^2)$ & $O(rn^{K-1})$ & Tucker rank, tensor unfolding \\
 &&& \\
 \cite{squaredeal} & $O(rn^2)$ & $O(rn^{\lfloor{\frac{K}{2}}\rfloor})$ & Tucker rank, tensor unfolding \\
 &&& \\
 \cite{Oh_tensor} & $O(r^5n^{\frac{3}{2}}\log^{5}n)$ & - & Kruskal rank, alternating minimization, orthogonally decomposable tensors, symmetric setting, completion only.  \\
 &&& \\
 \cite{Aeron} & $O(rn^2\log n)$ & - & Tensor tubal rank, completion only \\
 &&& \\
 \cite{yuan} & $O(r^{\frac{1}{2}}(n \log n)^{\frac{3}{2}})$ & $O(n^{\frac{K}{2}}\text{polylog}(n))$ & Kruskal rank, Exact tensor nuclear norm minimization, computationally intractable, completion only. \\
 &&&\\
 {Our Method}
  & $O(nr)$ (random projection)  & $O(Knr)$ (random projection) & Kruskal rank, separable \\
 & $O(nr\log^2 n)$ (completion) & $O(Knr\log^2 n)$ (completion) & measurements, Leurgans' algorithm\\
 &&& \\
  \hline
 \label{table:comparisons}
\end{tabular}
 \caption{Table comparing sample complexities of various approaches.}
\end{center}
\end{table}

\normalsize

%
%
%
%ORGANIZATION OF PAPER:
%
%Things to discuss:
%\begin{itemize}
%\item different sensing mechansims and their need, e.g. compression
%\item getting optimal sample complexity - compare to other in literature
%\item Some literature search - Tomioka, Sewoong, Moitra, etc.
%\end{itemize}

The rest of the paper is organized as follows: in Section \ref{sec:prelim}, we introduce the problem setup and describe the approach and result in the most general setting. We also describe Leurgans' algorithm, an efficient linear algebraic algorithm for tensor decomposition, which our results build upon. In Section \ref{sec:mainresults} we specialize our results for both the random projections and the tensor completion cases. We extend these results and our algorithm to higher order tensors in Section \ref{sec:higher_order}. We perform experiments that validate our theoretical results in Section \ref{sec:exp}. In Section \ref{sec:conclusion}, we conclude the paper and outline future directions.
%Throughout the paper, many of the proofs are deferred to the supplementary material in the interests of space.

% before concluding our paper in Section \ref{sec:conc}.   

%%%%%%%%%%%%%%%%%%%%%%%%%%%%%%%%%%%%%%%%%%%%%%%%
%%%%%%%%%%%%%%%%%%%%%%%%%%%%%%%%%%%%%%%%%%%%%%%%
%%%%%%%%%%%%%%%%%%%%%%%%%%%%%%%%%%%%%%%%%%%%%%%%
%%%%%%%%%%%%%%%%%%%%%%%%%%%%%%%%%%%%%%%%%%%%%%%%
%%%%%%%%%%%%%%%%%%%%%%%%%%%%%%%%%%%%%%%%%%%%%%%%
%%%%%%%%%%%%%%%%%%%%%%%%%%%%%%%%%%%%%%%%%%%%%%%%
%%%%%%%%%%%%%%%%%%%%%%%%%%%%%%%%%%%%%%%%%%%%%%%%
%%%%%%%%%%%%%%%%%%%%%%%%%%%%%%%%%%%%%%%%%%%%%%%%
%%%%%%%%%%%%%%%%%%%%%%%%%%%%%%%%%%%%%%%%%%%%%%%%
%%%%%%%%%%%%%%%%%%%%%%%%%%%%%%%%%%%%%%%%%%%%%%%%

%%%%%%%%%%%%%%%%%%%%%%%%%%%%%%%%%%%%%%%%%%%%%%%%
%%%%%%%%%%%%%%%%%%%%%%%%%%%%%%%%%%%%%%%%%%%%%%%%
%%%%%%%%%%%%%%%%%%%%%%%%%%%%%%%%%%%%%%%%%%%%%%%%
%%%%%%%%%%%%%%%%%%%%%%%%%%%%%%%%%%%%%%%%%%%%%%%%
%%%%%%%%%%%%%%%%%%%%%%%%%%%%%%%%%%%%%%%%%%%%%%%%
%%%%%%%%%%%%%%%%%%%%%%%%%%%%%%%%%%%%%%%%%%%%%%%%
%%%%%%%%%%%%%%%%%%%%%%%%%%%%%%%%%%%%%%%%%%%%%%%%
%%%%%%%%%%%%%%%%%%%%%%%%%%%%%%%%%%%%%%%%%%%%%%%%
%%%%%%%%%%%%%%%%%%%%%%%%%%%%%%%%%%%%%%%%%%%%%%%%
%%%%%%%%%%%%%%%%%%%%%%%%%%%%%%%%%%%%%%%%%%%%%%%%

\section{Approach and Basic Results}
\label{sec:prelim}
\vspace{-2mm}
In this paper, vectors are denoted using lower case characters (e.g. $x, y, a, b,$ etc.), matrices by upper-case characters (e.g. $X, Y,$ etc,) and tensors by upper-case bold characters (e.g. $\vX, \vT, \vA$ etc.). Given two third order tensors $\vA, \vB$, their inner product is defined as:
\[
\langle \vA, \vB \rangle = \sum_{i, j, k}  \vA_{ijk} \vB_{ijk}.
\]
The Euclidean norm of a tensor $\vA$ is generated by this inner product, and is a straightforward extension of the matrix Frobenius norm:
\[
\| \vA \|_{F}^2:= \langle \vA, \vA \rangle. 
\]

We will work with tensors of third order (representationally to be thought of as three-way arrays), and the term mode refers to one of the axes of the tensor. A slice of a tensor refers to a two dimensional matrix generated from the tensor by varying indices along two modes while keeping the third mode fixed.
For a tensor $\vX$ we will refer to the indices of the $i^{th}$ mode-$1$ slice (i.e., the slice corresponding to the indices $\left\{i \right\} \times [n_2] \times [n_3]$) by $S_i^{(1)}$, where $[n_2] = \{1, 2, \ldots, n_2\}$ and $[n_3]$ is defined similarly. We denote the matrix corresponding to $S_i^{(1)}$ by $X^{1}_i$. Similarly the indices of the $k^{th}$ mode-$3$ slice will be denoted by $S_k^{(3)}$ and the matrix by $X^{3}_k$.

%PROPERLY DEFINE MODES, SLICES, SUBSCRIPT NOTATION ETC.
%Given two (third order) tensors $\mA, ~\ \mB$, we define the inner product between them as 
%\[
%\langle \mA, \mB \rangle = \sum_{i,j,k} \mA_{ijk} \mB_{ijk}
%\]
%A similar definition holds for higher order tensors as well. 

Given a tensor of interest $\vX$, consider its decomposition into rank one tensors
\begin{equation} \label{eq:decomp0}
\vX=\sum_{i=1}^r u_i \otimes v_i \otimes w_i,
\end{equation}
where $\left\{ u_i\right\}_{i=1, \ldots, r} \subseteq \R^{n_1}$, $\left\{ v_i\right\}_{i=1, \ldots, r} \subseteq \R^{n_2}$, and $\left\{ w_i\right\}_{i=1, \ldots, r} \subseteq \R^{n_3}$. Here $\otimes$ denotes the tensor product, so that $\vX \in \R^{n_1 \times n_2 \times n_3}$ is a tensor of order $3$ and dimension $n_1\times n_2 \times n_3$. Without loss of generality, throughout this paper we assume that $n_1 \leq n_2 \leq n_3$. We will first present our results for third order tensors, and analogous results for higher orders follow in a transparent manner. We will be dealing with \emph{low-rank} tensors, i.e. those tensors with $r \leq n_1$. Tensors can have rank larger than the dimension, indeed $r \geq n_3$ is an interesting regime, but far more challenging and will not be dealt with here.

Kruskal's Theorem \cite{Kruskal1} guarantees that tensors satisfying Assumption \ref{assump:1} below have a unique minimal decomposition into rank one terms of the form \eqref{eq:decomp0}. The minimal number of terms is called the (Kruskal) rank\footnote{The Kruskal rank is also known as the CP rank in the literature.} of the tensor $\vX$.%\begin{assumption} \label{assump:1}
%The set $\left\{ u_i\right\}_{i=1, \ldots, r} \subseteq \R^{n_1}$ is a set of linearly independent vectors, the set $\left\{ v_i\right\}_{i=1, \ldots, r} \subseteq \R^{n_2}$ is a set of linearly independent vectors and the set $\left\{ w_i\right\}_{i=1, \ldots, r} \subseteq \R^{n_3}$ is a set of \emph{pairwise} independent vectors (i.e. every pair of vectors therein is independent).
%\end{assumption}
\begin{assumption} \label{assump:1}
The sets $\left\{ u_i\right\}_{i=1, \ldots, r} \subseteq \R^{n_1}$ and $\left\{ v_i\right\}_{i=1, \ldots, r} \subseteq \R^{n_2}$ are sets of linearly independent vectors and the set $\left\{ w_i\right\}_{i=1, \ldots, r} \subseteq \R^{n_3}$ is a set of \emph{pairwise} independent vectors
\end{assumption}
While rank decomposition of tensors in the worst case is known to be computationally intractable \cite{tensorhard}, it is known that the  (mild) assumption stated in Assumption \ref{assump:1} above suffices for an algorithm known as Leurgans' algorithm \cite{unique1,Moitra_tensor} to correctly identify the factors in this unique decomposition.
In this paper, we will work with the following, somewhat stronger assumption:

\begin{assumption} \label{assump:2}
The sets $\left\{ u_i\right\}_{i=1, \ldots, r} \subseteq \R^{n_1}$, $\left\{ v_i\right\}_{i=1, \ldots, r} \subseteq \R^{n_2}$, and $\left\{ w_i\right\}_{i=1, \ldots, r} \subseteq \R^{n_3}$ are sets of linearly independent vectors. 
%We also assume that these vectors, when normalized to unit Euclidean norm are all distinct, i.e. 
%$$\frac{u_i}{\|u_i\|}\neq \frac{v_j}{\|v_j\|} \neq \frac{w_k}{\|w_k\|}$$ for all $i,j,k$.
\end{assumption}

%%%%%%%%%%%%%%%%%%%%%%%%%%%%%%%%%%%%%%%%%%
%%%%%%%%%%%%%%%%%%%%%%%%%%%%%%%%%%%%%%%%%%
%%%%%%%%%%%%%%%%%%%%%%%%%%%%%%%%%%%%%%%%%%
%%%%%%%%%%%%%%%%%%%%%%%%%%%%%%%%%%%%%%%%%%
%%%%%%%%%%%%%%%%%%%%%%%%%%%%%%%%%%%%%%%%%%
%%%%%%%%%%%%%%%%%%%%%%%%%%%%%%%%%%%%%%%%%%
%%%%%%%%%%%%%%%%%%%%%%%%%%%%%%%%%%%%%%%%%%
%%%%%%%%%%%%%%%%%%%%%%%%%%%%%%%%%%%%%%%%%%
%%%%%%%%%%%%%%%%%%%%%%%%%%%%%%%%%%%%%%%%%%
%%%%%%%%%%%%%%%%%%%%%%%%%%%%%%%%%%%%%%%%%%

%%%%%%%%%%%%%%%%%%%%%%%%%%%%%%%%%%%%%%%%%%
%%%%%%%%%%%%%%%%%%%%%%%%%%%%%%%%%%%%%%%%%%
%%%%%%%%%%%%%%%%%%%%%%%%%%%%%%%%%%%%%%%%%%
%%%%%%%%%%%%%%%%%%%%%%%%%%%%%%%%%%%%%%%%%%
%%%%%%%%%%%%%%%%%%%%%%%%%%%%%%%%%%%%%%%%%%
%%%%%%%%%%%%%%%%%%%%%%%%%%%%%%%%%%%%%%%%%%
%%%%%%%%%%%%%%%%%%%%%%%%%%%%%%%%%%%%%%%%%%
%%%%%%%%%%%%%%%%%%%%%%%%%%%%%%%%%%%%%%%%%%
%%%%%%%%%%%%%%%%%%%%%%%%%%%%%%%%%%%%%%%%%%
%%%%%%%%%%%%%%%%%%%%%%%%%%%%%%%%%%%%%%%%%%

\subsection{Separable Measurement Mechanisms} \label{sec:separable}
As indicated above in the preceding discussions, we are interested in tensor linear inverse problems where, given measurements of the form $y_i= \mathcal{L}_i\left( \vX \right) ~\ i = 1, 2, \cdots, m$,  we recover the unknown tensor $\vX$. We focus on a class of measurement mechanisms $\mathcal{L}\left( \cdot \right)$ which have a special property which we call \emph{separability}. We define the notion of separable measurements formally:
\begin{definition} \label{def:separable}
Consider a linear operator $\mathcal{L}: \R^{n_1 \times n_2 \times n_3} \rightarrow \R^n$. We say that $\mathcal{L}$ is separable with respect to the third mode if there exist $w \in \R^{n_3}$ and a linear operator $\mathcal{T}: \R^{n_1 \times n_2 } \rightarrow \R^n$, such that for every $\vX \in \R^{n_1 \times n_2 \times n_3}$: 
$$
\mathcal{L}\left( \vX \right) = \sum_{i=1}^{n_3} w_i \mathcal{T} \left( X^3_i \right).
$$
\end{definition}
This definition extends in a natural way for separability of operators with respect to the second and first modes. In words, separability means that the effect of the linear operator $\mathcal{L}\left( \cdot \right)$ on a tensor can be decomposed into the (weighted) sum of actions of a \emph{single} linear operator $\mathcal{T}\left( \cdot \right)$ acting on \emph{slices} of the tensor along a particular mode.

In several applications involving inverse problems, the design of appropriate measurement mechanisms is itself of interest. Indeed sensing methods that lend themselves to recovery from a small number of samples via efficient computational techniques has been intensely studied in the signal processing, compressed sensing, and machine learning literature \cite{Candes:2006eq,CandesTao2, recht2010guaranteed, Ben, CovSketch, Tang:2013fo}. In the context of tensors, we argue, separability of the measurement operator is a desirable property for precisely these reasons; because it lends itself to recovery up to almost optimal sample complexity via scalable computational methods (See for example \cite{kueng2014low} for rank one measurement operators in the matrix case). We now describe a few interesting measurement mechanisms that are separable.

\begin{enumerate}
\item \textbf{Separable random  projections:} Given a matrix $M \in \R^{n_1 \times n_2}$ and vector $v \in \R^{n_3}$, we define the following two notions of ``outer products'' of $M$ and $v$: 
\begin{align*}
\left[ M\otimes v \right]_{ijk}:= M_{ij}v_k \qquad \left[ v \otimes M \right]_{ijk}:= v_i M_{jk}.
\end{align*}
Hence, the $k^{th}$ mode $3$ slice of the tensor $M\otimes v $ is the matrix $v_k M$. Similarly, the $i^{th}$ mode $1$ slice of the tensor $v\otimes M $ is the matrix $v_i M$.

A typical random separable projection is of the form:
\begin{align} \label{eq:rand_proj}
\mathcal{L}\left( \vX \right) = \left[ \begin{array}{c} \langle A_1 \otimes a, \vX \rangle \\
\vdots \\
\langle A_m \otimes a, \vX \rangle
\end{array}\right]
\end{align}
where $A_i \in \R^{n_1 \times n_2}$ is a random matrix drawn from a suitable ensemble such as the Gaussian ensemble with each entry drawn independently and identically from $\mathcal{N}(0,1)$, and $a \in \R^{n_3}$ is also a random vector, for instance distributed uniformly on the unit sphere in $n_3$ dimensions (i.e. with each entry drawn independently and identically from $\mathcal{N}(0,1)$ and then suitably normalized). 

To see that such measurements are separable, note that:
\begin{align*}
\langle A_i \otimes a, \vX \rangle =  \sum_{k=1}^{n_3} a_k \langle A_i, X_k^3 \rangle, 
\end{align*}
so that the operator $\mathcal{T}\left( \cdot \right)$ from Definition \ref{def:separable} in this case is simply given by:
$$
\mathcal{T}\left( X \right) = \left[ \begin{array}{c} \langle A_1, X \rangle \\
\vdots \\
\langle A_m, X \rangle
\end{array}\right].
$$

Random projections are of basic interest in signal processing, and have played a key role in the development of sparse recovery and low rank matrix recovery literature \cite{recht2010guaranteed,CandesTao2}. From an application perspective they are relevant because they provide a method of compressive and lossless coding of ``simple signals'' such as sparse vectors \cite{CandesTao2} and low rank matrices \cite{recht2010guaranteed}. In subsequent sections we will establish that separable random projections share this desirable feature for low-rank tensors.

\item \textbf{Tensor completion:} In tensor completion, a subset of the entries of the tensor $\vX$ are revealed. 
Specifically, given a tensor $\vX $, a subset of the entries $\vX_{ijk}$ for $i,j,k \in \Omega$ are revealed for some index set $\Omega \subseteq [n_1] \times [n_2] \times [n_3]$ (we denote this by $\left( \vX\right) _{\Omega}$). Whether or not the measurements are separable depends upon the nature of the set $\Omega$.
For the $i^{th}$ mode-$1$ slice let us define 
\begin{align*}
\Omega^{(1)}_i:=\Omega \cap S^{(1)}_i \qquad m^{(1)}_i:=\left| \Omega \cap S_i^{(1)} \right|.
\end{align*}
Measurements derived from entries within a \emph{single slice} of the tensor are separable. This follows from the fact that for 
$\mathcal{L} \left( \vX \right):=\left( \vX \right)_{\Omega^{(1)}_{i}}$, we have:
$$
\mathcal{L} \left( \vX \right) = \sum_{j=1}^{n_1} \left( \delta_{i} \right)_j \mathcal{M}_{\Omega^{(1)}_{i}} \left( X^{(1)}_j\right) 
$$
where $ \delta_{i} \in \R^{n_1}$ is a vector with a one is the $i$ index and zero otherwise, and $ \mathcal{M}_{\Omega}$ is the operator that acts on a matrix $X$, extracts the indices corresponding to the index $\Omega$, and returns the resulting vector. Comparing to Definition \ref{def:separable}, we have $w=\delta_{i}$ and $\mathcal{T}=\mathcal{M}_{\Omega^{(1)}_{i}}$.
As a trivial extension, measurements obtained from parallel slices where the index set restricted to these slices is identical are also separable.

Analogous to matrix completion, tensor completion is an important problem due to its applications to machine learning; the problems of multi-task learning and contextual recommendation are both naturally modeled in this framework as described in Section \ref{sec:intro}. 

\item \textbf{Rank one projections} 
Another separable sensing mechanism of interest is via rank-one projections of the tensor of interest. Specifically, measurements of the form:
$$
\mathcal{L}\left( \vX \right) = \left[ \begin{array}{c}
\langle a_1 \otimes b_1 \otimes c, \vX \rangle \\
\vdots \\
\langle a_m \otimes b_m \otimes c, \vX \rangle
\end{array} \right]
$$
are also separable. Mechanisms of this form have recently gained interest in the context of low rank (indeed rank-one) matrices due to their appearance in the context of phase retrieval problems \cite{PhaseLift} and statistical estimation \cite{kueng2014low,cai2015}. We anticipate that studying rank one projections in the context of tensors will give rise to interesting applications in a similar spirit.

\item \textbf{Separable sketching}
The notion of covariance sketching (and more generally, matrix sketching) \cite{CovSketch} allows for the possibility of compressively acquiring a matrix $X$ via measurements $Y=AXB^T$, where $A \in \R^{m_1 \times p}$ and $B \in \R^{m_2 \times p}$, $X \in \R^p$, and $m_1, m_2 <p$. The problem of recovering $X$ from such measurements is of interest in various settings such as when one is interested in recovering a covariance matrix from compressed sample paths, and graph compression \cite{CovSketch}. In a similar spirit, we introduce the notion of separable sketching of tensors defined via: 
$$
Y_{qs}=\mathcal{L}\left(\vX\right)=\sum_{i=1}^{n_1} \sum_{j=1}^{n_2} \sum_{k=1}^{n_3}A_{qi} B_{sj} c_{k}\vX_{ijk}.
$$
In the above $A \in \R^{m_1 \times n_1}$, $B \in \R^{m_2 \times n_2}$, $c \in \R^{n_3}$, and $Y \in \R^{m_1 \times m_2}$. Note that $\mathcal{T}\left( Z\right) = AZB^T$, i.e. precisely a matrix sketch of tensor slices. The problem of recovering $\vX$ from $Y$ is thus a natural extension of matrix sketching to tensors.

Finally, we note that while a variety of separable sensing mechanisms are proposed above, many sensing mechanisms of interest are not separable. For instance, a measurement of the form $\mathcal{L}(\vX)=\langle \vA, \vX \rangle$ where $\vA$ is a full rank tensor is \emph{not} separable. Similarly, completion problems where entries of the tensor are revealed randomly and uniformly throughout the tensor (as apposed to from a single slice) are also not separable (although they may be thought of as a union of separable measurements). In Section \ref{sec:third_order}, we will provide sample complexity bounds for exact recovery for the first two aforementioned measurement mechanisms (i.e. random projections and tensor completion); the arguments extend in a natural manner to other separable sensing mechanisms.
\end{enumerate}

\subsubsection{Diversity in the  Measurement Set} \label{sec:diversity}
In order to recover the low rank tensor from a few measurements using our algorithm, we need the set of measurements to be a \emph{union} of separable measurements which satisfy the following:
\begin{enumerate}
\item \textbf{Diversity across modes:} Measurements of the form \eqref{eq:rand_proj} are separable with respect to the third mode. For the third order case, we also need an additional set of measurements separable with respect to the first mode \footnote{Any two modes suffice. In this paper we will focus on separability w.r.t the first and third modes.}. This extends naturally also to the higher order case.
\item \textbf{Diversity across separable weights:} Recalling the notion of weight vectors, $w$, from Definition: \ref{def:separable}, we require that for both modes $1$ and $3$, each mode has two distinct sets of separable measurements with distinct weight vectors.
\end{enumerate}

To make the second point more precise later, we introduce the formal notation we will use in the rest of the paper for the measurement operators:
$$
y_k^{(i)}=\mathcal{L}^{(i)}_k\left( \vX \right) =  \sum_{j=1}^{n_3} \left(w_k^{(i)}\right)_j \mathcal{T}^{(i)}_k \left( X^3_j \right)
$$

In the above, the index $i \in \left\{1, 3\right\}$ refers to the mode with respect to which that measurement is separable. For each mode, we have two distinct sets of measurements corresponding to two different weight vectors $w_k^{(i)}$, with $k \in \left\{1,2 \right\}$. For each $k$ and $i$, we may have potentially different operators $\mathcal{T}_k^{(i)}$ (though they need not be different). To simplify notation, we will subsequently assume that $\mathcal{T}_1^{(i)} = \mathcal{T}_2^{(i)} = \mathcal{T}^{(i)}$. Collectively, all these measurements will be denoted by:
$$
y = \mathcal{L}  \left( \vX \right), 
$$
where it is understood that $y$ is a concatenation of the vectors $y^{(i)}_k$ and similarly $\mathcal{L}\left( \cdot \right)$ is a concatenation of  $\mathcal{L}^{(i)}_k\left( \cdot \right)$.
We will see in the subsequent sections that when we have diverse measurements across different modes and different weight vectors, and when the $\mathcal{T}^{(i)}$ are chosen suitably, one can efficiently recover an unknown tensor from an (almost) optimal number of measurements of the form $y = \mathcal{L}\left( \vX \right)$.

\subsection{Tensor Contractions}
A basic ingredient in our approach is the notion of a tensor contraction. This notion will allow us to form a bridge between inverse problems involving tensors and inverse problems involving matrices, thereby allowing us to use matrix-based techniques to solve tensor inverse problems.

For a tensor $\vX$, we define its mode-$3$ \emph{contraction} with respect to a contraction vector $a \in \R^{n_3}$, denoted by $X^{3}_a\in \R^{n_1 \times n_2}$, as the following matrix:
\begin{equation} \label{eq:contr_def}
\left[ X^{3}_a \right]_{ij} = \sum_{k=1}^{n_3} \vX_{ijk} a_k,
\end{equation}
so that the resulting matrix is a weighted sum of the mode-$3$ slices of the tensor $\vX$.
We similarly define the mode-$1$ contraction with respect to a vector $c \in \R^{n_{1}}$ as
\begin{equation}
\left[ X^{1}_c \right]_{jk} = \sum_{k=1}^{n_1} \vX_{ijk} c_i,
\end{equation}
Note that when $a=e_k$, a standard unit vector, $X_a^3=X_k^{3}$, i.e. a tensor slice. We will primarily be interested in two notions of contraction in this paper:
\begin{itemize}
\item \emph{Random Contractions},  where $a$ is a random vector distributed uniformly on the unit sphere. These will play a role in our approach for recovery from random projections.
\item \emph{Coordinate Contractions}, where $a$ is a canonical basis vector, so that the resulting contractions is a tensor slice. These will play a role in our tensor completion approach. 
\end{itemize}

We now state a basic result concerning tensor contractions. %\textcolor{red}{Should we move these 2 Lemmas to the appendix?}

\begin{lemma} \label{lemma:contr_rank}
Let $\vX \in \R^{n_1 \times n_2 \times n_3}$, with $n_1 \leq n_2 \leq n_3$ be a tensor of rank $r \leq n_1$. Then the rank of $X^{3}_a$ is at most $r$. Similarly, if $r \leq \min \left\{ n_2, n_3 \right\}$ then the rank of $X^1_c$ is at most $r$. 
\end{lemma}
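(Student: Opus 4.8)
The plan is to invoke a rank-$r$ CP decomposition of $\vX$, substitute it into the definition of the contraction, and observe that the contraction collapses into a sum of $r$ rank-one matrices. Since $\vX$ has rank $r$, by definition we may write $\vX = \sum_{\ell=1}^r u_\ell \otimes v_\ell \otimes w_\ell$ with $u_\ell \in \R^{n_1}$, $v_\ell \in \R^{n_2}$, $w_\ell \in \R^{n_3}$ (we do not even need Assumption \ref{assump:1} or \ref{assump:2} here — mere existence of some rank-$r$ decomposition suffices).

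First I would plug this decomposition into \eqref{eq:contr_def}. Interchanging the finite sums,
\[
\left[ X^{3}_a \right]_{ij} = \sum_{k=1}^{n_3} a_k \sum_{\ell=1}^{r} (u_\ell)_i (v_\ell)_j (w_\ell)_k = \sum_{\ell=1}^{r} \Big( \sum_{k=1}^{n_3} a_k (w_\ell)_k \Big) (u_\ell)_i (v_\ell)_j = \sum_{\ell=1}^{r} \langle a, w_\ell \rangle (u_\ell)_i (v_\ell)_j .
\]
In matrix notation this reads $X^{3}_a = \sum_{\ell=1}^{r} \langle a, w_\ell \rangle\, u_\ell v_\ell^{T}$, i.e. a linear combination of the $r$ rank-one matrices $u_\ell v_\ell^{T}$. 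Since rank is subadditive and each term has rank at most one (rank exactly zero when $\langle a, w_\ell\rangle = 0$), we conclude $\operatorname{rank}(X^{3}_a) \le r$.

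The mode-$1$ statement is proved by the identical computation: substituting the decomposition into the definition of $X^1_c$ gives $X^{1}_c = \sum_{\ell=1}^{r} \langle c, u_\ell \rangle\, v_\ell w_\ell^{T}$, again a sum of at most $r$ rank-one matrices, so $\operatorname{rank}(X^1_c) \le r$; the hypothesis $r \le \min\{n_2, n_3\}$ only serves to make this bound non-vacuous, since $X^1_c \in \R^{n_2 \times n_3}$. I do not anticipate a genuine obstacle in this proof — it is a one-line consequence of multilinearity; the only point requiring any care is bookkeeping the index that is contracted versus the two free indices, and noting that the scalar weights $\langle a, w_\ell\rangle$ (resp. $\langle c, u_\ell\rangle$) factor out cleanly because the contraction acts on a single tensor mode.
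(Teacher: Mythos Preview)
Your proof is correct and follows exactly the same approach as the paper: both take a rank-$r$ decomposition $\vX = \sum_{\ell} u_\ell \otimes v_\ell \otimes w_\ell$, substitute into the contraction definition, and read off the identity $X^{3}_a = \sum_{\ell=1}^{r} \langle w_\ell, a \rangle\, u_\ell v_\ell^{T}$. You simply spell out the index manipulation that the paper leaves to the reader.
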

\begin{proof}
Consider a tensor $\vX = \sum_{i=1}^r u_i \otimes v_i \otimes w_i$.
The reader may verify in a straightforward manner that $X^{3}_a$ enjoys the decomposition:
\begin{equation} \label{eq:decomp}
X^{3}_a=\sum_{i=1}^r \langle w_i, a \rangle u_i v_i^{T}.
\end{equation}
The proof for the rank of $X^1_c$ is analogous.
\end{proof}

Note that while \eqref{eq:decomp} is a matrix decomposition of the contraction, it is not a singular value decomposition (the components need not be orthogonal, for instance). Indeed it does not seem ``canonical'' in any sense. Hence, given contractions, resolving the components is a non-trivial task. 

A particular form of degeneracy we will need to avoid is situations where $\langle w_i, a \rangle =0$ for \eqref{eq:decomp}.
It is interesting to examine this in the context of coordinate contractions, i.e. when
 $a=e_k$, we have $X^3_{e_k}=X^k_3$ (i.e. the $k^{th}$ mode $3$ slice), by Lemma \ref{lemma:contr_rank}, we see that the tensor slices are also of rank at most $r$. 
Applying $a=e_k$ in the decomposition \eqref{eq:decomp}, we see that if for some vector $w_i \in \R^{n_3}$ in the above decomposition we have that the $k^{th}$ component of $w_i$ (i.e. $(w_i)_{k}$) is zero then $\langle w_i, e_k \rangle =0$, and hence this component is missing in the decomposition of $X^{3}_k$. As a consequence the rank of $X^3_k$ drops, and in a sense information about the factors $u_k, v_k$ is ``lost'' from the contraction. We will want to avoid such situations and thus introduce the following definition:

\begin{definition} \label{def:degen}
Let $\vX= \sum_{i=1}^r u_i \otimes v_i \otimes w_i$. We say that the contraction $X_a^3$ is non-degenerate if $\langle w_i, a \rangle \neq 0$, for all $i=1, \ldots, r$.
\end{definition}
We will extend the terminology and say that the tensor $\vX$ is \emph{non-degenerate} at mode $3$ and component $k$ if the $k^{th}$ tensor slice is non-degenerate, i.e. component $k$ of the vectors $w_i$, $i=1, \ldots, r$ are all non-zero.
The above definition extends in a natural way to other modes and components. 
The non-degeneracy condition is trivially satisfied (almost surely) when:
\begin{enumerate}
\item The vector $a$ with respect to which the contraction is computed is suitably random, for instance random normal. In such situations, non-degeneracy holds almost surely.
\item When $a=e_k$ (i.e. the contraction is a slice), and the tensor factors are chosen from suitable random ensembles, e.g. when the low rank tensors are picked such that the rank one components $u_i, v_i, w_i$ are Gaussian random vectors, or random orthogonal vectors \footnote{the latter is known as random orthogonal model in the matrix completion literature \cite{Ben}}. 
\end{enumerate}

We will also need the following definition concerning the genericity of a pair of contractions: 
\begin{definition}
Given a tensor $\vX = \sum_{i=1}^r u_i \otimes v_i \otimes w_i$, a pair of contractions $X^3_a, X^3_b$ are \emph{pairwise generic} if the diagonal entries of the (diagonal)  $D_a D_b^{-1}$ are all distinct, where $D_a = \mathrm{diag} \left( \langle w_1, a \rangle, \ldots, \langle w_r, a \rangle \right)$, $D_b = \mathrm{diag} \left( \langle w_1, a \rangle, \ldots, \langle w_r, b\rangle \right)$. 
\end{definition}
%Note that since $X, Y$ are of rank $r$, the matrices $D_X, D_Y \in R^{r \times r}$ are invertible. When $X,Y$ are matrices obtained as random contractions of a low-rank tensor, they are pairwise generic.

\begin{remark}
We list two cases where pairwise genericity conditions hold in this paper.
\begin{enumerate}
\item In the context of random contractions, for instance when the contraction vectors $a, b$ are sampled uniformly and independently on the unit sphere. In this case pairwise genericity holds almost surely. 
%In this case the matrices $D_a = \text{diag} \left( \langle w_1, a \rangle, \ldots, \langle w_r, a \rangle \right)$, $D_b = \text{diag} \left( \langle w_1, a \rangle, \ldots, \langle w_r, b\rangle \right)$ are such that $D_aD_b^{-1}$ has distinct entries.
\item  In the context of tensor completion where $a=e_{k_{1}}, b=e_{k_{2}}$, the two diagonal matrices $D_a = \text{diag}\left( \left(w_1\right)_{k_1}, \ldots, \left(w_r\right)_{k_1} \right)$, and $D_b = \text{diag}\left( \left(w_1\right)_{k_2}, \ldots, \left(w_r\right)_{k_2} \right)$. Thus the pairwise genericity condition is a genericity requirement of the tensor factors themselves, namely that the ratios $ \frac{\left(w_i\right)_{k_1}}{\left(w_i\right)_{k_2}}$ all be distinct for $i=1, \ldots, r$. We will abuse terminology, and call such a tensor pairwise generic with respect to mode $3$ slices $k_1, k_2$. This form of genericity is easily seen to hold, for instance when the tensor factors are drawn from suitable random ensembles such as random normal and random uniformly distributed on the unit sphere.
\end{enumerate}
\end{remark}
The next lemma, a variation of which appears in \cite{Moitra_tensor,unique1} shows that when the underlying tensor is non-degenerate, it is possible to decompose a tensor from pairwise generic contractions.

\begin{lemma}\cite{Moitra_tensor,unique1} \label{lemma:fund_lemma}
Suppose we are given an order 3 tensor $\vX = \sum_{i = 1}^r u_i \otimes v_i \otimes w_i$ of size $n_1 \times n_2 \times n_3$ satisfying the conditions of Assumption \ref{assump:1}. Suppose the contractions $X_a^{3}$ and $X_b^3$ are non-degenerate, and consider the matrices $M_1$ and $M_2$ formed as:
\begin{align*}
M_1= X_a^3 (X_b^3)^\dagger \qquad M_2= (X_b^3)^\dagger X_a^3.
\end{align*}
Then the eigenvectors of $M_1$ (corresponding to the non-zero eigenvalues) are $\left\{ u_i \right\}_{i=1, \ldots, r}$, and the eigenvectors of $M_2^T$ are $\left\{ v_i \right\}_{i=1, \ldots, r}$.
\end{lemma}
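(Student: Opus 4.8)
The plan is to reduce everything to the explicit rank-$r$ factorization of the two contractions supplied by Lemma~\ref{lemma:contr_rank}, and then to compute $M_1$ and $M_2^T$ in closed form. Write $U = [\,u_1 \mid \cdots \mid u_r\,] \in \R^{n_1 \times r}$ and $V = [\,v_1 \mid \cdots \mid v_r\,] \in \R^{n_2 \times r}$; by Assumption~\ref{assump:1} both have full column rank $r$. Set $D_a = \mathrm{diag}(\langle w_1, a\rangle, \ldots, \langle w_r, a\rangle)$ and $D_b = \mathrm{diag}(\langle w_1, b\rangle, \ldots, \langle w_r, b\rangle)$; non-degeneracy of $X_a^3$ and $X_b^3$ says exactly that these diagonal matrices are invertible. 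Equation~\eqref{eq:decomp} then reads $X_a^3 = U D_a V^T$ and $X_b^3 = U D_b V^T$, each a genuine rank-$r$ factorization (full column rank $\times$ invertible square $\times$ full row rank).

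First I would compute $(X_b^3)^\dagger$. Because $U$ has full column rank, $V^T$ has full row rank, and $D_b$ is square and invertible, the rank-factorization formula for the Moore--Penrose inverse gives $(X_b^3)^\dagger = (V^T)^\dagger D_b^{-1} U^\dagger = V(V^TV)^{-1} D_b^{-1} (U^TU)^{-1} U^T$. Substituting and using $V^T V (V^T V)^{-1} = I_r$ and $(U^TU)^{-1}(U^TU) = I_r$, the products collapse to
\[
M_1 = U\,(D_a D_b^{-1})\,(U^T U)^{-1} U^T, \qquad M_2 = V\,(V^T V)^{-1}\,(D_b^{-1} D_a)\, V^T,
\]
so that $M_2^T = V\,(D_a D_b^{-1})\,(V^T V)^{-1} V^T$ since $D_aD_b^{-1}$ is diagonal (hence symmetric and commuting with $D_b^{-1}D_a$). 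Write $\Lambda = D_a D_b^{-1} = \mathrm{diag}(\lambda_1,\ldots,\lambda_r)$ with $\lambda_i = \langle w_i, a\rangle/\langle w_i, b\rangle$; by non-degeneracy each $\lambda_i$ is finite and non-zero.

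Next I would read off the eigenstructure. Since $(U^TU)^{-1}U^T u_j$ is the $j$-th column of $(U^TU)^{-1}(U^TU) = I_r$, i.e. $e_j$, we get $M_1 u_j = U\Lambda e_j = \lambda_j u_j$: each $u_j$ is an eigenvector of $M_1$ with non-zero eigenvalue $\lambda_j$, and the identical computation with $V$ in place of $U$ shows each $v_j$ is an eigenvector of $M_2^T$ with eigenvalue $\lambda_j$. To see these exhaust the non-zero part of the spectrum, note $\ker M_1 = \ker U^T$ has dimension $n_1 - r$ (because $\Lambda$ and $(U^TU)^{-1}$ are invertible), while the $r$ linearly independent vectors $u_1,\ldots,u_r$ span the complementary $M_1$-invariant subspace $\mathrm{col}(U)$; hence $M_1$ is diagonalizable with spectrum $\{0\}\cup\{\lambda_1,\ldots,\lambda_r\}$, and when the contractions $X_a^3, X_b^3$ are in addition pairwise generic the $\lambda_i$ are distinct, so the eigenspaces for the non-zero eigenvalues are exactly the lines $\mathrm{span}(u_i)$ — and analogously $\mathrm{span}(v_i)$ for $M_2^T$.

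The routine-but-delicate step is the pseudoinverse manipulation: one must invoke $(ABC)^\dagger = C^\dagger B^{-1} A^\dagger$ under precisely the hypotheses that validate it (full column rank $A$, invertible square $B$, full row rank $C$), so that no spurious projector terms survive the product. The other point worth stating explicitly is that ``the eigenvectors for the non-zero eigenvalues are the $u_i$'' is an identification only up to scaling and requires the $\lambda_i$ to be pairwise distinct; this is exactly the pairwise-genericity condition on $(X_a^3, X_b^3)$, which holds almost surely for random contractions and is a mild genericity condition on the factors $w_i$ for coordinate contractions, and I would flag it as the hypothesis under which the conclusion is to be read.
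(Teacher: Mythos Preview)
Your proof is correct and follows essentially the same approach as the paper's: factor the contractions as $UD_aV^T$ and $UD_bV^T$, compute the pseudoinverse via the rank-factorization formula, and read off $M_1 U = U D_aD_b^{-1}$ (the paper writes this as $M_1 = UD_aD_b^{-1}U^\dagger$). You are more explicit than the paper in justifying the pseudoinverse identity $(ABC)^\dagger = C^\dagger B^{-1}A^\dagger$ under the full-rank hypotheses, and you additionally verify that the $u_i$ exhaust the non-zero eigenspaces and flag the pairwise-genericity hypothesis needed to pin down the eigenlines uniquely --- both points the paper defers to the remark following the lemma.
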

\begin{proof}
Suppose we are given an order 3 tensor $\vX = \sum_{i = 1}^r u_i \otimes v_i \otimes w_i \in \R^{n_1 \times n_2 \times n_3}$.
From the definition of contraction \eqref{eq:contr_def}, it is straightforward to see that 
\[
X_a^3 = UD_aV^T ~\ D_a = \mbox{diag}(a^Tw_1, \ldots, a^Tw_r)
\]
\[
X_b^3 = UD_bV^T ~\ D_b = \mbox{diag}(b^Tw_1, \ldots, b^Tw_r).
\]
In the above decompositions, $U \in \R^{n_1 \times r}$, $V \in \R^{n_2 \times r}$, and the matrices $D_a, D_b\in \R^{r \times r}$ are diagonal and non-singular (since the contractions are non-degenerate).
Now, 
\begin{align}
\notag
M_1 &:= X_a^3 (X_b^3)^\dagger \\
\notag
&= UD_aV^T (V^{\dagger})^{T}D_b^{-1}U^{\dagger} \\
\label{udiag}
&= UD_aD_b^{-1}U^\dagger
\end{align}
and similarly we obtain
\begin{equation}
\label{vdiag}
M_2^T = VD_b^{-1}D_aV^\dagger.
\end{equation}
Since we have $M_1U=U D_aD_b^{-1} $ and $M_2^TV=VD_b^{-1} D_a$, it follows that the columns of $U$ and $V$ are eigenvectors of $M_1$ and $M_2^T$ respectively (with corresponding eigenvalues given by the diagonal matrices $D_aD_b^{-1}$ and $D_b^{-1}D_a$).
\end{proof}
\begin{remark}
Note that while the eigenvectors $\left\{u_i\right\}, \left\{v_j\right\}$ are thus determined, a source of ambiguity remains. For a fixed ordering of the $u_i$ one needs to determine the order in which the $v_j$ are to be arranged. This can be (generically) achieved by using the (common) eigenvalues of $M_1$ and $M_2$ for pairing.
If the contractions $X_a^3, X_b^3$ satisfy pairwise genericity, we see that the diagonal entries of the matrix $D_a D_b^{-1}$ are distinct. It then follows that the eigenvalues of $M_1, ~ M_2$ are distinct, and can be used to pair the columns of $U$ and $V$. 
\end{remark}

%For the second case we note that by the non-degeneracy assumption, the matrix 
%\begin{align*}
%D_a = \text{diag}\left( e_{k_1}^Tw_1, \ldots,  e_{k_1}^Tw_r \right) \qquad D_b = \text{diag}\left( e_{k_2}^Tw_1, \ldots,  e_{k_2}^Tw_r \right).
%\end{align*}
%Due to the non-degeneracy assumption, the diagonal entries of $D_a$ and $D_b$ are non-zero and distinct.  By a similar line of reasoning as above, it then follows that the eigenvectors of $M_1, ~ M_2^T$ corresponding to the non-zero eigenvalues are unique, and we can pair the columns of $U$ and $V$ based on their eigenvalues. 

\subsection{Leurgans' algorithm}
We now describe Leurgans' algorithm for tensor decomposition in Algorithm \ref{alg: Leurgans}. In the next section, we build on this algorithm to solve tensor inverse problems to obtain optimal sample complexity bounds. In words, Algorithm  \ref{alg: Leurgans} essentially turns a problem involving decomposition of tensors into that of decomposition of matrices. This is achieved by first computing mode $3$ contractions of the given tensor $\vX$ with respect to two non-degenerate and pairwise generic vectors $a, b$ (e.g. randomly uniformly distributed on the unit sphere). Given these contractions, one can compute matrices $M_1$ and $M_2$ as described in Lemma \ref{lemma:fund_lemma} whose eigenvectors turn out to be \emph{precisely} (up to scaling) the vectors $u_i$ and $v_i$ of the required decomposition. Finally the $w_i$ can be obtained by inverting an (overdetermined) system of linear equations, giving a unique and exact solution. 

%\textcolor{red}{GT: this part still talks about random $a$ and $b$, even though we also consider coordinates. Perhaps we should redefine non-degeneracy and just mention using non-degenerate $a$ and $b$.}

The correctness of the algorithm follows directly from Lemma \ref{lemma:fund_lemma}. 

%\textcolor{red}{ NR: this is repeated: The key observation is that when Assumption \ref{assump:1} is satisfied, the eigenvectors of $M_1$ and $M_2$ respectively are precisely the factors (upto rescaling) $u_i$ and $v_i$ of the underlying tensor. Once these are recovered, the factors $w_i$ can be correctly recovered by solving an overdetermined linear system of equations with a unique and exact solution.}
In this paper, we extend this idea to solving ill-posed linear inverse problems of tensors. The key idea is that since the contractions preserve information about the tensor factors, we focus on recovering the contractions first. Once those are recovered, we simply need to compute eigendecompositions to recover the factors themselves.
\begin{algorithm}[!ht]
   \caption{Leurgans' algorithm for tensor decomposition}
   \label{alg: Leurgans}
\begin{algorithmic}[1]
   \STATE {\bfseries Input:} Tensor $\vX$
   %\REPEAT 
   \STATE Generate contraction vectors $a, b \in \R^{n_{3}}$ (such that non-degeneracy and pairwise genericity holds).
   \STATE Compute mode $3$ contractions $X_a^3$ and $X_b^3$ respectively.  
\STATE Compute eigen-decomposition of $M_1 := X_a^{3}(X_b^3)^{\dagger}$ and $M_2 :=(X_b^3)^{\dagger}X_a$. Let $U$ and $V$ denote the matrices whose columns are the eigenvectors of $M_1$ and $M_2^T$ respectively corresponding to the non-zero eigenvalues, in sorted order. (Let $r$ be the (common) rank of $M_1$ and $M_2$.) The eigenvectors, thus arranged are denoted as $\left\{ u_i \right\}_{i=1, \ldots, r}$ and $\left\{ v_i \right\}_{i=1, \ldots, r}$. \label{step:order}

\STATE Solve for $w_i$ in the (over-determined) linear system $\vX=\sum_{i=1}^{r} u_i \otimes v_i \otimes w_i , i=1, \ldots, m$.
  \STATE {\bfseries Output:} Decomposition $\vX = \sum_{i=1}^r u_i \otimes v_i \otimes w_i$.
\end{algorithmic}
\end{algorithm}
\begin{remark}
Note that in the last step, instead of solving a linear system of equations to obtain the $w_i$, there is an alternative approach whereby one may compute mode $1$ contractions and then obtain the factors $v_i$ and $w_i$. However, there is one minor caveat.
Suppose we denote the factors obtained from the modal contractions $X_a^3$ and $X_b^3$ by $U$ and $V_1$ (we assume that these factors are normalized, i.e. the columns have unit Euclidean norm). Now, we can repeat the procedure with two more random vectors $c, d$ to compute the contractions $X_c^{1}$ and $X_d^{1}$. We can perform similar manipulations to construct matrices whose eigenvectors are the tensor factors of interest, and thence obtain
(normalized) factors $V_2$ and $W$. While $V_1$ and $V_2$ essentially correspond to the same factors, the matrices themselves may (i) have their columns in different order, and (ii) have signs reversed relative to each other. Hence, while the modal contractions preserve information about the tensor factors, they may need to be properly aligned by rearranging the columns and performing sign reversals, if necessary. 
%\textcolor{red}{This is possible by pairing the eigenvalues as we have been doing correct?}
\end{remark}

\subsection{High Level Approach}
%True solution property. \\
%Faithfulness. \\
%Meta-theorem: whenever $\mathcal{T}$ is faithful wrt to $\vX$, we can recover the tensor. \\
%Faithfulness under natural assumptions. \\

The key observation driving the methodology concerns the separability of the measurements. Given a set of separable measurements $y=\mathcal{L} \left( \vX \right)$, from the definition of separability we have:
 \begin{align*}
 y &=\mathcal{L}\left( \vX \right)  = \sum_{i=1}^{n_3} w_i \mathcal{T} \left( X^3_i \right) = \mathcal{T} \left(  \sum_{i=1}^{n_3}  w_iX^3_i \right) = \mathcal{T} \left( X^3_w \right).
 \end{align*}
In words, each separable measurement $\mathcal{L}$ acting on the tensor can also be interpreted as a measurement $\mathcal{T}$ acting on a \emph{contraction} of the tensor. Since these contractions are low rank (Lemma \ref{lemma:contr_rank}), when the underlying tensor is low-rank, the following nuclear norm minimization problem represents a principled, tractable heuristic for recovering the contraction:
\begin{align*}
{\operatorname*{minimize}}_Z\  \|Z\|_* \qquad
\text{subject to}  \qquad y= \mathcal{T} \left( Z\right).
\end{align*}

Let us informally define $\mathcal{T}$ to be ``faithful'' if nuclear norm minimization succeeds in exactly recovering the tensor contractions. Provided we correctly recover two contractions each along modes $1$ and $3$, and furthermore these contractions are non-degenerate and pairwise generic, we can apply Leurgans' algorithm to the recovered contractions to exactly recover the unknown tensor. This yields the following meta-theorem:

\begin{metatheorem} Given a low rank tensor $\vX$ and separable measurements 
$$
y_k^{(i)}=\mathcal{L}^{(i)}_k\left( \vX \right) =  \sum_{j=1}^{n_3} \left(w_k^{(i)}\right)_j \mathcal{T}^{(i)} \left( X^3_j \right), \qquad i\in \left\{1,3 \right\}, \;k \in \left\{1,2 \right\}.
$$
Suppose the $\mathcal{T}^{(i)}$ are faithful and for the vectors $w_k^{(i)}$, the contractions $X^{i}_{w_k^{(i)}}$ are non-degenerate and pairwise generic. Then the proposed approach succeeds in exactly recovering the unknown tensor.
\end{metatheorem}

In the next section, we will make the above meta-theorem more precise, and detail the precise sample complexities for the separable random projections and tensor completion settings. We will see that faithfulness, non-degeneracy and pairwise genericity hold naturally in these settings.

%\begin{proof}
%Suppose we are given an order 3 tensor $\vX = \sum_{i = 1}^r u_i \otimes v_i \otimes w_i$ of size $n_1 \times n_2 \times n_3$.
%From the definition of contraction \eqref{eq:contr_def}, it is straightforward to see that 
%\[
%X_a^3 = UD_aV^T ~\ D_a = \mbox{diag}(a^Tw_1, \ldots, a^Tw_n)
%\]
%\[
%X_b^3 = UD_bV^T ~\ D_b = \mbox{diag}(b^Tw_1, \ldots, b^Tw_n).
%\]
%In the above decompositions, $U$ is $n_1 \times r$, $V$ is $n_2 \times r$, $D_a, D_b$ are $r \times r$.
%Now, 
%\begin{align}
%\notag
%M_1 &:= X_a^3 (X_b^3)^\dagger \\
%\notag
%&= UD_aV^T (V^{\dagger})^{T}D_b^{-1}U^{\dagger} \\
%\label{udiag}
%&= UD_aD_b^{-1}U^\dagger
%\end{align}
%and similarly we obtain
%\begin{equation}
%\label{vdiag}
%M_2 = VD_b^{-1}D_aV^\dagger.
%\end{equation}
%Since we have $M_1U=U D_aD_b^{-1} $ and $M_2V=VD_b^{-1} D_a$, it follows that the columns of $U$ and $V$ are eigenvectors of $M_1$ and $M_2$ respectively (with corresponding eigenvalues given by the diagonal matrices $D_aD_b^{-1}$ and $D_b^{-1}D_a$).
%
%Now, by the randomness of the vectors $a$ and $b$, we it is easy to see that the diagonal entries of the matrix $D_a D_b^{-1}$ are distinct almost surely. It then follows that the eigendecompositions of $M_1, ~ M_2$ are unique, and we can pair the columns of $U$ and $V$ based on their eigenvalues. 
%\end{proof}

\section{Sample Complexity Results: Third Order Case} \label{sec:third_order}
\label{sec:mainresults}
\subsection{Tensor Recovery via Contractions} \label{sec:trecs}
We start by describing the main algorithm of this paper more precisely: Tensor Recovery via Contractions (T-ReCs). 
We assume that we are given separable measurements $y^{(3)}_1=\mathcal{L}^{(3)}_1\left( \vX \right)$, 
   $y^{(3)}_2=\mathcal{L}^{(3)}_2 \left( \vX \right)$, $y^{(1)}_1=\mathcal{L}^{(1)}_1\left( \vX \right)$, $y^{(1)}_2=\mathcal{L}^{(1)}_2 \left( \vX \right)$. We further assume that the measurements are separable as:
  
 \begin{equation} \label{eq:rand_proj}
 \begin{split}
 \mathcal{L}^{(3)}_1\left( \vX \right) = \sum_{i=1}^{n_3} a_i \mathcal{T}^{(3)} \left( X^3_i \right) \qquad  \mathcal{L}^{(3)}_2\left( \vX \right) = \sum_{i=1}^{n_3} b_i \mathcal{T}^{(3)} \left( X^3_i \right) \\
  \mathcal{L}^{(1)}_1\left( \vX \right) = \sum_{i=1}^{n_1} c_i \mathcal{T}^{(1)} \left( X^1_i \right) \qquad  \mathcal{L}^{(1)}_2\left( \vX \right) = \sum_{i=1}^{n_1} d_i \mathcal{T}^{(1)} \left( X^1_i \right). 
  \end{split}
 \end{equation}
 where $a,b,c,d$ and $\mathcal{T}^{(3)}$ and  $\mathcal{T}^{(1)}$ are known in advance.
 Given these measurements our algorithm will involve the solution of the following convex optimization problems.
 \begin{equation} \label{eq:opt1}
\underset{Z_1}{\text{minimize}} \qquad  \|Z_1\|_*  \qquad
\text{s.t.} \qquad  y^{(3)}_1=\mathcal{T}^{(3)} \left( Z_1 \right)
\end{equation}
\begin{equation} \label{eq:opt2}
\underset{Z_2}{\text{minimize}} \qquad  \|Z_2\|_* \qquad
\text{s.t.} \qquad  y^{(3)}_2=\mathcal{T}^{(3)} \left( Z_2 \right)
\end{equation}
\begin{equation} \label{eq:opt3}
\underset{Z_3}{\text{minimize}} \qquad  \|Z_3\|_* \qquad
\text{s.t.} \qquad  y^{(1)}_1=\mathcal{T}^{(1)} \left( Z_3 \right)
\end{equation}
\begin{equation} \label{eq:opt4}
\underset{Z_4}{\text{minimize}} \qquad  \|Z_4\|_* \qquad
\text{s.t.} \qquad  y^{(1)}_2=\mathcal{T}^{(1)} \left( Z_4 \right)
\end{equation}

Efficient computational methods have been extensively studied in recent years for solving problems of this type \cite{nuc_norm}. These matrices form the ``input matrices" in the next step which is an adaptation of Leurgans' method. In this step we form eigendecompositions to reconstruct first the pair of factors $u_i, v_i$, and then the pairs $v_i, w_i$ (the factors are normalized). Once these are recovered the last step involves solving a linear system of equations for the weights $\lambda_i$ in 
\[
\mathcal{L}^{(3)}_1(\vX) = \sum_{i=1}^r \lambda_i \mathcal{L}^{(3)}_1(u_i \otimes v_i \otimes w_i) = y_1^{(3)}
\]
%$\langle A^{(i)} \otimes a ,\vT^* \rangle = \sum_{i=1}^r \lambda_i \langle A^{(i)} \otimes a , u_i \otimes v_i \otimes w_i \rangle \; i=1, \ldots, m_1$. in the case of tensor recovery from random samples, or for $\lambda_i$ in $\vX_{\Omega}=  \sum_{i=1}^{r}  \lambda_i \left( u_i \otimes v_i \otimes w_i \right)_{\Omega} $ in the case of tensor completion. 
The pseudocode for T-ReCs is detailed in Algorithm \ref{alg:trecs}. 

% The key observation driving the algorithm concerns the separability of the measurements. Consider for example the measurements:
% \begin{align*}
% y^{(3)}_1 &=\mathcal{L}^{(3)}_1\left( \vX \right)  \\
% &= \sum_{i=1}^{n_3} a_i \mathcal{T}^{(3)} \left( X^3_i \right) \\
% &= \mathcal{T}^{(3)} \left(  \sum_{i=1}^{n_3}  a_iX^3_i \right) \\
% &= \mathcal{T}^{(3)} \left( X^3_a \right).
% \end{align*}
%In words, each separable measurement $\mathcal{L}$ acting on the tensor can also be interpreted as a measurement $\mathcal{T}$ acting on a \emph{contraction} of the tensor. Since these contractions are low rank when the underlying tensor is low-rank, nuclear norm minimization is likely to recover the contraction itself (provided the operator $\mathcal{T}$ has suitable properties). One only needs to recover a few (four) contractions to then be able to recover the whole tensor from Leurgans' algorithm.

\begin{algorithm}[!ht]
   \caption{Tensor-Recovery via Contractions \hspace{5mm}(T-ReCs)}
   \label{alg:trecs}
\begin{algorithmic}[1]
   \STATE {\bfseries Input:} Separable measurements $y^{(3)}_1=\mathcal{L}^{(3)}_1\left( \vX \right)$, 
   $y^{(3)}_2=\mathcal{L}^{(3)}_2 \left( \vX \right)$, $y^{(1)}_1=\mathcal{L}^{(1)}_1\left( \vX \right)$, $y^{(1)}_2=\mathcal{L}^{(1)}_2 \left( \vX \right)$. 
   %\REPEAT 
   \STATE Solve convex optimization problems \eqref{eq:opt1} and \eqref{eq:opt2} to obtain optimal solutions $Z_1^*$ and $Z_2^*$ respectively.  
\STATE Compute eigen-decomposition of $M_1 := Z_1^{*}(Z_2^*)^{\dagger}$ and $M_2 := (Z_2^*)^{\dagger}Z_1$. Let $U$ and $V$ denote the matrices whose columns are the eigenvectors of $M_1$ and $M_2^T$ respectively corresponding to the non-zero eigenvalues, in sorted order. (Let $r$ be the (common) rank of $M_1$ and $M_2$.) The eigenvectors, thus arranged are denoted as $\left\{ u_i \right\}_{i=1, \ldots, r}$ and $\left\{ v_i \right\}_{i=1, \ldots, r}$.

\STATE Solve convex optimization problems \eqref{eq:opt3} and \eqref{eq:opt4} to obtain optimal solutions $Z_3^*$ and $Z_4^*$ respectively.  
\STATE Compute eigen-decomposition of $M_3 := Z_3^{*}(Z_4^*)^{\dagger}$ and $M_4 := (Z_4^*)^{\dagger}Z_3$. Let $\tilde{V}$ and $\tilde{W}$ denote the matrices whose columns are the eigenvectors of $M_3$ and $M_4^T$ respectively corresponding to the non-zero eigenvalues, in sorted order. (Let $r$ be the (common) rank of $M_3$ and $M_4$.) The eigenvectors, thus arranged are denoted as $\left\{ \tilde{v}_k \right\}_{k=1, \ldots, r}$ and $\left\{ \tilde{w}_k \right\}_{k=1, \ldots, r}$.

\STATE Simultaneously reorder the columns of $\tilde{V}, \tilde{W}$, also performing simultaneous sign reversals as necessary so that the columns of $V$ and $\tilde{V}$ are equal, call the resulting matrix $W$ with columns $\left\{  w_{i} \right\}_{i=1, \ldots, r}$.

\STATE Solve for $\lambda_i$ in the (over-determined) linear system $$y_i=  \sum_{i=1}^{r} \lambda_i \mathcal{L} \left( u_i \otimes v_i \otimes w_i \right).$$
  \STATE {\bfseries Output:} Recovered tensor $\vX=\sum_{i=1}^r  \lambda_i \, u_i \otimes v_i \otimes w_i$.
\end{algorithmic}
\end{algorithm}
We now focus on the case of recovery from random Gaussian measurements, and then move on to the case of recovery from partially observed samples - in these situations not only are the measurements separable but one can also obtain provable sample complexity bounds which are almost optimal.

%%%%%%%%%%%%%%%%%%%%%%%%%%%%%%%%%%%%%%%%%%%%%%%%%%%%%
%%%%%%%%%%%%%%%%%%%%%%%%%%%%%%%%%%%%%%%%%%%%%%%%%%%%%
%%%%%%%%%%%%%%%%%%%%%%%%%%%%%%%%%%%%%%%%%%%%%%%%%%%%%
%%%%%%%%%%%%%%%%%%%%%%%%%%%%%%%%%%%%%%%%%%%%%%%%%%%%%
%%%%%%%%%%%%%%%%%%%%%%%%%%%%%%%%%%%%%%%%%%%%%%%%%%%%%
%%%%%%%%%%%%%%%%%%%%%%%%%%%%%%%%%%%%%%%%%%%%%%%%%%%%%
%%%%%%%%%%%%%%%%%%%%%%%%%%%%%%%%%%%%%%%%%%%%%%%%%%%%%
%%%%%%%%%%%%%%%%%%%%%%%%%%%%%%%%%%%%%%%%%%%%%%%%%%%%%
%%%%%%%%%%%%%%%%%%%%%%%%%%%%%%%%%%%%%%%%%%%%%%%%%%%%%
%%%%%%%%%%%%%%%%%%%%%%%%%%%%%%%%%%%%%%%%%%%%%%%%%%%%%
%%%%%%%%%%%%%%%%%%%%%%%%%%%%%%%%%%%%%%%%%%%%%%%%%%%%%
%%%%%%%%%%%%%%%%%%%%%%%%%%%%%%%%%%%%%%%%%%%%%%%%%%%%%
%%%%%%%%%%%%%%%%%%%%%%%%%%%%%%%%%%%%%%%%%%%%%%%%%%%%%
%%%%%%%%%%%%%%%%%%%%%%%%%%%%%%%%%%%%%%%%%%%%%%%%%%%%%
%%%%%%%%%%%%%%%%%%%%%%%%%%%%%%%%%%%%%%%%%%%%%%%%%%%%%

\subsection{Separable Random Projections}\label{sec:random_sensing}

Recall that from the discussion in Section \ref{sec:prelim} and the notation introduced in Section \ref{sec:diversity}, we have the following set of measurements:
\begin{align*}
\mathcal{L}_{1}^{(3)}\left( \vX \right) = \left[ \begin{array}{c}
\langle A_1 \otimes a, \vX \rangle \\
\vdots \\
\langle A_{m_1} \otimes a, \vX \rangle
\end{array} \right], \qquad
\mathcal{L}_{2}^{(3)}\left( \vX \right) = \left[ \begin{array}{c}
\langle A_{1} \otimes b, \vX \rangle \\
\vdots \\
\langle A_{m_1} \otimes b, \vX \rangle
\end{array} \right],
\end{align*}

\begin{align*}
\mathcal{L}_{1}^{(1)}\left( \vX \right) = \left[ \begin{array}{c}
\langle c \otimes B_{1}, \vX \rangle \\
\vdots \\
\langle c \otimes B_{m_2}, \vX \rangle
\end{array} \right], \qquad
\mathcal{L}_{2}^{(1)}\left( \vX \right) = \left[ \begin{array}{c}
\langle d \otimes B_{1}, \vX \rangle \\
\vdots \\
\langle d \otimes B_{m_2}, \vX \rangle
\end{array} \right].
\end{align*}

In the above, each $A_{i}, B_i \in \R^{n_2 \times n_3}$ is a random Gaussian matrix with i.i.d $\mathcal{N}(0,1)$ entries, and $a,b \in \R^{n_3}, ~\ c,d \in \R^{n_1}$ are random vectors distributed uniformly on the unit sphere. 
Finally, collecting all of the above measurements into a single operator, we have $y=\mathcal{L} \left( \vX \right)$, and the total number of samples is thus $m = 2m_1+2m_2$.

In the context of random tensor sensing, \eqref{eq:opt1}, \eqref{eq:opt2}, \eqref{eq:opt3} and \eqref{eq:opt4} reduce to solving low rank matrix recovery problems from random Gaussian measurements, where the measurements are as detailed in Section \ref{sec:separable}.

The following lemma shows that the observations $\mathcal{L}\left( \vX \right)$ can essentially be thought of as linear Gaussian measurements of the contractions $X_a^3, X_b^3, X_c^1, X_d^1$. This is crucial in reducing the tensor recovery problem to the problem of recovering the tensor contractions, instead.

\begin{lemma} \label{lemma:main_obs}
For tensor $\vX$, matrix $A$ and vector $a$ of commensurate dimensions,
\begin{equation*}
\langle A \otimes a, \vX \rangle = \langle A, X^{3}_a\rangle.
\end{equation*}
Similarly, for a vector $c$ and matrix $B$ of commensurate dimensions
\begin{equation*}
\langle c \otimes B, \vX \rangle = \langle B, X^{1}_c\rangle.
\end{equation*}
\end{lemma}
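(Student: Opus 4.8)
The plan is to prove both identities by directly unfolding the definitions of the two outer-product conventions, the tensor inner product, and the contraction, and then reordering a finite triple sum.

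First, for the mode-$3$ identity, I would recall that by definition $\left[ A \otimes a \right]_{ijk} = A_{ij} a_k$, so that the tensor inner product expands as
\[
\langle A \otimes a, \vX \rangle = \sum_{i,j,k} \left[ A \otimes a \right]_{ijk} \vX_{ijk} = \sum_{i,j,k} A_{ij}\, a_k\, \vX_{ijk}.
\]
On the other side, the definition of the mode-$3$ contraction \eqref{eq:contr_def} gives $\left[ X^{3}_a \right]_{ij} = \sum_k \vX_{ijk} a_k$, hence
\[
\langle A, X^{3}_a \rangle = \sum_{i,j} A_{ij} \left[ X^{3}_a \right]_{ij} = \sum_{i,j} A_{ij} \sum_{k} \vX_{ijk}\, a_k.
\]
Since these are finite sums, I may interchange the order of summation in the last expression to get $\sum_{i,j,k} A_{ij}\, a_k\, \vX_{ijk}$, which matches the first display; this proves the first claim.

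Second, the mode-$1$ identity is entirely analogous: using $\left[ c \otimes B \right]_{ijk} = c_i B_{jk}$ and $\left[ X^{1}_c \right]_{jk} = \sum_i \vX_{ijk} c_i$, both $\langle c \otimes B, \vX \rangle$ and $\langle B, X^{1}_c \rangle$ reduce to $\sum_{i,j,k} c_i\, B_{jk}\, \vX_{ijk}$. The only point requiring (minor) care is bookkeeping: the two outer-product notations $M \otimes v$ and $v \otimes M$ place the ``matrix indices'' in different slots, so one must match each against the corresponding contraction (mode $3$ versus mode $1$) accordingly. There is no genuine obstacle here — the statement is a bookkeeping identity, and its real content is conceptual rather than technical: it makes precise the sense in which a separable random projection of $\vX$ is an ordinary Gaussian measurement of a contraction of $\vX$, which is exactly the reduction that lets us solve the tensor problem via matrix recovery and Leurgans' algorithm.
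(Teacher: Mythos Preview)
Your proof is correct and is essentially the same as the paper's: both simply unfold the definitions and reorder a finite sum. The only cosmetic difference is that the paper groups the triple sum by mode-$3$ slices, writing $\langle A\otimes a,\vX\rangle=\sum_k a_k\langle A,X^3_k\rangle=\langle A,\sum_k a_k X^3_k\rangle=\langle A,X^3_a\rangle$, whereas you write out all three indices explicitly; the content is identical.
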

\begin{proof}
We only verify the first equality, the second equality is proved in an identical manner.
Let us denote by $\vX_k$ the $k^{th}$ mode $3$ slice of $\vX$ where $k=1, \ldots, n_3$. Then we have,
\begin{align*}
\langle A \otimes a, \vX \rangle = & \sum_{k=1}^{n_3} a_k \langle A, \vX_k \rangle = \langle A, \sum_{k=1}^{n_3} a_k \vX_k \rangle = \langle A, X^{3}_a\rangle.
\end{align*}
\end{proof}

As a consequence of the above lemma, it is easy to see that 
$$
\langle A\otimes a, \vX \rangle= \langle A, X^{3}_a\rangle =  \langle A, \sum_{i=1}^{n_3} a_iX^{3}_i\rangle = \sum_{i=1}^{n_3} a_i \langle A, X^{3}_i\rangle,
$$
thus establishing separability.

Since $X^{3}_a$ and $X^{3}_b$ are low-rank matrices, the observation operators $\mathcal{L}_k^{(3)}\left( \vX \right)$ essentially provide Gaussian random projections of $X^{3}_a$ and $X^{3}_b$, which in turn can be recovered using matrix-based techniques. The following lemma establishes ``faithfulness'' in the context of separable random projections.

\begin{lemma} \label{lemma:contr_rec}
Suppose $m_1 > 3r(n_1+n_2-r)$. Then the unique solutions to problems \eqref{eq:opt1} and \eqref{eq:opt2} are $X^{3}_a$ and $X^{3}_b$ respectively with high probability. Similarly, if $m_2 > 3r(n_2+n_3-r)$ then the unique solutions to problems \eqref{eq:opt3} and \eqref{eq:opt4} are $X^{1}_c$ and $X^{1}_d$ respectively with high probability.
\end{lemma}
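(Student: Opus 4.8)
The plan is to reduce each of the four nuclear-norm programs to a standard low-rank matrix recovery problem from i.i.d.\ Gaussian measurements, and then quote the known exact-recovery guarantee for that problem. Fix the contraction vector $a$ and recall from Lemma~\ref{lemma:main_obs} that the $i$-th coordinate of $\mathcal{L}^{(3)}_1(\vX)$ equals $\langle A_i, X^{3}_a\rangle$. Hence $y^{(3)}_1 = \mathcal{A}(X^{3}_a)$, where $\mathcal{A}\colon \R^{n_1\times n_2}\to\R^{m_1}$ is the linear operator $\mathcal{A}(Z) = (\langle A_1, Z\rangle,\ldots,\langle A_{m_1}, Z\rangle)$. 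Since the $A_i$ are drawn independently of $a$, conditioning on $a$ leaves $\mathcal{A}$ a genuine operator with i.i.d.\ standard Gaussian matrix entries, and the target $X^3_a$ is then a fixed matrix which, by Lemma~\ref{lemma:contr_rank}, has rank at most $r$. Thus \eqref{eq:opt1} is precisely nuclear-norm minimization for recovering a rank-$\le r$ matrix in $\R^{n_1\times n_2}$ from $m_1$ Gaussian measurements. The identical reduction applies to \eqref{eq:opt2} with target $X^3_b$, to \eqref{eq:opt3} with target $X^1_c\in\R^{n_2\times n_3}$, and to \eqref{eq:opt4} with target $X^1_d\in\R^{n_2\times n_3}$.

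Next I would invoke the sharp exact-recovery theorem for the Gaussian low-rank matrix recovery model (Recht--Fazel--Parrilo \cite{recht2010guaranteed}, Chandrasekaran--Recht--Parrilo--Willsky, Amelunxen--Lotz--McCoy--Tropp): for a fixed $M\in\R^{d_1\times d_2}$ of rank $\le r$, $M$ is the unique minimizer of $\min_Z\|Z\|_*$ subject to $\mathcal{A}(Z)=\mathcal{A}(M)$ with probability at least $1-e^{-c m}$ whenever $m$ exceeds the squared Gaussian width $w^2$ of the intersection of the descent cone of $\|\cdot\|_*$ at $M$ with the unit sphere. The standard estimate $w^2 \le 3r(d_1+d_2-r)$, combined with Gordon's escape-through-the-mesh inequality (which guarantees $\mathrm{null}(\mathcal{A})$ meets the descent cone only at $0$), gives uniqueness once $m > 3r(d_1+d_2-r)$, the small slack over this quantity being absorbed into the ``with high probability'' qualifier. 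Taking $(d_1,d_2)=(n_1,n_2)$ with $m=m_1$ for the mode-$3$ contractions and $(d_1,d_2)=(n_2,n_3)$ with $m=m_2$ for the mode-$1$ contractions yields exactly the stated thresholds.

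Finally I would conclude by a union bound over the (at most four) failure events, each of probability exponentially small in $m_1$ or $m_2$; note that although the same matrices $A_i$ (respectively $B_i$) reappear across the pair \eqref{eq:opt1}--\eqref{eq:opt2} (respectively \eqref{eq:opt3}--\eqref{eq:opt4}), each program is analyzed on its own, so the dependence within a pair is immaterial. I do not expect a genuine obstacle here: the reduction to matrix recovery via Lemma~\ref{lemma:main_obs} is exact and the i.i.d.\ Gaussian structure of $\mathcal{A}$ is inherited directly from that of the $A_i, B_i$. The only point requiring care is the passage from ``the descent cone avoids the nullspace'' to ``the optimum is unique,'' which is standard but should be stated explicitly, and it is precisely there that the constant $3$ enters, through the width bound for the nuclear-norm descent cone at a rank-$r$ matrix.
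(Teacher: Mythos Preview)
Your proposal is correct and follows essentially the same route as the paper: reduce via Lemma~\ref{lemma:main_obs} and Lemma~\ref{lemma:contr_rank} to a standard low-rank matrix recovery problem under i.i.d.\ Gaussian measurements, then invoke the Gaussian-width/descent-cone guarantee (the paper simply cites Proposition~3.11 of \cite{venkat}, which is the Chandrasekaran--Recht--Parrilo--Willsky result you name). Your write-up is in fact more careful than the paper's, explicitly noting the conditioning on $a$ that makes the $A_i$ independent of the target, and spelling out where the constant $3$ enters; the paper's proof is a two-sentence citation.
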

%\textcolor{red}{GT: needs to be rephrased if the comment below equation (2) is addressed. \\
%Pari: Making these changes seems to make the notation more complex. So instead, I just made a comment earlier saying that the $\mathcal{T}^{(i)}$ need not be the same. Do you nevertheless want go ahead with this change?}

\begin{proof}
Again, we only prove the first part of the claim, the second follows in an identical manner.
Note that by Lemma \ref{lemma:main_obs} and Lemma \ref{lemma:contr_rank}, $X^{3}_a$ and $X^{3}_b$ are feasible rank $r$ solutions to \eqref{eq:opt1} and \eqref{eq:opt2} respectively. By Proposition 3.11 of \cite{venkat}, we have that the nuclear norm heuristic succeeds in recovering rank $r$ matrices from $m_1 >3r(n_1+n_2-r)$ with high probability.
\end{proof}
\begin{remark}
In this sub-section, we will refer to events which occur with probability exceeding $1-\exp(-C_0 n_1)$ as events that occur ``with high probability" (w.h.p.). We will transparently be able to take appropriate union bounds of high probability events since the number of events being considered is small enough that the union event also holds w.h.p. (thus affecting only the constants involved). Hence, in the subsequent results, we will not need to refer to the precise probabilities.
\end{remark}

Since the contractions $X^{3}_a$ and $X^{3}_b$ of the tensor $\vX$ are successfully recovered and the tensor satisfies Assumption \ref{assump:1}, the second stage of Leurgans' algorithm can be used to recover the factors $u_i$ and $v_i$. Similarly, from $X^{1}_c$ and $X^{1}_d$, the factors $v_i$ and $w_i$ can be recovered.  The above sequence of observations leads to the following sample complexity bound for low rank tensor recovery from random measurements:

\begin{theorem} \label{thm:exact_rec_sense}
Let $\vX \in \R^{n_1\times n_2 \times n_3}$ be an unknown tensor of interest with rank $r \leq \min\left\{n_1,n_2, n_3\right\}$. Suppose we obtain samples as described by \eqref{eq:rand_proj}. Suppose $m_1 > 3r(n_1+n_2-r)$ and $m_2 > 3r(n_2+n_3-r)$. Then T-ReCs (Algorithm \ref{alg:trecs}) succeeds in exactly recovering $\vX$ and its low rank decomposition \eqref{eq:decomp} with high probability.
\end{theorem}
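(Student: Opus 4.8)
The plan is to chain together the structural results already in hand, using the randomness of the Gaussian ensembles and of the contraction vectors $a,b,c,d$ to discharge each hypothesis in turn. First I would invoke Lemma~\ref{lemma:contr_rec}: since $m_1 > 3r(n_1+n_2-r)$ and $m_2 > 3r(n_2+n_3-r)$, the four nuclear norm programs \eqref{eq:opt1}--\eqref{eq:opt4} recover the contractions exactly, so that $Z_1^* = X^3_a$, $Z_2^* = X^3_b$, $Z_3^* = X^1_c$, $Z_4^* = X^1_d$ with high probability. This is the only place where the sample complexity is used; everything afterwards is deterministic conditioned on this event together with the almost-sure genericity of $a,b,c,d$. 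Note also that $r \le \min\{n_1,n_2,n_3\} = n_1$, so Lemma~\ref{lemma:contr_rank} indeed gives $\mathrm{rank}(X^3_a),\mathrm{rank}(X^3_b) \le r$ and $\mathrm{rank}(X^1_c),\mathrm{rank}(X^1_d)\le r$, which is what makes the rank-$r$ contractions feasible and the nuclear norm recovery meaningful.

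Next I would verify that the recovered contractions meet the hypotheses of Lemma~\ref{lemma:fund_lemma}. Writing $X^3_a = U D_a V^T$, $X^3_b = U D_b V^T$ with $D_a = \mathrm{diag}(\langle w_i,a\rangle)$, $D_b = \mathrm{diag}(\langle w_i,b\rangle)$ via \eqref{eq:decomp}, and using that each $w_i \neq 0$ together with $a,b$ drawn uniformly from the sphere, we get $\langle w_i,a\rangle \neq 0$ and $\langle w_i,b\rangle \neq 0$ for all $i$ almost surely (non-degeneracy), and the ratios $\langle w_i,a\rangle/\langle w_i,b\rangle$ pairwise distinct almost surely (pairwise genericity); both are complements of finitely many measure-zero conditions. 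Lemma~\ref{lemma:fund_lemma} then gives that the eigenvectors of $M_1 = Z_1^*(Z_2^*)^\dagger$ with nonzero eigenvalues are the $u_i$ (up to scale) and those of $M_2^T$ are the $v_i$, with distinct common eigenvalues so that the pairing is unambiguous. The identical argument applied to $X^1_c,X^1_d$ — here one needs the $w_i$ linearly independent as well, which is exactly Assumption~\ref{assump:2}, and $\langle u_i,c\rangle,\langle u_i,d\rangle \neq 0$ with distinct ratios, which again holds almost surely — recovers the normalized factors $\tilde v_i,\tilde w_i$ with an unambiguous pairing.

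Then I would handle the cross-mode alignment. The columns of $V$ (from the mode-$3$ stage) and of $\tilde V$ (from the mode-$1$ stage) are both the normalized directions $v_i/\|v_i\|$ up to a permutation and a per-column sign; since Assumption~\ref{assump:2} makes these directions pairwise distinct, matching columns of $\tilde V$ to those of $V$ determines the permutation, and applying the same permutation and the recorded sign flips to $\tilde W$ produces $W$ whose columns $w_i$ are consistently paired with $u_i$ and $v_i$. At this point the rank-one tensors $u_i \otimes v_i \otimes w_i$ span a subspace containing $\vX$, so $\vX = \sum_{i=1}^r \lambda_i^\star\, u_i \otimes v_i \otimes w_i$ for some scalars $\lambda_i^\star$ absorbing the scalings and signs. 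Finally, the last step recovers these scalars: the map $\lambda \mapsto \sum_i \lambda_i\, u_i \otimes v_i \otimes w_i$ is injective because $\{u_i\}$ and $\{v_i\}$ are linearly independent, and since $\mathcal{L}^{(3)}_1(u_i \otimes v_i \otimes w_i) = \langle w_i,a\rangle\,(\langle A_j, u_i v_i^T\rangle)_{j=1}^{m_1}$ with the matrices $u_i v_i^T$ linearly independent, $m_1 \ge r$, and the $A_j$ Gaussian, the resulting $r$ vectors are linearly independent almost surely; hence $\lambda = \lambda^\star$ is the unique solution and T-ReCs outputs $\vX$ exactly. A union bound over the (boundedly many) high-probability and almost-sure events, as in the remark after Lemma~\ref{lemma:contr_rec}, finishes the argument.

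In truth there is no deep obstacle here: the only substantive ingredient is the exact matrix recovery of Lemma~\ref{lemma:contr_rec}, and the rest is a careful composition of Lemmas~\ref{lemma:contr_rank} and \ref{lemma:fund_lemma}. The one place that demands attention — and where I would be most careful in writing the details — is ensuring that all the genericity events (non-degeneracy and pairwise genericity for both modes, the distinctness needed for eigenvalue-based pairing, the distinctness of the $v_i$ directions needed for cross-mode alignment, and the almost-sure full rank of the final linear system) are genuinely generic and can be intersected with the recovery event of Lemma~\ref{lemma:contr_rec} without loss.
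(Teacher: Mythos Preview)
Your proposal is correct and follows essentially the same route as the paper's proof: invoke Lemma~\ref{lemma:contr_rec} to recover the four contractions, use almost-sure non-degeneracy and pairwise genericity of the random $a,b,c,d$ so that Lemma~\ref{lemma:fund_lemma} applies in both modes, align the $v_i$ across modes, and solve the final over-determined linear system for the $\lambda_i$, with a union bound over the finitely many high-probability events. If anything, you are more careful than the paper in two places --- the cross-mode alignment and the almost-sure full column rank of the final system via the Gaussianity of the $A_j$ --- whereas the paper simply asserts these steps go through.
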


%\textcolor{red}{GT: needs to be rephrased if the comment below equation (2) is addressed.}

\begin{proof}
By Lemma \ref{lemma:contr_rank} $X^{3}_a$, $X^{3}_b$, $X^{1}_c$, $X^1_d$ are all rank at most $r$. By Lemma \ref{lemma:main_obs}, the tensor observations $y^{(3)}_1, y^{(3)}_2, y^{(1)}_1, y^{(1)}_2$ provide linear Gaussian measurements of $X^{3}_a$, $X^{3}_b$, $X^{1}_c$, $X^1_d$. By Lemma \ref{lemma:contr_rec}, the convex problems \eqref{eq:opt1}, \eqref{eq:opt2}, \eqref{eq:opt3}, \eqref{eq:opt4} correctly recover the modal contractions $X^{3}_a$, $X^{3}_b$, $X^{1}_c$, $X^1_d$. Since the vectors $a, b, c, d$ are chosen to be randomly uniformly distributed on the unit sphere, the contractions $X^3_a, X^3_b$ are non-degenerate and pairwise generic almost surely (and similarly $X^1_c, X^1_d$). Thus, Lemma \ref{lemma:fund_lemma} applies and $X^{3}_a$, $X^{3}_b$ can be used to correctly recover the factors $u_i, v_i, i=1, \ldots, r$. Again by Lemma \ref{lemma:fund_lemma} $X^{1}_c$, $X^{1}_d$ can be used to correctly recover the factors $v_i, w_i, i=1, \ldots, r$. Note that due to the linear independence of the factors, the  linear system of equations involving $\lambda_i$ is full column rank, over-determined, and has an exact solution. The fact that the result holds with high probability follows because one simply needs to take the union bounds of the probabilities of failure exact recovery of the contractions via the solution of  \eqref{eq:opt1}, \eqref{eq:opt2}, \eqref{eq:opt3}, \eqref{eq:opt4}.
\end{proof}
%\vspace{-3mm}
\begin{remarks} \hspace{10mm}
\begin{enumerate}
\item Theorem \ref{thm:exact_rec_sense} yields bounds that are order optimal. Indeed, consider the number of samples $m = 2m_1 + 2m_2 \sim O(r(n_1 + n_2 + n_3))$, which by a counting argument is the same as the number of parameters in an order 3 tensor of rank $r$. 

\item For symmetric tensors with symmetric factorizations of the form $\vX=\sum_{l=1}^{3} \lambda_i v_i \otimes v_i \otimes v_i$, this method becomes particularly simple. Steps $4, 5, 6$ in Algorithm \ref{alg:trecs} become unnecessary, and the factors are revealed directly in step $3$. One then only needs to solve the linear system described in step $7$ to recover the scale factors. The sample complexity remains $O(nr)$, nevertheless.

\item Note that for the method we propose, the most computationally expensive step is that of solving low-rank matrix recovery problems where the matrix is of size $n_i \times n_j$ for $i,j = 1,2,3$. Fast algorithms with rigorous guarantees exist for solving such problems, and we can use any of these pre-existing methods. An important point to note is that, other methods for minimizing the Tucker rank of a tensor by considering ``matricized" tensors solve matrix recovery problems for matrices of size $n_i \times n_j n_k$, which can be far more expensive. 

\item Note that the sensing operators $\langle A_i\otimes a , \cdot \rangle$ may seem non-standard (vis-a-vis the compressed sensing literature such as \cite{recht2010guaranteed}), but are very storage efficient. Indeed, one needs to only store random matrices $A_i, B_i$ and random vectors $a,b$. Storing each of these operators requires $O(n_1 n_2 + n_3)$ space, and is far more storage efficient than (perhaps the more suggestive) sensing operators of the form $\langle \vA_i , \cdot \rangle$, with each $\vA_i$ being a random tensor requiring $O(n_1 n_2 n_3)$ space. Similar ``low rank" sensing operators have been used for matrix recovery \cite{kueng2014low, jain2013provable}.

\item While the results here are presented in the case where the $A_i, B_i$ are random Gaussian matrices and the $a,b$ are uniformly distributed on the sphere, the results are not truly dependent on these distributions. The $A_i, B_i$ need to be structured so that they enable low-rank matrix recovery (i.e., they need to be ``faithful''). Hence, for instance it would suffice if the entries of these matrices were sub-Gaussian, or had appropriate restricted isometry properties with respect to low rank matrices \cite{recht2010guaranteed}.  
\end{enumerate}
\end{remarks}

%%%%%%%%%%%%%%%%%%%%%%%%%%%%%%%%%%%%%%%%%%%%%%%%
%%%%%%%%%%%%%%%%%%%%%%%%%%%%%%%%%%%%%%%%%%%%%%%%
%%%%%%%%%%%%%%%%%%%%%%%%%%%%%%%%%%%%%%%%%%%%%%%%
%%%%%%%%%%%%%%%%%%%%%%%%%%%%%%%%%%%%%%%%%%%%%%%%
%%%%%%%%%%%%%%%%%%%%%%%%%%%%%%%%%%%%%%%%%%%%%%%%
%%%%%%%%%%%%%%%%%%%%%%%%%%%%%%%%%%%%%%%%%%%%%%%%
%%%%%%%%%%%%%%%%%%%%%%%%%%%%%%%%%%%%%%%%%%%%%%%%
%%%%%%%%%%%%%%%%%%%%%%%%%%%%%%%%%%%%%%%%%%%%%%%%
%%%%%%%%%%%%%%%%%%%%%%%%%%%%%%%%%%%%%%%%%%%%%%%%
%%%%%%%%%%%%%%%%%%%%%%%%%%%%%%%%%%%%%%%%%%%%%%%%
\subsection{Tensor Completion} \label{sec:third_order_completion}

In the context of tensor completion, for a fixed (but unknown) $\vX $, a subset of the entries $\vX_\Omega$ are revealed for some index set $\Omega \subseteq [n_1] \times [n_2] \times [n_3]$. We assumed that the measurements thus revealed are in a union of four slices. For the $i^{th}$ mode-$1$ slice let us define 
\begin{align*}
\Omega^{(1)}_i:=\Omega \cap S^{(1)}_i \qquad m^{(1)}_i:=\left| \Omega \cap S_i^{(1)} \right|.
\end{align*}
These are precisely the set of entries revealed in the $i^{th}$ mode-$1$ slice and the corresponding cardinality.
Similarly for the $k^{th}$ mode-$3$ slice we define
\begin{equation} \label{eq:comp_meas1}
\Omega^{(3)}_k=\Omega \cap S^{(3)}_k \qquad m^{(3)}_k:=\left| \Omega \cap  S_k^{(3)} \right|.
\end{equation}
We will require the existence of two distinct mode-$1$ slices (say $i^*_1$ and $i^*_2$) from which measurements are obtained. Indeed, 
%$$\mathcal{L}^{(1)}\left( \vX \right) = 
%\left[ \begin{array}{c}
%\mathcal{L}^{(1)}_1 \left( \vX \right) \\
%\mathcal{L}^{(1)}_2 \left( \vX \right)
%\end{array} \right],
% $$
% where
 \begin{equation} \label{eq:comp_meas2}
\mathcal{L}^{(1)}_1 \left( \vX \right):=\left( \vX \right)_{\Omega^{(1)}_{i_1^*}} \qquad  \mathcal{L}^{(1)}_2 \left( \vX \right):=\left( \vX \right)_{\Omega^{(1)}_{i_2^*}}.
 \end{equation}
Similarly we will also require the existence of two different slices in mode $3$ \footnote{We choose modes 1 and 3 arbitrarily. Any two of the 3 modes suffice. } (say $k^*_1$ and $k^*_2$) from which we have measurements:
%$$\mathcal{L}^{(1)}\left( \vX \right) = 
%\left[ \begin{array}{c}
%\mathcal{L}^{(3)}_1 \left( \vX \right) \\
%\mathcal{L}^{(3)}_2 \left( \vX \right)
%\end{array} \right],
% $$
% where
 $$
\mathcal{L}^{(3)}_1 \left( \vX \right):=\left( \vX \right)_{\Omega^{(3)}_{k_1^*}} \qquad  \mathcal{L}^{(3)}_2 \left( \vX \right):=\left( \vX \right)_{\Omega^{(3)}_{k_2^*}}.
 $$

We will require the cardinalities of the measurements from mode $1$, $m^{(1)}_{i_{1}^{*}}$ and $m^{(1)}_{i_{2}^{*}}$ and from mode $3$, $m^{(3)}_{k_{1}^{*}}$ and $m^{(3)}_{k_{2}^{*}}$ to be sufficiently large so that they are faithful (to be made precise subsequently), and this will determine the sample complexity. The key aspect of the algorithm is that it \emph{only makes use of the samples in these four distinct slices}. No other samples outside these four slices need be revealed at all (so that all the other $m^{(1)}_i$ and $m^{(3)}_k$ can be zero). The indices sampled from each slice are drawn uniformly and randomly without replacement. Note that for a specified $m^{(1)}_{i_{1}^{*}}$, $m^{(1)}_{i_{2}^{*}}$, $m^{(3)}_{k_{1}^{*}}$ and $m^{(3)}_{k_{2}^{*}}$ the overall sample complexity implied is  $m^{(1)}_{i_{1}^{*}} + m^{(1)}_{i_{2}^{*}} + m^{(3)}_{k_{1}^{*}} + m^{(3)}_{k_{2}^{*}}$.

In the context of tensor completion, \eqref{eq:opt1}, \eqref{eq:opt2}, \eqref{eq:opt3} and \eqref{eq:opt4} reduce to  solving low rank matrix completion problems for  the slices $S_{i_{1}^{*}}^{(1)}$, $S_{i_{2}^{*}}^{(1)}$, $S_{k_{1}^{*}}^{(3)}, S_{k_{2}^{*}}^{(3)}$.  Contraction recovery in this context amounts to obtaining complete slices, which can then be used as inputs to Leurgans' algorithm. 
%A variety of matrix completion techniques can be used for this purpose, but we will use the one advocated in \cite{Ben}, which employs the nuclear norm minimization heuristic. A key innovation in this paper is to show that from only this limited information the entire underlying tensor can be reconstructed. 
There are a few important differences however, when compared to the case of recovery from Gaussian random projections. For the matrix completion sub-steps to succeed, we need the following standard incoherence assumptions from the matrix completion literature \cite{Ben}.

Let $\mathcal{U}, \mathcal{V}$ and $\mathcal{W}$ represent the linear spans of the vectors $\left\{ u_i \right\}_{=1, \ldots, r}, \left\{ v_i \right\}_{=1, \ldots, r}, \left\{ w_i \right\}_{=1, \ldots, r}$. Let $P_{\mathcal{U}}$, $P_{\mathcal{V}}$ and $P_{\mathcal{W}}$ respectively represent the projection operators corresponding to $\mathcal{U}, \mathcal{V}$ and $\mathcal{W}$. The coherence of the subspace $\mathcal{U}$ (similarly for $\mathcal{V}$ and $\mathcal{W}$) is defined as:
$$
\mu(\mathcal{U}):=\frac{n_1}{r} \max_{i=1, \ldots, n_1} \| P_{\mathcal{U}}\left( e_i \right) \|^2,
$$
where $\{e_i\}$ are the canonical basis vectors.
\begin{assumption}[Incoherence] \label{assump:3}
 $\mu_0:= \max\left\{ \mu(\mathcal{U}), \mu(\mathcal{V}), \mu(\mathcal{W})  \right\}$ is a positive constant independent of the rank and the dimensions of the tensor.
\end{assumption}
Such an incoherence condition is required in order to be able to complete the matrix slices from the observed data \cite{Ben}. We will see subsequently that when the tensor is of rank $r$, so are the different slices of the tensor and each slice will have a ``thin'' singular value decomposition. Furthermore, the incoherence assumption will also hold for these slices. 
\begin{definition} \label{def:1}
 Let $X_i^{1} = U \Sigma V^T$ be the singular value decomposition of the tensor slice $X_i^{1}$. We say that the tensor $\vX$ satisfies the \emph{slice condition} for slice $S^{(1)}_i$ with constant $\mu^{(1)}_i$ if the element-wise infinity (max) norm $$\|UV^T\|_{\infty} \leq \mu^{(1)}_i \sqrt{\frac{r}{n_2n_3}}.$$
\end{definition} 
The slice condition is analogously defined for the slices along other modes, i.e. $S^{(2)}_j$ and $S^{(3)}_k$. We will denote by $\mu^{(2)}_j$ and $\mu^{(3)}_k$ the corresponding slice constants. We will require our distinct slices  from which samples are obtained to satisfy these slice conditions. 
\begin{remark}
The slice conditions are standard in the matrix completion literature, see for instance \cite{Ben}.
 As pointed out in \cite{Ben}, the slice conditions are not much more restrictive than the incoherence condition, because if the incoherence condition is satisfied with constant $\mu_0$ then (by a simple application of the Cauchy-Schwartz inequality) the slice condition for $S_{i}^{(1)}$ is also satisfied with constant $\mu_1(i) \leq \mu_0 \sqrt{r}$ for all $i$ (and similarly for $\mu^{(2)}_j$ and $\mu^{(3)}_k$). Hence, the slice conditions can be done away with, and using this weaker bound only increases the sample complexity bound for exact reconstruction by a multiplicative factor of $r$.
\end{remark}

\begin{remark}
Note that the incoherence assumption and the slice condition are known to be satisfied for suitable random ensembles of models, such as the random orthogonal model, and models where the singular vectors are bounded element-wise \cite{Ben}.
\end{remark}

The decomposition \eqref{eq:decomp} ties factor information about the tensor to factor information of contractions. A direct corollary of Lemma \ref{lemma:contr_rank} is that contraction matrices are incoherent whenever the tensor is incoherent:
\begin{corollary} \label{cor:incoherence}
If the tensor satisfies the incoherence assumption, then so do the contractions. Specifically all the tensor slices satisfy incoherence.
\end{corollary}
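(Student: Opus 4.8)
The plan is to derive the incoherence of each contraction directly from its rank-one decomposition \eqref{eq:decomp}, namely $X_a^3 = \sum_{i=1}^r \langle w_i, a\rangle\, u_i v_i^T = U D_a V^T$, and the fact that incoherence of a matrix is a property of its column and row spaces only. First I would observe that since $\{u_i\}$ are linearly independent (Assumption \ref{assump:1}), the column space of $X_a^3$ is exactly $\mathcal{U}$ whenever $X_a^3$ is non-degenerate (so $D_a$ is invertible), and similarly its row space is exactly $\mathcal{V}$; for a coordinate contraction $a = e_k$, non-degeneracy at mode $3$ component $k$ gives the same conclusion for the slice $X_k^3$. Thus the coherence of the column space of $X_a^3$ is $\mu(\mathcal{U})$ and the coherence of its row space is $\mu(\mathcal{V})$, both bounded by $\mu_0$ under Assumption \ref{assump:3}. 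Since coherence only depends on the left and right singular subspaces of a matrix, not on the particular basis $U, V$, this already shows the contraction inherits incoherence with the same constant $\mu_0$.

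Next I would spell out the statement ``all the tensor slices satisfy incoherence'' as the special case $a = e_k$: each mode-$3$ slice $X_k^3$ has column space contained in $\mathcal{U}$ and row space contained in $\mathcal{V}$ (with equality under the relevant non-degeneracy), so $\mu(\text{colspace}(X_k^3)) \le \mu_0$ and $\mu(\text{rowspace}(X_k^3)) \le \mu_0$; the analogous statement holds for mode-$1$ and mode-$2$ slices. It would also be worth noting, to connect with Definition \ref{def:1}, that the slice condition constant $\mu_i^{(1)}$ can likewise be bounded: writing the thin SVD $X_i^1 = U'\Sigma V'^T$, the column/row spaces of $U', V'$ are subspaces of $\mathcal{V}, \mathcal{W}$ respectively, so $\|U'V'^T\|_\infty$ is controlled by the coherence of those subspaces, giving $\mu_i^{(1)} \lesssim \mu_0\sqrt{r}$ as already remarked after Definition \ref{def:1}.

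The only mildly delicate point — and the place I would be most careful — is the handling of degeneracy: if $\langle w_i, a\rangle = 0$ for some $i$, then the column space of $X_a^3$ is a proper subspace of $\mathcal{U}$, and one must check that coherence of a subspace of $\mathcal{U}$ is still bounded in terms of $\mu(\mathcal{U})$ and the dimensions. This is true but requires a small argument (a subspace of a $\mu$-incoherent space of dimension $r$ is $\mu'$-incoherent with $\mu' \le \mu r/r'$ where $r'$ is the smaller dimension, or one simply works with the enclosing space $\mathcal{U}$ of dimension $r$ throughout); in our setting we invoke the non-degeneracy conditions precisely to rule this out, so for the stated corollary it suffices to assume non-degeneracy, which is part of the running hypotheses (see Definition \ref{def:degen} and the surrounding discussion), and then equality of the spaces holds and the bound is exactly $\mu_0$. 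I expect no real obstacle here — the corollary is essentially a restatement of ``coherence depends only on the singular subspaces'' combined with \eqref{eq:decomp} — so the proof should be short, with the degeneracy bookkeeping being the one thing to state cleanly.
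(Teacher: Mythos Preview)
Your proposal is correct and follows essentially the same approach as the paper: identify the column and row spaces of a contraction (via the decomposition \eqref{eq:decomp}) with $\mathcal{U}$ and $\mathcal{V}$, and note that incoherence is a property of these subspaces alone. The paper's proof is a terse three lines that simply asserts the column/row spaces are ``precisely'' $\mathcal{U}$ and $\mathcal{V}$; your version is more careful in flagging the non-degeneracy hypothesis needed for that equality, which the paper glosses over.
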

\begin{proof}
Consider for instance the slices  $X_k^3$ for $k=1, \ldots, n_3$.
By Lemma \ref{lemma:contr_rank}, the row and column-spaces of each slice are precisely $\mathcal{U}$ and $\mathcal{V}$ respectively, thus the incoherence assumption also holds for the slices.
\end{proof}

We now detail our result for the tensor completion problem:

\begin{lemma} \label{lemma:contr_rec}
Given a tensor $\vX$ with rank $r \leq n_1$ which satisfies the following:
\begin{itemize} 
\item Assumptions \ref{assump:2} and \ref{assump:3}, 
\item  The samples are obtained as described in \eqref{eq:comp_meas1}, \eqref{eq:comp_meas2}.
\item Suppose the number of samples from each slice satisfy:
\begin{align*}
\Ma \geq 32 \max \left\{ \mu_0, \left( \mu_{i_1^{*}}^{(1)} \right)^2 \right\} r(n_2+n_3) \log^{2}n_3 \\
\Mb \geq  32 \max \left\{ \mu_0, \left( \mu_{i_2^{*}}^{(1)} \right)^2 \right\} r(n_2+n_3) \log^{2}n_3 \\
\Mc \geq 32 \max \left\{ \mu_0, \left( \mu_{k_1^{*}}^{(3)} \right)^2 \right\} r(n_1+n_2) \log^{2}n_2 \\
\Md \geq  32 \max \left\{ \mu_0, \left( \mu_{k_2^{*}}^{(3)} \right)^2 \right\} r(n_1+n_2) \log^{2}n_2 \\
\end{align*} 
\vspace{-13mm}
\item The slice condition (Definition \ref{def:1}) for each of the four slices $\Sa, \Sb, \Sc, \Sd$ hold. 
%and non-degeneracy holds for mode $3$ at $k_1^*, k_2^*$ and mode $1$ at $i_1^*, i_2^*$ (Definition \ref{def:degen}).
\end{itemize}
Then the unique solutions to problems \eqref{eq:opt1}, \eqref{eq:opt2}, \eqref{eq:opt3} and \eqref{eq:opt4} are $\Xc$ $\Xd$, $\Xa$, and $\Xb$ respectively with probability exceeding $1-C \log(n_2)n_2^{-\beta}$ for some constants $C, \beta >0$. 
\end{lemma}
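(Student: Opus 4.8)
The plan is to reduce each of the four convex programs to a standard low-rank matrix completion instance, verify that the hypotheses of an off-the-shelf completion guarantee transfer from the tensor to its slices, and finish with a union bound. First I would note that in the completion setting the relevant ``contractions'' are coordinate contractions: the programs \eqref{eq:opt1}, \eqref{eq:opt2} have constraints $y^{(3)}_1 = \mathcal{T}^{(3)}(Z_1)$, $y^{(3)}_2 = \mathcal{T}^{(3)}(Z_2)$ with $\mathcal{T}^{(3)}$ the restriction to the sampled entries within a single mode-$3$ slice, so by \eqref{eq:comp_meas2} (and its mode-$3$ analogue) the constraint in \eqref{eq:opt1} says exactly that $Z_1$ agrees with the slice $\Xc$ on $\Omega^{(3)}_{k_1^*}$; thus \eqref{eq:opt1} is the nuclear-norm completion program for the $n_1\times n_2$ matrix $\Xc$ from $\Mc$ uniformly (without replacement) sampled entries, and similarly \eqref{eq:opt2}, \eqref{eq:opt3}, \eqref{eq:opt4} are the completion programs for $\Xd$, $\Xa$, $\Xb$. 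By Lemma \ref{lemma:contr_rank} each of these four slices has rank at most $r$.

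Next I would check the standing hypotheses of the matrix completion theorem of \cite{Ben} for each of the four instances. Incoherence of the column and row spaces is supplied by Corollary \ref{cor:incoherence}: for a mode-$3$ slice these spaces are $\mathcal{U}$ and $\mathcal{V}$, and for a mode-$1$ slice they are $\mathcal{V}$ and $\mathcal{W}$; under Assumption \ref{assump:3} all of $\mu(\mathcal{U}),\mu(\mathcal{V}),\mu(\mathcal{W})$ are bounded by $\mu_0$. The joint-incoherence (``slice'') condition of Definition \ref{def:1} is imposed directly on $\Xa,\Xb,\Xc,\Xd$ in the lemma's hypotheses, with constants $\mu^{(1)}_{i_1^*}, \mu^{(1)}_{i_2^*}, \mu^{(3)}_{k_1^*}, \mu^{(3)}_{k_2^*}$. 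Finally, the assumed sample counts are exactly of the form $32\max\{\mu_0,\mu_1^2\}\,r\,(d_1+d_2)\log^2 d_{\max}$ demanded by that theorem once one matches dimensions: a mode-$1$ slice is $n_2\times n_3$, giving the bound $32\max\{\mu_0,(\mu^{(1)})^2\}\,r(n_2+n_3)\log^2 n_3$, and a mode-$3$ slice is $n_1\times n_2$, giving $32\max\{\mu_0,(\mu^{(3)})^2\}\,r(n_1+n_2)\log^2 n_2$. One bookkeeping point to dispatch here is that the completion theorem is typically phrased for Bernoulli or with-replacement sampling whereas we sample without replacement; since exact recovery is monotone under enlarging the observed index set (the true matrix stays feasible while the feasible set shrinks), recovery from $m$ distinct entries dominates recovery from $m$ with-replacement draws, so our model only helps.

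Applying the completion theorem slice by slice then yields exact recovery: $Z_1^* = \Xc$, $Z_2^* = \Xd$ from \eqref{eq:opt1}, \eqref{eq:opt2}, and $Z_3^* = \Xa$, $Z_4^* = \Xb$ from \eqref{eq:opt3}, \eqref{eq:opt4}, each failing with probability at most a term of the form $c\log(d)\,d^{-\beta'}$ with $d=n_2$ for the two mode-$3$ slices and $d=n_3$ for the two mode-$1$ slices. Since $n_1\le n_2\le n_3$, I would absorb all four failure terms into a single bound $C\log(n_2)\,n_2^{-\beta}$ by adjusting $C$ and $\beta$, and conclude by a union bound over the four events.

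I do not expect a genuine obstacle in this lemma: the substantive content is entirely contained in the cited matrix completion guarantee, and the ``work'' is the routine verification that the rank bound, incoherence, and slice conditions descend from the tensor to its slices (already furnished by Lemma \ref{lemma:contr_rank}, Corollary \ref{cor:incoherence}, and the hypotheses). The only mild subtleties are the sampling-model conversion noted above and carefully attaching the correct matrix dimensions and logarithmic factor to each slice so that the aggregated failure probability comes out in terms of $n_2$, exactly as stated.
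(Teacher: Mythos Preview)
Your proposal is correct and follows essentially the same approach as the paper: invoke Lemma~\ref{lemma:contr_rank} to bound the rank of each slice, verify the incoherence and slice conditions via Corollary~\ref{cor:incoherence} and the hypotheses, apply the matrix completion guarantee of \cite{Ben} to each of the four programs, and union-bound the failure probabilities. Your treatment is in fact more careful than the paper's, which does not explicitly address the sampling-model conversion or the consolidation of the four failure terms into a single $n_2$-based bound.
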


\begin{proof}
%Again, we only prove the first part of the claim, the second follows in an identical manner.
By Lemma \ref{lemma:contr_rank} $\Xa$, $\Xb$, $\Xc$, $\Xd$ are all rank at most $r$. By Theorem 1.1 of \cite{Ben}, the convex problems \eqref{eq:opt1}, \eqref{eq:opt2}, \eqref{eq:opt3}, \eqref{eq:opt4} correctly recover the full slices $\Xc$, $\Xd$, $\Xa$, $\Xb$ with high probability. (Note that the relevant incoherence conditions in \cite{Ben} are satisfied due to Corollary \ref{lemma:contr_rank} and the slice condition assumption. Furthermore the number of samples specified meets the sample complexity requirements of Theorem 1.1 in \cite{Ben} for exact recovery.) 
\end{proof}

\begin{remark}
We note that in this sub-section, events that occur with probability exceeding  $1-C \log(n_2) n_2^{-\beta}$ (recall that $n_1 \leq n_2 \leq n_3$) are termed as occurring with high probability (w.h.p.). We will transparently be able to union bound these events (thus changing only the constants) and hence we refrain from mentioning these probabilities explicitly. 
\end{remark}

%Since the contractions $X^{3}_a$ and $X^{3}_b$ of the tensor $\vX$ are successfully recovered, and the tensor satisfies Assumption \ref{assump:1} the second stage of Leurgans' algorithm can be used to recover the factors $u_i$ and $v_i$. Similarly, from $X^{1}_c$ and $X^{1}_d$, the factors $v_i$ and $w_i$ can be recovered.  The above sequence of observations leads to the following:
%
\begin{theorem} \label{thm:exact_rec_comp}
Let $\vX \in \R^{n_1\times n_2 \times n_3}$ be an unknown tensor of interest with rank $r \leq n_1$, such that the tensor slices  $\Xc$ $\Xd$ are non-degenerate and pairwise generic, and similarly $\Xa, \Xb$ are non-degenerate and pairwise generic. Then, under the same set of assumptions made for Lemma \ref{lemma:contr_rec},  the procedure outlined in Algorithm \ref{alg:trecs} succeeds in exactly recovering $\vX$ and its low rank decomposition \eqref{eq:decomp0} with high probability.
\end{theorem}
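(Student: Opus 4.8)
The plan is to follow the template of the proof of Theorem~\ref{thm:exact_rec_sense}, substituting the Gaussian low-rank recovery guarantee by its matrix-completion counterpart. First, since $r \le n_1 \le n_2 \le n_3$, Lemma~\ref{lemma:contr_rank} applies to all four sampled slices, so $\Xa, \Xb, \Xc, \Xd$ are matrices of rank at most $r$; moreover, by Corollary~\ref{cor:incoherence} together with the slice conditions assumed in the hypotheses of Lemma~\ref{lemma:contr_rec}, these slices are incoherent with well-controlled constants. The sampling model~\eqref{eq:comp_meas1}--\eqref{eq:comp_meas2} reveals each of these slices on a uniformly random subset of its entries of the stated cardinality, so Lemma~\ref{lemma:contr_rec} applies directly: with high probability the nuclear-norm programs \eqref{eq:opt1}, \eqref{eq:opt2}, \eqref{eq:opt3}, \eqref{eq:opt4} return exactly the four full slices $\Xc, \Xd, \Xa, \Xb$. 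This reduces tensor completion to tensor decomposition from exact contractions.

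Next I would invoke the Leurgans' step. By hypothesis the pair of mode-$3$ slices $\Xc, \Xd$ is non-degenerate and pairwise generic, so Lemma~\ref{lemma:fund_lemma} shows that the eigendecompositions of $M_1 = \Xc(\Xd)^{\dagger}$ and $M_2 = (\Xd)^{\dagger}\Xc$ recover the factors $\{u_i\}$ and $\{v_i\}$ up to scaling, with the common nonzero eigenvalues (distinct, by pairwise genericity) used to pair each $u_i$ with the corresponding $v_i$. Applying the same argument to the non-degenerate, pairwise generic pair $\Xa, \Xb$ recovers $\{v_i\}$ and $\{w_i\}$, again correctly paired, giving the factors labelled $\tilde V, \tilde W$ in Algorithm~\ref{alg:trecs}.

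The one bookkeeping issue that does not arise in Theorem~\ref{thm:exact_rec_sense} is that the two applications of Leurgans' step yield two copies of the $v$-factors, $V$ and $\tilde V$, whose (normalized) columns may be permuted and individually sign-flipped relative to each other; step~$6$ of Algorithm~\ref{alg:trecs} resolves this by reordering and sign-adjusting the columns of $(\tilde V, \tilde W)$ so that $\tilde V$ coincides with $V$, after which the triples $(u_i, v_i, w_i)$ are consistently indexed. Finally, the scalars $\lambda_i$ are obtained from the overdetermined linear system in step~$7$; this system has full column rank because the $\{u_i\}$, $\{v_i\}$, $\{w_i\}$ are each linearly independent by Assumption~\ref{assump:2} — indeed even a single recovered slice, say $\Xc = U\,\mathrm{diag}\!\big(\lambda_i (w_i)_{k_1^{*}}\big)V^T$ with $(w_i)_{k_1^{*}}\neq 0$ by non-degeneracy, already pins down the $\lambda_i$ uniquely. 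A union bound over the four polynomially small failure probabilities from Lemma~\ref{lemma:contr_rec} then yields the high-probability conclusion.

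I expect the main obstacle to be one of careful verification rather than a new idea: checking that the incoherence assumption, the slice conditions, and the stated sample cardinalities precisely meet the requirements of the completion guarantee of \cite{Ben} invoked inside Lemma~\ref{lemma:contr_rec}, and confirming that the alignment in step~$6$ is genuinely well-defined — that is, that pairwise genericity of \emph{both} slice pairs forces the two eigenvalue-induced orderings of the $v$-factors to differ only by a permutation and a choice of signs, so that reconciling them also fixes the ordering of the $u_i$ and $w_i$.
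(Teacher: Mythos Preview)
Your proposal is correct and follows essentially the same route as the paper's own proof, which invokes Lemma~\ref{lemma:contr_rec} to recover the four slices and then refers back to the argument of Theorem~\ref{thm:exact_rec_sense} for the Leurgans' step, the alignment, and the linear system for the $\lambda_i$. One minor remark: the column-alignment bookkeeping you flag as ``not arising in Theorem~\ref{thm:exact_rec_sense}'' in fact arises there as well, since both settings run Algorithm~\ref{alg:trecs} verbatim and step~6 is needed in either case.
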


\begin{proof}
The proof follows along the same lines as that of Theorem \ref{thm:exact_rec_sense}, with Lemma \ref{lemma:contr_rec} allowing us to exactly recover the slices $\Xa, \Xb, \Xc, \Xd$. Since these slices satisfy non-degeneracy and pairwise genericity, the tensor factors $u_i, v_i, w_i$, $i=1, \ldots, r$  can be exactly recovered (up to scaling) by following steps $(3)$, $(5)$ and $(6)$ of Algorithm \ref{alg:trecs}. Also, the system of equations to recover $\lambda$ is given by
\[
\mtx{X}_{\Omega} = \sum_{i = 1}^r \lambda_i (u_i \otimes v_i \otimes w_i)_{\Omega}.
\]
\end{proof}

\begin{remarks} \hspace{10mm}
\begin{enumerate}
\item Theorem \ref{thm:exact_rec_comp} yields bounds that are almost order optimal when $\mu_0$ and $\mu^{(i)}_k$ are constant (independent of $r$ and the dimension). Indeed, the total number of samples required  is $m \sim O(rn_3 \log^2 n_3)$, which by a counting argument is nearly the same number  of parameters in an order 3 tensor of rank $r$ (except for the additional logarithmic factor). 

\item The comments about efficiency for symmetric factorizations in the Gaussian random projections case hold here as well. 

%\item For symmetric tensors with symmetric factorizations of the form $\vX=\sum_{l=1}^{3} \lambda_i v_i \otimes v_i \otimes v_i \in \R^{n \times n \times n}$, this method becomes particularly simple. Steps $4, 5, 6$ in Algorithm \ref{alg:trecs} become unnecessary, and the factors are revealed directly in step $3$. One then only needs to solve the linear system described in step $7$ to recover the scale factors. The sample complexity remains $O(nr \log^2 n)$, nevertheless.

%\item Note that for the method we propose, the most computationally expensive step is solving low-rank matrix recovery problems where the matrix is of size $n_i \times n_j$ for $i,j = 1,2,3$. Fast algorithms with rigorous guarantees exist for solving such problems, and we can use any of these pre-existing methods. An important point to note is that, other methods for minimizing the Tucker rank of a tensor by considering ``matricized" tensors solve matrix recovery problems for matrices of size $n_i \times n_j n_k$, which can be far more expensive. 

%\item Note that the sensing operators $\langle A^{(i)}\otimes a , \cdot \rangle$ may seems non-standard (vis-a-vis the compressed sensing literature such as \cite{recht2010guaranteed}), but are very storage efficient. Indeed, one needs to only store random matrices $A^{(i)}, B^{(i)}$ and random vectors $a,b$. This is far more storage efficient than (perhaps the more suggestive) sensing operators of the form $\langle \vA^{(i)} , \cdot \rangle$, where each $\vA^{(i)}$ is a random tensor. 

\item We do not necessarily need sampling without replacement from the four slices. Similar results can be obtained for other sampling models such as with replacement \cite{Ben}, and even non-uniform sampling \cite{Sujay}. Furthermore, while the method proposed here for the task of matrix completion relies on nuclear norm minimization, a number of other approaches such as alternating minimization \cite{optspace,Sujay,Burer} can also be adopted; our algorithm relies only on the successful completion of the slices. 

\item Note that we can remove the slice condition altogether since the incoherence assumption implies the slice condition with $\mu_1 = \mu_0 \sqrt r$. Removing the slice condition then implies an overall sample complexity of $O(r^2n_3\log^2n_3)$.
\end{enumerate}
\end{remarks}

%%%%%%%%%%%%%%%%%%%%%%%%%%%%%%%%%%%%%%%%%%%%%%%%
%%%%%%%%%%%%%%%%%%%%%%%%%%%%%%%%%%%%%%%%%%%%%%%%
%%%%%%%%%%%%%%%%%%%%%%%%%%%%%%%%%%%%%%%%%%%%%%%%
%%%%%%%%%%%%%%%%%%%%%%%%%%%%%%%%%%%%%%%%%%%%%%%%
%%%%%%%%%%%%%%%%%%%%%%%%%%%%%%%%%%%%%%%%%%%%%%%%
%%%%%%%%%%%%%%%%%%%%%%%%%%%%%%%%%%%%%%%%%%%%%%%%
%%%%%%%%%%%%%%%%%%%%%%%%%%%%%%%%%%%%%%%%%%%%%%%%
%%%%%%%%%%%%%%%%%%%%%%%%%%%%%%%%%%%%%%%%%%%%%%%%
%%%%%%%%%%%%%%%%%%%%%%%%%%%%%%%%%%%%%%%%%%%%%%%%
%%%%%%%%%%%%%%%%%%%%%%%%%%%%%%%%%%%%%%%%%%%%%%%%

\section{Extension to Higher Order Tensors} \label{sec:higher_order}

%\textcolor{red}{I do not think it is necessary to include the algorithms again. We just need to say "here is how the measurements will be obtained" and then port that into TReC-S and FSS. This is looking too repetetive. 
%\begin{itemize}
%\item Define slices
%\item Define contractions
%\item Define separability
%\item Define algorithm
%\item Show that measurement mechanisms are separable, etc.
%\end{itemize}}

The results of Section \ref{sec:third_order} can be extended to higher order tensors in a straightforward way. While the ideas remain essentially the same, the notation is necessarily more cumbersome in this section. We omit some technical proofs to avoid repetition of closely analogous arguments from the third order case, and focus on illustrating how to extend the methods to the higher order setting. 

Consider a tensor $\vX \in \R^{n_{1} \times \cdots \times n_{K}}$ of order $K$ and dimension $n_1\times \cdots \times n_{K}$. Let us assume, without loss of generality, that $n_1 \leq n_2 \leq \ldots \leq n_K$. Let the rank of this tensor be $r \leq n_1$ and be given by the decomposition:
\begin{align*}
\vX &= \sum_{l=1}^r u_l^1 \otimes \cdots \otimes u_l^K = \sum_{l=1}^r \bigotimes_{p=1}^{K} u^p_l,
\end{align*}
where $u_l^p \in \R^{n_p}$.
We will be interested in slices of the given tensor that are identified by picking two \emph{consecutive} modes $(k, k+1)$, and by fixing all the indices not in those modes, i.e. $i_1 \in [n_{1}], \ldots, i_{k-1} \in [n_{k-1}], i_{k+2} \in [n_{k+2}], \ldots, i_K \in [n_K]$. Thus the indices of a slice $S$ are:
\begin{align*}
S:=\left\{ i_1 \right\} \times \cdots  \times \left\{ i_{k-1} \right\} \times [n_k] \times [n_{k+1}] \times \left\{ i_{k+2} \right\} \times \cdots \times \left\{ i_K \right\},
%S^{(k)}:=\left\{ (i_1, \ldots, i_{k-1}, s,t, i_{k+2}, \ldots, i_K) : \right. \\
%\left. s \in [n_k], t \in [n_{k+1}]  \right\}.
 \end{align*}
 and the corresponding slice may be viewed as a matrix, denoted by $\vX_S$. While slices of tensors can be defined more generally (i.e. the modes need not be consecutive), in this paper we will only need to deal with such ``contiguous'' slices. \footnote{In general, a slice corresponding to any pair of modes $(k_1,k_2)$ suffices for our approach. However, to keep the notation simple we present the case where slices correspond to mode pairs of the form $(k,k+1)$.} We will denote the collection of all slices where modes $(k,k+1)$ are contained to be: \small
 $$
 \mathcal{S}^{(k)} := \left\{ 
 \left\{ i_1 \right\} \times \cdots  \times \left\{ i_{k-1} \right\} \times [n_k] \times [n_{k+1}] \times \left\{ i_{k+2} \right\} \times \cdots \times \left\{ i_K \right\} \; | \; i_1 \in [n_1], \ldots, i_K \in [n_K]
 \right\}.
 $$
\normalsize
Every element of $\mathcal{S}^{(k)}$ is a set of indices, and we can identify a tensor $\vA \in \R^{n_1 \times \cdots \times n_{k-1} \times n_{k+2} \times \cdots \times n_{K}}$ with a map $\mathcal A: \mathcal{S}^{(k)} \rightarrow \R$. Using this identification, every element of $\vA$ can thus also be referenced by $S \in \mathcal{S}^{(k)}$. To keep our notation succinct, we will thus refer to $\vA_S$ as the element corresponding to $S$ under this identification. Thus if $S=\left\{ i_1 \right\} \times \cdots  \times \left\{ i_{k-1} \right\} \times [n_k] \times [n_{k+1}] \times \left\{ i_{k+2} \right\} \times \cdots \times \left\{ i_K \right\}$, the element:
$$
\vA_S = \vA_{i_1, \ldots, i_{k-1}, i_{k+2}, \ldots, i_K}.
$$

Using this notation, we can define a high-order contraction. A mode-$k$ contraction of $\vX$ with respect to a tensor $\vA$ is thus:
\begin{equation} \label{eq:contraction_ho}
X_{\vA}^{k} := \sum_{S \in \mathcal{S}^{(k)}}\vA_S \vX_S. 
\end{equation}
Note that since $X_{\vA}^{k}$ is a sum of (two-dimensional) slices, it is a matrix. As in the third order case, we will be interested in contractions where $\vA$ is either random or a coordinate tensor. The analogue of Lemma \ref{lemma:fund_lemma} for the higher order case is the following:
\begin{lemma} \label{lemma:decomp_high}
Let $\vX$ have the decomposition $\vX = \sum_{l=1}^r \bigotimes_{p=1}^{K} u^p_l$. Then we have that the contraction $X^k_{\vA}$ has the following matrix decomposition:
\begin{equation} \label{eq:decomp_high1}
X^k_{\vA}=\sum_{l=1}^r \nu_l^k  u_l^k \left( u_l^{k+1} \right)^{T},
\end{equation}
where  $\nu_l^k:= \langle \vA, \underset{{p \neq k, k+1}}{\bigotimes}u^p_l\rangle $. Furthermore, if $X_{\vB}^k$ is another contraction with respect to $\vB$, then the eigenvectors of the matrices
\begin{align} \label{eq:m_and_n}
M_1=X^k_{\vA} \left( X^k_{\vB} \right) ^{\dag} 
& \qquad M_2=\left( \left( X^k_{\vB} \right)^{\dag} X^k_{\vA} \right)^{T}
\end{align}
respectively are $\{ u^k_l \}_{l=1, \ldots, r}$ and $\{ u^{k+1}_l \}_{l=1, \ldots, r}$.
\end{lemma}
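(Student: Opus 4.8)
The plan is to transcribe the proof of Lemma~\ref{lemma:fund_lemma} to the higher-order setting; the only genuinely new ingredient is index bookkeeping for the contiguous slices, and the only hypotheses we need beyond what is written are the higher-order analogues of Assumption~\ref{assump:1} (linear independence of $\{u^k_l\}_l$ and of $\{u^{k+1}_l\}_l$) together with non-degeneracy of the two contractions $X^k_{\vA},X^k_{\vB}$, exactly as in the third-order lemma. First I would establish the matrix decomposition \eqref{eq:decomp_high1}. Fix a rank-one summand $\bigotimes_{p=1}^{K}u^p_l$ of $\vX$ and a slice index set $S=\{i_1\}\times\cdots\times\{i_{k-1}\}\times[n_k]\times[n_{k+1}]\times\{i_{k+2}\}\times\cdots\times\{i_K\}$. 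The corresponding matrix slice of this summand is the rank-one matrix $\bigl(\prod_{p\neq k,k+1}(u^p_l)_{i_p}\bigr)\,u^k_l\,(u^{k+1}_l)^{T}$. Substituting into \eqref{eq:contraction_ho} and summing $\vA_S$ over all $S\in\mathcal{S}^{(k)}$ (equivalently over all tuples $(i_1,\dots,i_{k-1},i_{k+2},\dots,i_K)$), the product of coordinates collapses, by the definition of the tensor inner product, to $\langle\vA,\bigotimes_{p\neq k,k+1}u^p_l\rangle=\nu^k_l$. Summing over $l$ yields \eqref{eq:decomp_high1}.

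Next I would put this in factored form. Let $U^k\in\R^{n_k\times r}$ and $U^{k+1}\in\R^{n_{k+1}\times r}$ have columns $\{u^k_l\}$ and $\{u^{k+1}_l\}$, and set $D_{\vA}=\mathrm{diag}(\nu^k_1,\dots,\nu^k_r)$ and, analogously, $D_{\vB}=\mathrm{diag}(\mu^k_1,\dots,\mu^k_r)$ with $\mu^k_l=\langle\vB,\bigotimes_{p\neq k,k+1}u^p_l\rangle$. Then $X^k_{\vA}=U^k D_{\vA}(U^{k+1})^{T}$ and $X^k_{\vB}=U^k D_{\vB}(U^{k+1})^{T}$. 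The linear-independence hypothesis makes $U^k,U^{k+1}$ full column rank, and non-degeneracy makes $D_{\vA},D_{\vB}$ invertible; together these justify the pseudoinverse factorization $(X^k_{\vB})^{\dagger}=\bigl((U^{k+1})^{T}\bigr)^{\dagger}D_{\vB}^{-1}(U^k)^{\dagger}$, via the standard identity $(PDQ)^{\dagger}=Q^{\dagger}D^{-1}P^{\dagger}$ (valid when $P$ has full column rank, $D$ is square invertible, $Q$ has full row rank), together with $(U^{k+1})^{T}\bigl((U^{k+1})^{T}\bigr)^{\dagger}=I_r$ and $(U^k)^{\dagger}U^k=I_r$. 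This also shows, as in Lemma~\ref{lemma:contr_rank}, that both contractions have rank exactly $r$, which pins down the common rank.

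With these identities in hand the conclusion is immediate. One computes
\[
M_1=X^k_{\vA}(X^k_{\vB})^{\dagger}=U^k D_{\vA}(U^{k+1})^{T}\bigl((U^{k+1})^{T}\bigr)^{\dagger}D_{\vB}^{-1}(U^k)^{\dagger}=U^k\bigl(D_{\vA}D_{\vB}^{-1}\bigr)(U^k)^{\dagger},
\]
so $M_1U^k=U^k(D_{\vA}D_{\vB}^{-1})$ and the columns of $U^k$ are eigenvectors of $M_1$ with eigenvalues $\nu^k_l/\mu^k_l$ (the zero eigenspace being $\mathrm{range}(U^k)^{\perp}$). An identical computation gives $(X^k_{\vB})^{\dagger}X^k_{\vA}=\bigl((U^{k+1})^{T}\bigr)^{\dagger}\bigl(D_{\vB}^{-1}D_{\vA}\bigr)(U^{k+1})^{T}$, hence $M_2=\bigl((X^k_{\vB})^{\dagger}X^k_{\vA}\bigr)^{T}=U^{k+1}\bigl(D_{\vB}^{-1}D_{\vA}\bigr)(U^{k+1})^{\dagger}$, so $M_2U^{k+1}=U^{k+1}(D_{\vB}^{-1}D_{\vA})$ and the columns of $U^{k+1}$ are eigenvectors of $M_2$. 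I expect the main obstacle to be purely bookkeeping: carrying the non-degeneracy/linear-independence hypotheses through cleanly and justifying the three-factor pseudoinverse identity; the algebra itself is a line-by-line copy of the proof of Lemma~\ref{lemma:fund_lemma}. As in the third-order case, a brief remark should note that pairing the recovered $\{u^k_l\}$ with $\{u^{k+1}_l\}$ is done via the common eigenvalues, which are distinct under the pairwise-genericity analogue.
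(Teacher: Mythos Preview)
Your proposal is correct and follows essentially the same route as the paper: the paper establishes \eqref{eq:decomp_high1} by expanding the contraction definition and rearranging, and for the eigenvector claim simply says it ``follows along similar lines to the proof of Lemma~\ref{lemma:fund_lemma}.'' Your write-up is more careful than the paper's own proof in two respects worth keeping: you make explicit the implicit hypotheses (linear independence of the mode-$k$ and mode-$(k{+}1)$ factors and non-degeneracy of both contractions) that the lemma statement omits but are clearly needed, and you justify the three-factor pseudoinverse identity rather than taking it for granted.
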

\begin{proof}
It is straightforward to verify by simply expanding the definition of $X_{\vA}^k$  using the definition of contraction \eqref{eq:contraction_ho}:
\begin{align*}
\left[ X_{\vA}^{k} \right]_{j_{k},j_{k+1}}
={\sum_{j_1, \ldots, j_{k-1}, j_{k+2}, \ldots, j_K} \sum_{l=1}^{r} \left( \prod_{p=1}^{K} \left( u_l^{p} \right)_{j_k} \right) \vA_{j_1, \ldots, j_{k-1},{j_{k+2}}, \ldots, j_{K}}}.
\end{align*} \normalsize
Rearranging terms, we get the decomposition \eqref{eq:decomp_high1}. The eigenvalues of $M_1, M_2$ follow along similar lines to the proof of Lemma \ref{lemma:fund_lemma}.
\end{proof}
As a consequence of the above lemma, if $\vX$ is of low rank, so are all the contractions. The notions of non-degeneracy and pairwise genericity of contractions extend in a natural way to the higher order case. We say that a contraction $X^k_{\vA}$ is non-degenerate if 
$\nu_l^k \neq 0$ for all $l=1, \ldots, r$. Furthermore, a pair of contractions is pairwise generic if the corresponding ratios $\nu_l^k$ are all distinct for $l=1, \ldots, r$. Non-degeneracy and pairwise genericity hold almost surely when the contractions are computed with random tensors $\vA$, $\vB$ from appropriate random ensembles (e.g. $i.i.d.$ normally distributed entries).
In much the same way as the third order case, Leurgans' algorithm can be used to perform decomposition of low-rank tensors using Lemma \ref{lemma:decomp_high}. This is described in Algorithm \ref{alg:leurgans_high}.

\begin{algorithm}[!ht]
   \caption{Leurgans' Algorithm for Higher Order Tensors}
   \label{alg:leurgans_high}
\begin{algorithmic}[1]
   \STATE {\bfseries Input:} Tensor $\vX$. 
   %\REPEAT 
\FOR{ $k=1$ to $K-1$}
\STATE Compute contractions $X_{\vA}^k$ and $X_{\vB}^k$ for some tensors $\vA$ and $\vB$ of appropriate dimensions, such that the contractions are non-degenerate and pairwise generic.
\STATE Compute eigen-decompositions of $M_1 := X_{\vA}^k \left(X_{\vB}^k \right)^{\dagger}$ and $M_2 := \left(X_{\vB}^k\right)^{\dagger}X_{\vA}^k$. Let $\tilde{U}^k$ and $\tilde{U}^{k+1}$ denote the matrices whose columns are the eigenvectors of $M_1$ and $M_2^T$ respectively corresponding to the non-zero eigenvalues, in sorted order. (Let $r$ be the (common) rank of $M_1$ and $M_2$.) 
\STATE If $k=1$, let $U^1:=\tilde{U}^1$ and $U^2:= \tilde{U}^2$. 

\STATE If $k \geq 2$, simultaneously reorder the columns of $\tilde{U}^k$, $\tilde{U}^{k+1}$, also performing simultaneous sign reversals as necessary so that the columns of $\tilde{U}^{k}$ obtained match with the columns of  $U^{k}$ (obtained in the previous iteration), call the resulting matrices $U^{k}$, $U^{k+1}$. (The eigenvectors corresponding to mode $k+1$, thus obtained are denoted as $\{ u_l^{k+1} \}_{l=1, \ldots, r}$.)

\ENDFOR
\STATE Solve for $\lambda_l$ in the (over-determined) linear system $$\vX=  \sum_{l=1}^{r} \lambda_l  \bigotimes_{k=1}^{K} u^k_l. $$

  \STATE {\bfseries Output:} Recovered tensor $\vX=\sum_{l=1}^r  \lambda_l \bigotimes_{k=1}^{K} u_l^k$.
\end{algorithmic}
\end{algorithm}

Finally, the notion of \emph{separable measurements} can be extended to higher order tensors in a natural way.

\begin{definition}
Consider a linear operator $\mathcal{L}: \R^{n_1 \times  \cdots  \times n_K} \rightarrow \R^n$. We say that $\mathcal{L}$ is separable with respect to the $k^{th}$ mode if there exist $\vW \in \R^{n_1 \times \cdots \times n_{k-1} \times n_{k+2} \times \cdots \times n_K}$ and a linear operator $\mathcal{T}^{(k)}: \R^{n_k \times n_{k+1} } \rightarrow \R^n$, such that for every $\vX \in \R^{n_1 \times \cdots \times n_K}$: 
$$
\mathcal{L}\left( \vX \right) = \sum_{S \in \mathcal{S}^{(k)}} \vW_S \,\mathcal{T}^{(k)} \left( X^k_S \right).
$$
\end{definition}
Analogous to the third order case, we assume that we are presented with two sets of separable measurements per mode:
\begin{align*}
y_1^{(k)}&=\mathcal{L}^{(k)}_1\left( \vX \right) = \sum_{S \in \mathcal{S}^{(k)}} \left(\vW_1\right)_{S} \,\mathcal{T}^{(k)} \left( X^k_S \right)\\
  y_2^{(k)} & =\mathcal{L}^{(k)}_2\left( \vX \right) = \sum_{S \in \mathcal{S}^{(k)}} \left( \vW_2\right)_{S} \,\mathcal{T}^{(k)} \left( X^k_S \right)
\end{align*}
for $k=1, \ldots, K-1$ with each of $y_1^{(k)}, y_2^{(k)} \in \R^{m_{k}}$. Once again, by separability we have:
$$
y_1^{(k)} = \mathcal{T}^{(k)} \left( X_{{\vW}_1}^k \right) \qquad y_2^{(k)} = \mathcal{T}^{(k)} \left( X_{{\vW}_2}^k \right),
$$
and since the contractions $X_{{\vW}_1}^k$ and $ X_{{\vW}_2}^k$ are low rank, nuclear norm minimization can be used to recover these contractions via:

\begin{equation} \label{eq:opt_ho1}
\underset{Z_1}{\text{minimize}}  \qquad \|Z_1 \|_* \qquad \text{subject to}  \qquad  y_1^{(k)}=\mathcal{T}^{(k)}\left( Z_1 \right), 
\end{equation}

\begin{equation} \label{eq:opt_ho2}
\underset{Z_2}{\text{minimize}}  \qquad \|Z_2 \|_* \qquad \text{subject to}  \qquad  y_2^{(k)}=\mathcal{T}^{(k)}\left( Z_2 \right), 
\end{equation}
for each $k=1, \ldots, K-1$. After recovering the two contractions for each mode, we can then apply (the higher order) Leurgans' algorithm to recover the tensor factors. The precise algorithm is described in Algorithm \ref{alg:trecs_high}. Provided the $\mathcal{T}^{(k)}\left( \cdot \right)$ are faithful, the tensor contractions can be successfully recovered via nuclear norm minimization. Furthermore, if the contractions are non-degenerate and pairwise generic, the method can successfully recover the entire tensor.

\begin{algorithm}[!ht]
   \caption{T-ReCs for Higher Order Tensors}
   \label{alg:trecs_high}
\begin{algorithmic}[1]
   \STATE {\bfseries Input:} Measurements $y_i^{(k)} = \mathcal{L}_i^{(k)} \left( \vX \right)$, for $k=1, \ldots, K$, $i=1, 2.$ 
   %\REPEAT 
\FOR{ $k=1$ to $K-1$}
\STATE Solve convex optimization problems \eqref{eq:opt_ho1} and \eqref{eq:opt_ho2} to obtain optimal solutions $Z_1^*$ and $Z_2^*$ respectively. 

\STATE Compute eigen-decompositions of $M_1 := Z_1^{*}(Z_2^*)^{\dagger}$ and $M_2 := (Z_2^*)^{\dagger}Z_1^*$. Let $\tilde{U}^k$ and $\tilde{U}^{k+1}$ denote the matrices whose columns are the normalized eigenvectors of $M_1$ and $M_2^T$ respectively corresponding to the non-zero eigenvalues, in sorted order. (Let $r$ be the (common) rank of $M_1$ and $M_2^{T}$.) 

\STATE If $k=1$, let $U^1:=\tilde{U}^1$ and $U^2:= \tilde{U}^2$. 

\STATE If $k \geq 2$, simultaneously reorder the columns of $\tilde{U}^k$, $\tilde{U}^{k+1}$, also performing simultaneous sign reversals as necessary so that the columns of $\tilde{U}^{k}$ obtained match with the columns of  $U^{k}$ (obtained in the previous iteration), call the resulting matrices $U^{k}$, $U^{k+1}$. (The eigenvectors corresponding to mode $k+1$, thus obtained are denoted as $\{ u_l^{k+1} \}_{l=1, \ldots, r}$.)
\ENDFOR
\STATE Solve for $\lambda_l$ in the (over-determined) linear system $$y_i^{(k)}=  \sum_{l=1}^{r} \lambda_l \mathcal{L}_i^{(k)}\left(\bigotimes_{k=1}^{K} u^k_l \right), \; \; k=1, \ldots, K-1, \; i=1,2.$$

  \STATE {\bfseries Output:} Recovered tensor $\vX=\sum_{l=1}^r  \lambda_l \bigotimes_{k=1}^{p} u_l^k$.
\end{algorithmic}
\end{algorithm}

\subsection{Separable Random Projections}

%Consider a tensor $\vX \in \R^{n_{1} \times \cdots \times n_{K}}$ of order $K$ and dimensions $n_1\times \cdots \times n_{K}$. Without loss of generality we assume $n_1 \leq \ldots \leq n_K$. Let the rank of this tensor be $r \leq n_1$ and be given by the decomposition:
%\begin{align*}
%\vX &= \sum_{l=1}^r u_l^1 \otimes \ldots \otimes u_l^K = \sum_{l=1}^r \bigotimes_{p=1}^{K} u^p_l,
%\end{align*}
%where $u_l^p \in \R^{n_p}$.
%
%A word on notation: in this subsection we reserve $l$ as the index that varies over the $r$ rank-one components of the tensor in its decomposition and $k$ and $p$ as the indices that vary over the different modes of the tensor.

%A word on notation: in this subsection we reserve the index $i$ for measurements, $l$ as the index that varies over the $r$ rank-one components of the tensor in its decomposition and $k$ as the index that varies over the different modes of the tensor.
%
%Much like the order $3$ case, we will design random linear sensing operators $\mathcal{L}$, and show that the underlying ill-posed tensor inverse problem can be solved exactly from an order-optimal number of measurements. The key idea in the method will be similar to the idea driving Algorithm \ref{alg:trecs}; we successively recover pairs of factors in each round of the algorithm (e.g. $\left\{ u_l^1 \right\}_{l=1, \ldots, r}$ and $\left\{ u_l^2 \right\}_{l=1, \ldots, r}$ , then in the next round $\left\{ u_l^2 \right\}_{l=1, \ldots, r}$ and $\left\{ u_l^3 \right\}_{l=1, \ldots, r}$, etc.). In the end we then recover the required scale factors.

Given tensors $\vA \in \R^{n_1 \times \cdots \times n_{K_1}}$, $\vB \in \R^{n_{K_1+1} \times  \cdots  \times n_{K_1+K_2}}$,  $\vC \in \R^{n_{K_1+K_2+1} \times \cdots \times n_{K_1+K_2+K_3}}$ of orders $K_1$, $K_2$ and $K_3$  respectively with $K_1+K_2+K_3=K$, we define their outer product as:
\begin{align*}
\left[ \vA \otimes \vB \otimes \vC \right]_{i_1, \ldots, i_K} :=  \left[ \vA \right]_{i_1, \ldots, i_{K_{1}}} \left[ \vB \right]_{i_{K_{1}+1}, \ldots, i_{K_{1}+K_{2}}} \left[ \vC \right]_{i_{K_{1}+K_{2}+1}, \ldots, i_{K_{1}+K_{2}+K_{3}}} 
\end{align*}
Note also that the inner-product for higher order tensors is defined in the natural way:
$$
\langle\vT, \vX \rangle := \sum_{i_1, \ldots, i_K} \left[ \vT \right]_{i_1, \ldots, i_K} \left[ \vX \right]_{i_1, \ldots, i_K}.
$$
%
%For the higher order case, we need a more refined notion of contractions. Given tensors $\vX \in \R^{n_{1} \times  \ldots  \times n_{K}}$, $\vA \in \R^{n_{1} \times  \ldots \times n_{k-1}}$, $\vB \in \R^{n_{k+2} \times  \ldots \times n_{K}}$ we define the contraction of $X$ into the $(k,k+1)$ indices with respect to $\vA, \vB$, denoted by $X^k_{\vA, \vB} \in \R^{n_k \times n_{k+1}}$ by
%\begin{align} \label{eq:contr_higher}
%&\left[ X_{\vA, \vB}^{k} \right]_{j_{k},j_{k+1}} \\ 
%\notag
%&=\sum_{j_1, \ldots, j_{k-1}, j_{k+2}, \ldots, j_K} \vX_{j_1, \ldots, j_K} \vA_{j_1, \ldots, j_{k-1}} \vB_{j_{k+2}, \ldots, j_{K}}.
%\end{align}
%As $k$ ranges from $1, \ldots, K-1$, we obtain contractions into different dimensions of the tensor (note of course that the dimensions of $\vA$ and $\vB$ need to change in a commensurate manner). These contractions will play a similar critical role in our algorithm for the higher order case.

In this higher order setting, we also work with specific separable random projection operators, which are defined as below:
\begin{equation} \label{eq:meas} 
\begin{split}
y_1^{(k)}=\mathcal{L}_1^{(k)} \left( \vX \right) := \left[ \begin{array}{c}
\langle \vA_k \otimes \Gamma^{(k)}_1 \otimes \vB_k, \vX \rangle \\
\vdots \\
\langle \vA_k \otimes \Gamma^{(k)}_{m_k} \otimes \vB_k, \vX \rangle
\end{array}
\right] \\
y_2^{(k)}=\mathcal{L}_2^{(k)} \left( \vX \right) := \left[ \begin{array}{c}
\langle \vC_k \otimes \Gamma^{(k)}_1 \otimes \vD_k, \vX \rangle \\
\vdots \\
\langle \vC_k \otimes \Gamma^{(k)}_{m_k} \otimes \vD_k, \vX \rangle
\end{array}
\right]
\end{split}
\end{equation} 

 \small
\begin{equation} \label{eq:meas} 
%\begin{split}
%y_i^{(k)}=\mathcal{L}_i^{(k)} \left( \vX \right) := \langle \vA_k \otimes \Gamma^{(k)}_i \otimes \vB_k, \vX \rangle \qquad & i=1, \ldots, m_k\\
%y_i^{(k)}=\mathcal{L}_i^{(k)} \left( \vX \right) := \langle \vC_k \otimes \Gamma^{(k)}_i \otimes \vD_k, \vX \rangle \qquad & i=m_k+1, \ldots, 2m_k.
%\end{split}
\end{equation} \normalsize
In the above expressions, $\vA_k, \vC_k \in \R^{n_{1} \times \cdots \times n_{k-1}}$, and $\vB_k, \vD_k \in \R^{n_{k+2} \times \cdots \times n_{K}}$. The tensors $\vA_k$, $\vB_k$, $\vC_k$, $\vD_k$ are all chosen so that their entries are randomly and independently distributed according to $\mathcal{N}(0,1)$ and subsequently normalized to have unit Euclidean norm. The matrices $\Gamma^{(k)}_i \in \R^{n_k \times n_{k+1}}$ for $i=1,\ldots, m_k$ have entries randomly and independently distributed according to $\mathcal{N}(0,1)$.
For each $k$ we have $2m_k$ measurements so that in total there are $2\sum_{i=1}^{K-1} m_k$ measurements.

\begin{lemma} \label{lemma:high_eq}
We have the following identity:
$$
\langle  \vA_k \otimes \Gamma^{(k)}_i \otimes \vB_k, \vX \rangle = \langle \Gamma^{(k)}_i, X^k_{\vA_k \otimes \vB_k}\rangle.
$$
\end{lemma}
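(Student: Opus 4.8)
The plan is to prove this identity, the higher-order counterpart of Lemma~\ref{lemma:main_obs}, by directly expanding the three definitions involved — the outer product of tensors, the tensor inner product, and the higher-order contraction~\eqref{eq:contraction_ho} — and then reordering a finite sum. No nontrivial estimate is required; the content is index bookkeeping.

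First I would write the left-hand side entrywise: by the definition of the outer product, the $(i_1,\ldots,i_K)$ entry of $\vA_k\otimes\Gamma^{(k)}_i\otimes\vB_k$ equals $[\vA_k]_{i_1,\ldots,i_{k-1}}\,[\Gamma^{(k)}_i]_{i_k,i_{k+1}}\,[\vB_k]_{i_{k+2},\ldots,i_K}$, so that
\begin{align*}
\langle \vA_k\otimes\Gamma^{(k)}_i\otimes\vB_k,\vX\rangle
= \sum_{i_1,\ldots,i_K} [\vA_k]_{i_1,\ldots,i_{k-1}}\,[\Gamma^{(k)}_i]_{i_k,i_{k+1}}\,[\vB_k]_{i_{k+2},\ldots,i_K}\,\vX_{i_1,\ldots,i_K}.
\end{align*}
Next I would group the summation indices $(i_k,i_{k+1})$ together and the remaining indices $i_1,\ldots,i_{k-1},i_{k+2},\ldots,i_K$ separately. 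For fixed values of the latter, which parametrize a slice $S\in\mathcal{S}^{(k)}$, the factor $[\vA_k]_{i_1,\ldots,i_{k-1}}[\vB_k]_{i_{k+2},\ldots,i_K}$ is exactly $(\vA_k\otimes\vB_k)_S$ under the identification of a tensor in $\R^{n_1\times\cdots\times n_{k-1}\times n_{k+2}\times\cdots\times n_K}$ with a map on $\mathcal{S}^{(k)}$ introduced before~\eqref{eq:contraction_ho}, while $\sum_{i_k,i_{k+1}}[\Gamma^{(k)}_i]_{i_k,i_{k+1}}\vX_{i_1,\ldots,i_K}=\langle\Gamma^{(k)}_i,\vX_S\rangle$. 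The double sum therefore collapses to $\sum_{S\in\mathcal{S}^{(k)}}(\vA_k\otimes\vB_k)_S\,\langle\Gamma^{(k)}_i,\vX_S\rangle$, and by linearity of the inner product this equals $\langle\Gamma^{(k)}_i,\sum_{S\in\mathcal{S}^{(k)}}(\vA_k\otimes\vB_k)_S\,\vX_S\rangle=\langle\Gamma^{(k)}_i,X^k_{\vA_k\otimes\vB_k}\rangle$ by the definition~\eqref{eq:contraction_ho} of the contraction, which is the claim.

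There is essentially no obstacle: every step applies a definition or the bilinearity of the inner product over a finite index set, and reordering finite sums is automatically valid. The only mild subtlety is notational — correctly matching the remaining-index tuple with the slice label $S$ via the identification described above — and this is pure bookkeeping rather than a genuine difficulty. One could equivalently phrase the argument slice-by-slice, mirroring the proof of Lemma~\ref{lemma:main_obs}, first noting $\langle \vA_k\otimes\Gamma^{(k)}_i\otimes\vB_k,\vX\rangle=\sum_{S\in\mathcal{S}^{(k)}}(\vA_k\otimes\vB_k)_S\langle\Gamma^{(k)}_i,\vX_S\rangle$ and then invoking linearity; both routes are the same computation.
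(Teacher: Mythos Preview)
Your argument is correct, but it takes a different route from the paper. The paper's proof expands $\vX$ via its rank decomposition $\vX=\sum_{l=1}^r\bigotimes_{p=1}^K u_l^p$, applies the multiplicativity of the inner product on rank-one tensors, $\langle a\otimes b\otimes c,\,x\otimes y\otimes z\rangle=\langle a,x\rangle\langle b,y\rangle\langle c,z\rangle$, to split off the factor $\langle\Gamma^{(k)}_i,u_l^k\otimes u_l^{k+1}\rangle$, and then recognizes the remaining factor as $\nu_l^k$ from Lemma~\ref{lemma:decomp_high}, so that the sum collapses to $\langle\Gamma^{(k)}_i,X^k_{\vA_k\otimes\vB_k}\rangle$ by that lemma's contraction formula. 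Your proof bypasses the rank decomposition entirely and works directly with entries, grouping indices into the slice label $S$ and the in-slice coordinates $(i_k,i_{k+1})$. The advantage of your approach is that it establishes the identity for \emph{arbitrary} $\vX$, not just low-rank ones, and it is closer in spirit to the third-order proof of Lemma~\ref{lemma:main_obs}; the paper's approach, while tied to the low-rank hypothesis, has the minor benefit of making the connection to the decomposition in Lemma~\ref{lemma:decomp_high} explicit along the way.
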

\begin{proof}
The proof is analogous to that of Lemma \ref{lemma:main_obs}.
\begin{align*}
&\langle \vA_k \otimes \Gamma^{(k)}_i \otimes \vB_k, \vX\rangle \\
& = \sum_{l=1}^{r} \langle \vA_k \otimes \Gamma^{(k)}_i \otimes \vB_k, \bigotimes_{p=1}^{K} u^p_l\rangle\\
&\stackrel{(\text{i})}{=}\sum_{l=1}^r \langle \vA_k, \bigotimes_{p=1}^{k-1}u^p_l\rangle  \langle \vB_k, \bigotimes_{p=k+2}^{K}u^p_l\rangle \langle \Gamma^{(k)}_i ,u_l^{k} \otimes u_l^{k+1} \rangle \\
&= \sum_{l=1}^{r}  \langle \vA_k \otimes \vB_k, \bigotimes_{p=1}^{k-1}u^p_l \bigotimes_{p=k+2}^{K}u^p_l\rangle  \langle \Gamma^{(k)}_i ,u_l^{k} \otimes u_l^{k+1} \rangle  \\
&\stackrel{(\text{ii})}{=} \sum_{l=1}^r \nu_l^{k} \langle \Gamma^{(k)}_i ,u_l^{k} \otimes u_l^{k+1} \rangle \; \; \; (\text{where } \nu_l^k= \langle \vA_k \otimes \vB_k, \underset{{p \neq, k, k+1}}{\bigotimes}u^p_l\rangle )\\
&= \langle \Gamma^{(k)}_i, \sum_{l=1}^{r} \nu^k_l u_l^{k} \otimes u_l^{k+1} \rangle \\
&= \langle \Gamma^{(k)}_i, X^k_{\vA_k \otimes \vB_k} \rangle.
\end{align*}
The equality (i) follows from the identity $\langle a \otimes b \otimes c , x \otimes y \otimes z \rangle = \langle a, x \rangle \langle b, y \rangle \langle c, z \rangle$ for $a, b, c, x, y, z$ of commensurate dimensions. The equality (ii) follows from the definition $\nu_k^l$ in Lemma \ref{lemma:decomp_high}
\end{proof}
It follows immediately from Lemma \ref{lemma:high_eq} that in \eqref{eq:meas}, for each $k=1, \ldots, K-1$, $\mathcal{L}_1^{(k)} \left( \cdot \right)$, $\mathcal{L}_2^{(k)} \left( \cdot \right)$ are in fact, separable so that Algorithm \ref{alg:trecs_high} is applicable. Recovering the contractions involves solving a set of nuclear norm minimization sub-problems for each $k=1, \ldots, K-1$:
\begin{equation} \label{eq:opt_high1}
\begin{split}
\underset{Z_1}{\text{minimize}} &\qquad  \|Z_1\|_* \\
 \text{subject to } & \qquad  y_1^{(k)}= \left[ \begin{array}{c}
\langle  \Gamma^{(k)}_1, Z_1 \rangle \\
\vdots \\
\langle \Gamma^{(k)}_{m_k}, Z_1 \rangle
\end{array}
\right] 
 \end{split}
\end{equation}

\begin{equation} \label{eq:opt_high2}
\begin{split}
\underset{Z_2}{\text{minimize}} &\qquad  \|Z_2\|_* \\
 \text{subject to } & \qquad  y_2^{(k)}= \left[ \begin{array}{c}
\langle  \Gamma^{(k)}_1, Z_2 \rangle \\
\vdots \\
\langle \Gamma^{(k)}_{m_k}, Z_2 \rangle
\end{array}
\right] 
 \end{split}
\end{equation}

We have the following lemma concerning the solutions of these optimization problems:
\begin{lemma}
Suppose $m_k > 3r(n_k+n_{k+1}-r)$. Then the unique solutions to problems \eqref{eq:opt_high1} and \eqref{eq:opt_high2} are $X^{k}_{\vA_{k} \otimes\vB_k}$ and $X^{k}_{\vC_{k}\otimes \vD_k}$ respectively with high probability. 
\end{lemma}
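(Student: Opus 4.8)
The plan is to reduce each nuclear norm program to a standard low-rank matrix recovery problem from i.i.d.\ Gaussian measurements and invoke the existing guarantee, exactly mirroring the proof of Lemma~\ref{lemma:contr_rec} in the third-order separable random-projection setting. Fix $k$. First, by Lemma~\ref{lemma:decomp_high} the contraction $X^{k}_{\vA_k\otimes\vB_k}=\sum_{l=1}^{r}\nu_l^k\,u_l^k(u_l^{k+1})^{T}$ is a matrix in $\R^{n_k\times n_{k+1}}$ of rank at most $r$, and likewise $X^{k}_{\vC_k\otimes\vD_k}$ has rank at most $r$. Second, by Lemma~\ref{lemma:high_eq} the entries of $y_1^{(k)}$ are precisely $\langle\Gamma_i^{(k)},X^{k}_{\vA_k\otimes\vB_k}\rangle$ for $i=1,\ldots,m_k$, and those of $y_2^{(k)}$ are $\langle\Gamma_i^{(k)},X^{k}_{\vC_k\otimes\vD_k}\rangle$; hence the constraints in \eqref{eq:opt_high1} and \eqref{eq:opt_high2} are exactly ``recover an $n_k\times n_{k+1}$ matrix from $m_k$ inner products against independent $\mathcal{N}(0,1)$ measurement matrices,'' and in each case the true contraction is a feasible point of rank $\le r$.

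The only point requiring a word of care is that the target matrices here are themselves random: they depend on $\vA_k,\vB_k$ (resp.\ $\vC_k,\vD_k$), whereas the matrix-recovery theorem is stated for a \emph{fixed} target. This is handled by conditioning. The measurement matrices $\Gamma_i^{(k)}$ are drawn independently of $\vA_k,\vB_k,\vC_k,\vD_k$, so conditionally on the latter the target contractions are fixed matrices of rank at most $r$ while the $\Gamma_i^{(k)}$ remain i.i.d.\ Gaussian. Proposition~3.11 of \cite{venkat} then applies verbatim: since $m_k>3r(n_k+n_{k+1}-r)$, nuclear norm minimization recovers any fixed matrix of rank at most $r$ in $\R^{n_k\times n_{k+1}}$ from $m_k$ Gaussian measurements with high probability (probability at least $1-\exp(-C_0 n_k)$ for an absolute constant $C_0$). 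A union bound over the two programs, followed by averaging out the conditioning on $\vA_k,\vB_k,\vC_k,\vD_k$, yields that $X^{k}_{\vA_k\otimes\vB_k}$ and $X^{k}_{\vC_k\otimes\vD_k}$ are the unique optimizers of \eqref{eq:opt_high1} and \eqref{eq:opt_high2} with high probability. (If some $\nu_l^k$ vanish the contraction simply has smaller rank, which is still covered since $t\mapsto 3t(n_k+n_{k+1}-t)$ is non-decreasing for $t\le r\le n_k\le(n_k+n_{k+1})/2$.)

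I do not expect any genuine obstacle here: the statement is a direct higher-order transcription of the third-order lemma, the ambient dimensions $n_k,n_{k+1}$, rank $r$, and measurement count $m_k$ are precisely those appearing in the cited proposition, and Lemmas~\ref{lemma:decomp_high} and~\ref{lemma:high_eq} already do the structural work. The ``hard part,'' such as it is, is purely bookkeeping — being careful that the randomness of the contraction is absorbed by the conditioning step so that it does not interfere with the fixed-target recovery guarantee.
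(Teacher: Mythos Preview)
Your proposal is correct and follows exactly the route the paper intends: the paper itself omits the proof and simply states that it is ``analogous to that of Lemma~\ref{lemma:contr_rec},'' which is precisely the reduction you carry out (low-rank contraction via Lemma~\ref{lemma:decomp_high}, measurement identification via Lemma~\ref{lemma:high_eq}, then Proposition~3.11 of \cite{venkat}). Your additional remarks on conditioning out the randomness of $\vA_k,\vB_k,\vC_k,\vD_k$ and on the monotonicity of $t\mapsto 3t(n_k+n_{k+1}-t)$ are valid refinements that the paper leaves implicit.
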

The proof is analogous to that of Lemma \ref{lemma:contr_rec}.

We have the following theorem concerning the performance of Algorithm \ref{alg:trecs_high}.
\begin{theorem} \label{thm:exact_rec_high}
Let $\vX \in \R^{n_1\times \cdots \times n_K }$ be an unknown tensor of interest with rank $r \leq \min\left\{n_1, \ldots, n_K\right\}$. Suppose $m_k > 3r(n_k+n_{k+1}-r)$ for each $k=1, \ldots, K-1$. Then the procedure outlined in Algorithm \ref{alg:trecs_high} succeeds in exactly recovering $\vX$ and its low rank decomposition with high probability.
\end{theorem}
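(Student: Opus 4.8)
The plan is to replay the proof of Theorem~\ref{thm:exact_rec_sense} with each third-order object replaced by its mode-$(k,k+1)$ analogue, looping over $k=1,\ldots,K-1$. First I would invoke Lemma~\ref{lemma:decomp_high}: for each $k$ the two contractions $X^k_{\vA_k\otimes\vB_k}$ and $X^k_{\vC_k\otimes\vD_k}$ are $n_k\times n_{k+1}$ matrices, each a sum of $r$ rank-one terms $\nu_l^k\, u_l^k (u_l^{k+1})^T$, hence of rank at most $r$ (and exactly $r$ under the standing assumption that the factors are linearly independent along each mode and the $\nu_l^k$ are nonzero, which we verify below). Next, Lemma~\ref{lemma:high_eq} identifies the measurements $y_1^{(k)},y_2^{(k)}$ in~\eqref{eq:meas} with i.i.d.\ Gaussian linear measurements $Z\mapsto(\langle\Gamma^{(k)}_1,Z\rangle,\ldots,\langle\Gamma^{(k)}_{m_k},Z\rangle)$ of precisely these two contractions, so the nuclear-norm programs~\eqref{eq:opt_high1} and~\eqref{eq:opt_high2} are low-rank matrix recovery problems from Gaussian measurements.

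Then I would apply the preceding lemma (the higher-order analogue of Lemma~\ref{lemma:contr_rec}, itself a consequence of Proposition~3.11 of~\cite{venkat}): since $m_k>3r(n_k+n_{k+1}-r)$, with high probability the unique optimizers of~\eqref{eq:opt_high1},~\eqref{eq:opt_high2} are $X^k_{\vA_k\otimes\vB_k}$ and $X^k_{\vC_k\otimes\vD_k}$; hence Step~3 of Algorithm~\ref{alg:trecs_high} recovers both contractions exactly, for every $k$. Because $\vA_k,\vB_k,\vC_k,\vD_k$ have i.i.d.\ Gaussian entries, the scalars $\nu_l^k=\langle\vA_k\otimes\vB_k,\bigotimes_{p\neq k,k+1}u^p_l\rangle$ (and their $\vC_k,\vD_k$ counterparts) are almost surely nonzero with almost surely distinct pairwise ratios, so each recovered pair is non-degenerate and pairwise generic; Lemma~\ref{lemma:decomp_high} then guarantees that the eigendecompositions of $M_1=Z_1^*(Z_2^*)^\dagger$ and $M_2=((Z_2^*)^\dagger Z_1^*)^T$ in Step~4 return, up to scaling, the factor sets $\{u_l^k\}$ and $\{u_l^{k+1}\}$, with the common eigenvalues distinct so that the within-pair pairing is unambiguous.

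What remains is to run the loop and reconcile the $K-1$ separate eigendecompositions. Here the point is that the mode-$k$ contraction and the mode-$(k+1)$ contraction share the factor set $\{u_l^{k+1}\}$, so the simultaneous column permutation and sign flips of Step~6 (matching $\tilde U^k$ against the $U^k$ produced at iteration $k-1$) propagate a consistent ordering and a consistent sign convention from $U^1$ all the way to $U^K$. After the loop there is a single residual scaling freedom per rank-one term, which is pinned down by the over-determined linear system for $\lambda$ in Step~8; by linear independence of the factors along each mode the rank-one tensors $\bigotimes_{k=1}^K u_l^k$ are linearly independent, so this system has full column rank and a unique solution, and it is exact because $\vX$ satisfies it. A union bound over the $2(K-1)$ matrix-recovery events --- each failing with probability exponentially small in $n_1$ --- yields the high-probability conclusion, provided $K$ is not exponentially large in $n_1$ (which is the regime of interest).

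The reductions via Lemmas~\ref{lemma:high_eq} and~\ref{lemma:decomp_high} are immediate given what has already been established, so I expect no difficulty there. The part that needs care --- and that is genuinely new relative to the third-order case --- is the cross-mode alignment inside the loop: one must argue by induction on $k$ that the permutation/sign normalization carried in $U^k$ is exactly the one needed to make all $K$ factor matrices index the same $r$ rank-one components with mutually compatible signs, and that the per-mode scaling ambiguities therefore collapse to the single global per-term scaling that Step~8 resolves. This bookkeeping, though elementary, is the main obstacle.
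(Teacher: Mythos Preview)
Your proposal is correct and follows exactly the approach the paper takes: the paper itself simply states that the proof parallels that of Theorem~\ref{thm:exact_rec_sense} and omits the details, and your write-up is precisely that parallel argument, invoking Lemmas~\ref{lemma:decomp_high} and~\ref{lemma:high_eq} together with the higher-order analogue of Lemma~\ref{lemma:contr_rec} in the same roles their third-order counterparts played. Your additional bookkeeping on the cross-mode alignment in Step~6 is more detail than the paper provides, but it is consistent with the intended argument.
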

The proof parallels that of the proof of Theorem \ref{thm:exact_rec_sense} and is omitted for the sake of brevity.
\begin{remarks} \hspace{10mm}
\begin{itemize}
\item Note that the overall sample complexity is $2\sum_{k=1}^{K} m_k$, i.e., $6\sum_{k=1}^{K-1} r(n_{k}+n_{k+1}-r)$. This constitutes an order optimal sample complexity because a tensor of rank $r$ and order $K$ of these dimensions has $r\sum_{k=1}^{K}n_k$ degrees of freedom. In particular, when the tensor is ``square'' i.e.,  $n_1= \cdots = n_K=n$ the number of degrees of freedom is $Knr$ whereas the achieved sample complexity is no larger than $12Knr$, i.e., $O(Knr)$.
\item As with the third order case, the algorithm is tractable. The main operations involve solving a set of  matrix nuclear norm minimization problems (i.e. convex programs), computing eigenvectors, and aligning them. All of these are routine, efficiently solvable steps (thus ``polynomial time'') and indeed enables our algorithm to be scalable.

\end{itemize}
\end{remarks}
\subsection{Tensor Completion} \label{sec:higher_order_completion}
The method described in Section \ref{sec:third_order} for tensor completion can also be extended to higher order tensors in a straightforward way. Consider a tensor $\vX \in \R^{n_{1} \times \cdots \times n_{K}}$ of order $K$ and dimensions $n_1\times \cdots \times n_{K}$. Let the rank of this tensor be $r \leq \min \left\{ n_1, \ldots, n_K \right\}$ and be given by the decomposition:
\begin{align*}
\vX &= \sum_{l=1}^r u_l^1 \otimes \ldots \otimes u_l^K = \sum_{l=1}^r \bigotimes_{p=1}^{K} u^p_l,
\end{align*}
where $u_l^p \in \R^{n_p}$.
%We will be interested in slices of the given tensor that are identified by picking two \emph{consecutive} modes $(k, k+1)$, and by fixing all the indices not in those modes, i.e. $i_1 \in [n_{1}], \ldots, i_{k-1} \in [n_{k-1}], i_{k+2} \in [n_{k+2}], \ldots, i_K \in [n_K]$. Thus the indices of a slice $S^{(k)}$ are:
%\begin{align*}
%S^{(k)}:=\left\{ i_1 \right\} \times \ldots  \left\{ i_{k-1} \right\} \times [n_k] \times [n_{k+1}] \times \left\{ i_{k+2} \right\} \times \cdots \times \left\{ i_K \right\},
%%S^{(k)}:=\left\{ (i_1, \ldots, i_{k-1}, s,t, i_{k+2}, \ldots, i_K) : \right. \\
%%\left. s \in [n_k], t \in [n_{k+1}]  \right\}.
% \end{align*}
% and the corresponding slice may be viewed as a matrix, denoted by $X^{k}$, (to simplify notation, we will hide the indices $i_1, \ldots, i_K$)
% 
%While slices of tensors can be defined more generally (i.e. the modes need not be consecutive), in this paper we will only need to deal with such ``contiguous'' slices. 
Extending the sampling notation for tensor completion from the third order case, we define $\Omega$ to be the set of indices corresponding to the observed entries of the unknown low rank tensor $\vX$, and define:
\begin{align*}
\Omega^{(k)}:=S^{(k)} \cap \Omega \qquad m^{(k)}:=|\Omega^{(k)}|,
\end{align*}
where ${S}^{(k)} \in \mathcal{S}^{k}$.
Akin to the third-order case, along each pair of consecutive modes, we will need samples from two distinguished slices. We  denote the index set of these distinct slices by $S_1^{(k)}$ and $S_2^{(k)}$, the corresponding slices by $X_1^k$ and $X_2^k$, the index set of the samples revealed from these slices by $\Omega_1^{(k)}$ and $\Omega_2^{(k)}$, and their cardinality by $m_1^{(k)}$ and $m_2^{(k)}$.

It is a straightforward exercise to argue that observations obtained from each slice $S_i^{(k)}$, $i=1,2$ correspond to separable measurements, so that Algorithm \ref{alg:trecs_high} applies.
The first step of Algorithm \ref{alg:trecs_high} involves solving a set of nuclear norm minimization sub-problems (two problems for each $k=1, \ldots, K-1$) to recover the slices:
\begin{equation} \label{eq:opt_high_comp1}
\underset{Z_1}{\text{minimize}} \qquad \|Z_1\|_* \qquad \text{subject to } \qquad  \vX_{\Omega_1^{(k)}}=\left[ Z_1 \right]_{\Omega_1^{(k)}}
\end{equation}
\begin{equation} \label{eq:opt_high_comp2}
\underset{Z_2}{\text{minimize}} \qquad  \|Z_2\|_* \qquad \text{subject to } \qquad  \vX_{\Omega_2^{(k)}}=\left[ Z_2 \right]_{\Omega_2^{(k)}}
\end{equation}
%\begin{equation} \label{eq:opt_high2}
%\begin{split}
%\underset{W_k}{\text{minimize}} \qquad & \|W_k\|_* \\
%\text{subject to } \qquad & y_i^{(k)}= \langle \Gamma_k^{(i)}, W_k \rangle, \qquad i=m_k+1, \ldots, 2m_k
%\end{split}
%\end{equation}
%We have the following lemma concerning the solutions of these optimization problems:
%\begin{lemma}
Each of these optimization problems will succeed in recovering the corresponding slices provided incoherence, non-degeneracy and the slice conditions hold (note that these notions all extend to the higher order setting in a transparent manner). Under these assumptions, if the entries from each slice are uniformly randomly sampled with cardinality at least
$m^{(i)}_k > C(n_k+n_{k+1})\log^2(n_{k+1})$, $i=1,2$ for some constant $C$, then the unique solutions to problems \eqref{eq:opt_high1} and \eqref{eq:opt_high2} will be $X^{k}_{1}$ and $X^{k}_2$ respectively with high probability.  
%\end{lemma}
%The proof is analogous to that of Lemma \ref{lemma:contr_rec}.

Once the slices $X^{k}_{1}$ and $X^{k}_{2}$ are recovered correctly using \eqref{eq:opt_high1}, \eqref{eq:opt_high2} for each $k=1, \ldots, K-1$ one can compute $M_1:=X^{k}_{1} \left( X^{k}_{2} \right)^{\dagger}$ and $M_2:=\left( X^{k}_{2}\right)^{\dagger}X^{k}_{1} $ and perform eigen-decompositions to obtain the factors (up to possible rescaling) $\{u_l^k \}$ and $\{u_l^{k+1} \}$. Finally, once the tensor factors are recovered, the tensor itself can be recovered exactly by solving a system of linear equations. These observations can be summarized by the following theorem:
\begin{theorem} \label{thm:exact_rec_highcomp}
Let $\vX \in \R^{n_1\times \cdots \times n_K }$ be an unknown tensor of interest with rank $r \leq \min\left\{n_1, \ldots, n_K\right\}$. Suppose we obtain $m_1^{(k)}$ and $m_2^{(k)}$ random samples from each of the two distinct mode $k$ slices for each $k=1, \ldots, K-1$. Furthermore suppose the tensor $X$ is incoherent, satisfies the slice conditions for each mode, and the slices from which samples are obtained satisfy non-degeneracy and pairwise genericity for each mode. Then there exists a constant $C$ such that if
$$m_i^{(k)}> C(n_k+n_{k+1})\log^2(n_{k+1}) \qquad i \in \left\{1,2 \right\}$$
the procedure outlined in Algorithm \ref{alg:trecs_high} succeeds in exactly recovering $\vX$ and its low rank decomposition with high probability.
\end{theorem}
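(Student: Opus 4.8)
The plan is to mirror the proof of Theorem \ref{thm:exact_rec_comp}, replacing the four third-order slices by the $2(K-1)$ contiguous slices used in Algorithm \ref{alg:trecs_high} and then chaining the modal reconstructions together. First I would note that each of the two distinguished mode-$k$ slices $X_1^k, X_2^k$ is a rank-(at most $r$) matrix of size $n_k \times n_{k+1}$: this is immediate from the matrix decomposition \eqref{eq:decomp_high1} of Lemma \ref{lemma:decomp_high} applied with a coordinate tensor. The higher-order analogue of Corollary \ref{cor:incoherence} then shows that these slices inherit incoherence from the tensor — by \eqref{eq:decomp_high1} their column and row spaces are spanned by (combinations of) $\{u_l^k\}$ and $\{u_l^{k+1}\}$, so the coherence of each slice is controlled by $\mu_0$ — while the hypothesized slice conditions supply the $\|UV^T\|_\infty$ bounds required by \cite{Ben}. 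Consequently, for a suitable constant $C$ (absorbing the incoherence and slice constants, and a factor of $r$), the sample counts $m_i^{(k)} > C(n_k+n_{k+1})\log^2(n_{k+1})$ meet the hypotheses of Theorem 1.1 of \cite{Ben}, so each of the $2(K-1)$ nuclear-norm problems \eqref{eq:opt_high_comp1}--\eqref{eq:opt_high_comp2} recovers its slice exactly with high probability; a union bound over these $2(K-1)$ events (affecting only constants) yields exact recovery of all distinguished slices simultaneously w.h.p.

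Given the exact slices, I would invoke Lemma \ref{lemma:decomp_high} mode by mode. For each $k$, taking $\vA,\vB$ to be the coordinate tensors indexing $X_1^k, X_2^k$, non-degeneracy guarantees $\nu_l^k \neq 0$, so the diagonal matrices in \eqref{eq:decomp_high1} are invertible and the eigen-decompositions of $M_1$ and $M_2$ in \eqref{eq:m_and_n} are as claimed; pairwise genericity guarantees that the eigenvalues of $M_1$ are distinct, so the eigenvectors $\{u_l^k\}$ and $\{u_l^{k+1}\}$ are determined up to scaling and can be consistently paired via their common eigenvalues. The matrix $U^{k+1}$ produced at stage $k$ and $\tilde U^{k+1}$ produced at stage $k+1$ agree up to a common column permutation and per-column sign, which the alignment step (step 6 of Algorithm \ref{alg:trecs_high}) resolves by matching against the already-fixed $U^{k+1}$; proceeding inductively from $k=1$ therefore fixes a single global ordering and sign convention, and the rank-one terms $\bigotimes_{k} u_l^k$ are correctly assembled. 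Finally, linear independence of the factors in each mode (Assumption \ref{assump:2}) makes the map $\lambda \mapsto \sum_l \lambda_l \bigotimes_k u_l^k$ injective, so the over-determined linear system $\vX_\Omega = \sum_l \lambda_l (\bigotimes_k u_l^k)_\Omega$ has a unique exact solution — here one checks that the observed entries already pin down $\lambda$, which holds since after recovery the full slices (a superset of $\Omega$) do.

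The step I expect to be the main obstacle is the bookkeeping for the cross-modal alignment: one must argue carefully that, after resolving the permutation and sign ambiguity at each stage, the factor $u_l^{k+1}$ handed forward to stage $k+1$ is the same (as a labeled, signed vector) as the one recovered there, so that the induction closes and no mode ends up "out of phase"; this is the genuinely new ingredient relative to the third-order proof. Everything else is a transcription of Theorem \ref{thm:exact_rec_comp}: the incoherence-transfer and matrix-completion pieces are verbatim analogues of Corollary \ref{cor:incoherence} and Lemma \ref{lemma:contr_rec}, and the only $K$-dependent quantities — the number of sub-problems and the size of the union bound — affect only constants, so the stated sample complexity per mode is unchanged.
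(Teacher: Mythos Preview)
Your proposal is correct and follows essentially the same route as the paper: the paper does not write out a formal proof of Theorem \ref{thm:exact_rec_highcomp} but instead presents it as a summary of the preceding discussion, which (just as you do) reduces each slice-recovery step to Theorem 1.1 of \cite{Ben} via the higher-order analogues of Corollary \ref{cor:incoherence} and Lemma \ref{lemma:contr_rec}, invokes Lemma \ref{lemma:decomp_high} with non-degeneracy and pairwise genericity to extract the factors mode by mode, and finishes with the linear system for the weights. Your identification of the cross-modal alignment bookkeeping as the only genuinely new ingredient relative to Theorem \ref{thm:exact_rec_comp} is accurate, and the paper handles it in the same way (step 6 of Algorithm \ref{alg:trecs_high}).
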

We finally remark that the resulting sample complexity of the entire algorithm is $\sum_{k=1}^{K-1}(m^{(k)}_1~+~m^{(k)}_2)$, which is $O(Kr n_K  \log^2(n_K))$.

\section{Experiments}
In this section we present numerical evidence in support of our algorithm. We conduct experiments involving (suitably) random low-rank target tensors and their recovery from (a) Separable Random Projections and (b) Tensor Completion. We obtain phase transition plots for the same, and compare our performance to that obtained from the matrix-unfolding based approach proposed in \cite{squaredeal}.  For the phase transition plots, we implemented matrix completion using the method proposed in \cite{rao2013conditional}, since the SDP approach for exact matrix completion of unfolded tensors was found to be impractical for even moderate-sized problems. 
\label{sec:exp}

\subsection{Separable Random Projections : Phase Transition}
In this section, we run experiments comparing T-ReCs to tensor recovery methods based on ``matricizing" the tensor via unfolding \cite{squaredeal}. 
%\twocolumn
\begin{figure}%[!ht]
\centering
\subfigure[Tensor recovery using T-ReCs. (n=$30$)]{
\includegraphics[trim = 40mm 70mm 40mm 70mm, clip = true, scale = 0.5]{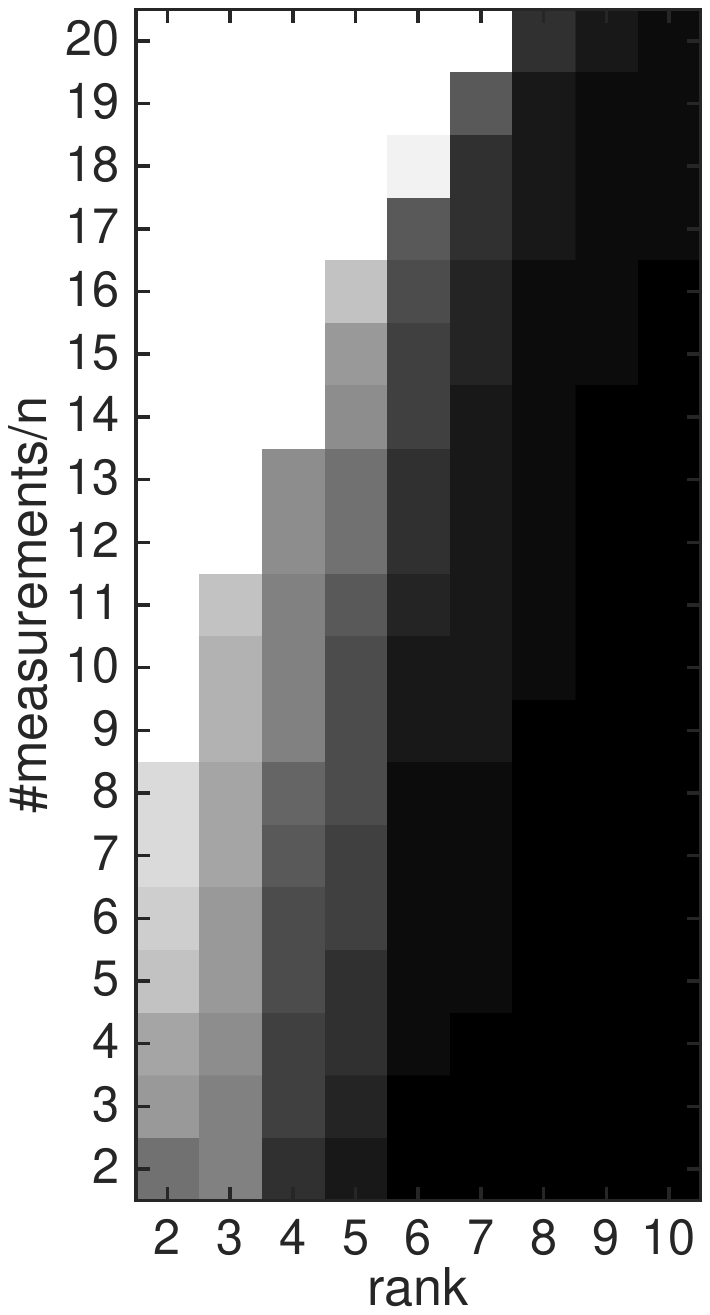}
}
\subfigure[Tensor recovery using \cite{squaredeal}. (n=$30$)]{
\includegraphics[trim = 40mm 70mm 40mm 70mm, clip = true, scale = 0.5]{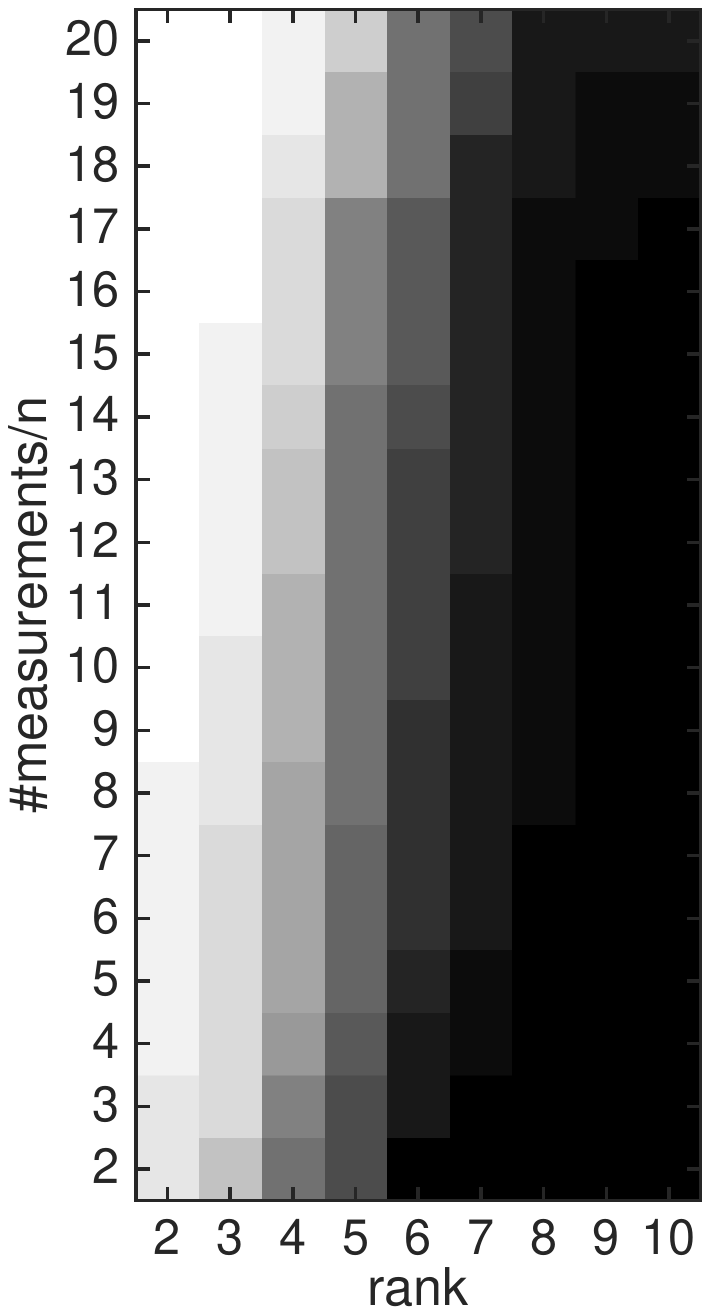}
}
\caption{Phase transition diagram for tensor recovery using our method. White indicates a probability of recovery of 1, while black indicates failure of exact recovery. Note that in the matrix unfolding case, one requires more measurements compared to our method to achieve the same probability of recovery for a given rank. }
\label{phase}
\end{figure} 
We consider a tensor of size $30 \times 30 \times 30$ whose factors $U, V, W \in \R^{n \times r}$ are i.i.d standard Gaussian entries. We vary the rank $r$ from 2 to 10, and look to recover these tensors from different number of measurements $m \in [2,20]*n$. For each $(r,n)$ pair, we repeated the experiment 10 times, and consider recovery a ``success" if the MSE is less than $10^{-5}$. Figure \ref{phase} shows that the number of measurements needed for accurate tensor recovery is typically less in our method, compared to the ones where the entire tensor is converted to a matrix for low rank recovery.

\subsection{Tensor Completion: Phase Transition}
\label{sec:exp_completion}

 We again considered tensors of size $30 \times 30 \times 30$,  varied the rank of the tensors from $2$ to $10$, and obtained random measurements from four slices (without loss of generality we may assume they are the first 2 slices across modes 1 and 2). The number of measurements obtained varied as $n \times [2,20]$. Figure \ref{ten_rec} shows the phase transition plots of our method. We deem the method to be a ``success" if the MSE of the recovered tensor is less than $10^{-5}$. Results were averaged over $10$ independent trials. 
 %\textcolor{red}{GT: 5 or 10 independent trials?}

\begin{figure}[!h]
\centering
\subfigure[Phase transition for tensor completion using T-ReCs. (n=$30$)]{
\includegraphics[trim = 60mm 70mm 60mm 70mm, clip = true, scale = 0.5]{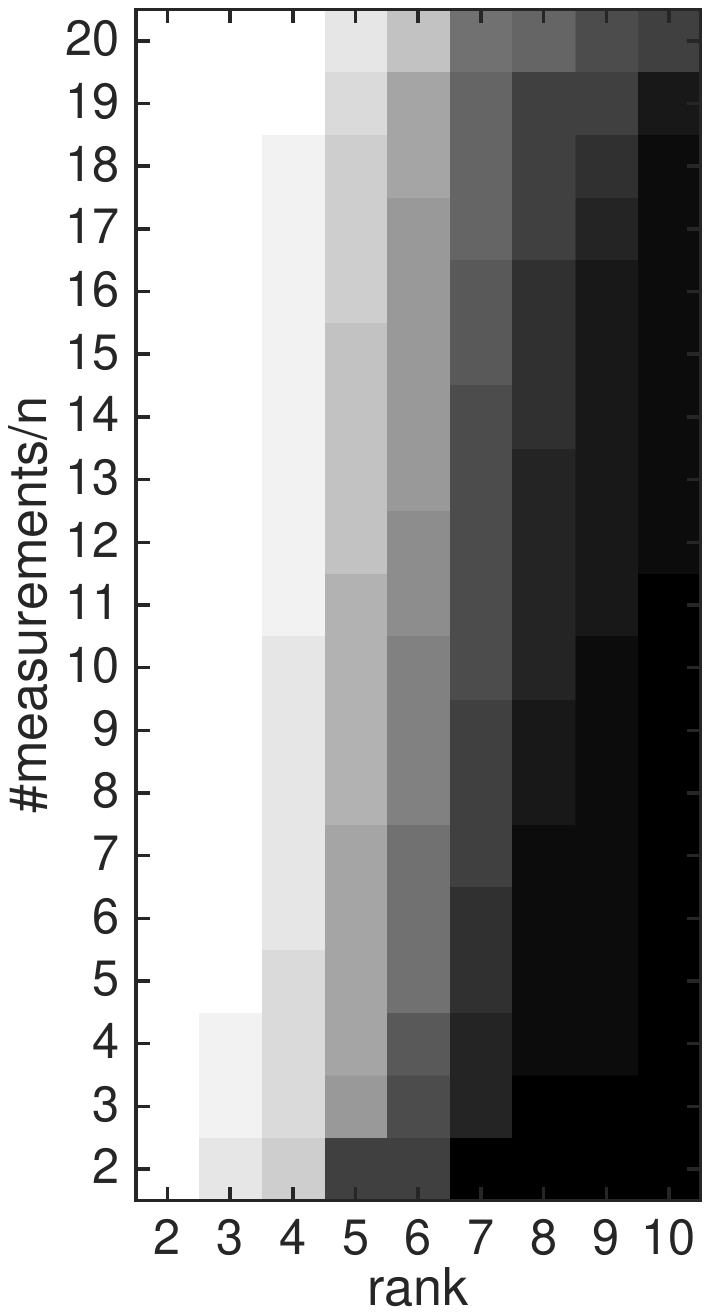}
}
\subfigure[Phase transition for tensor completion using \cite{squaredeal}. (n=$30$)]{
\includegraphics[trim = 60mm 70mm 60mm 70mm, clip = true, scale = 0.5]{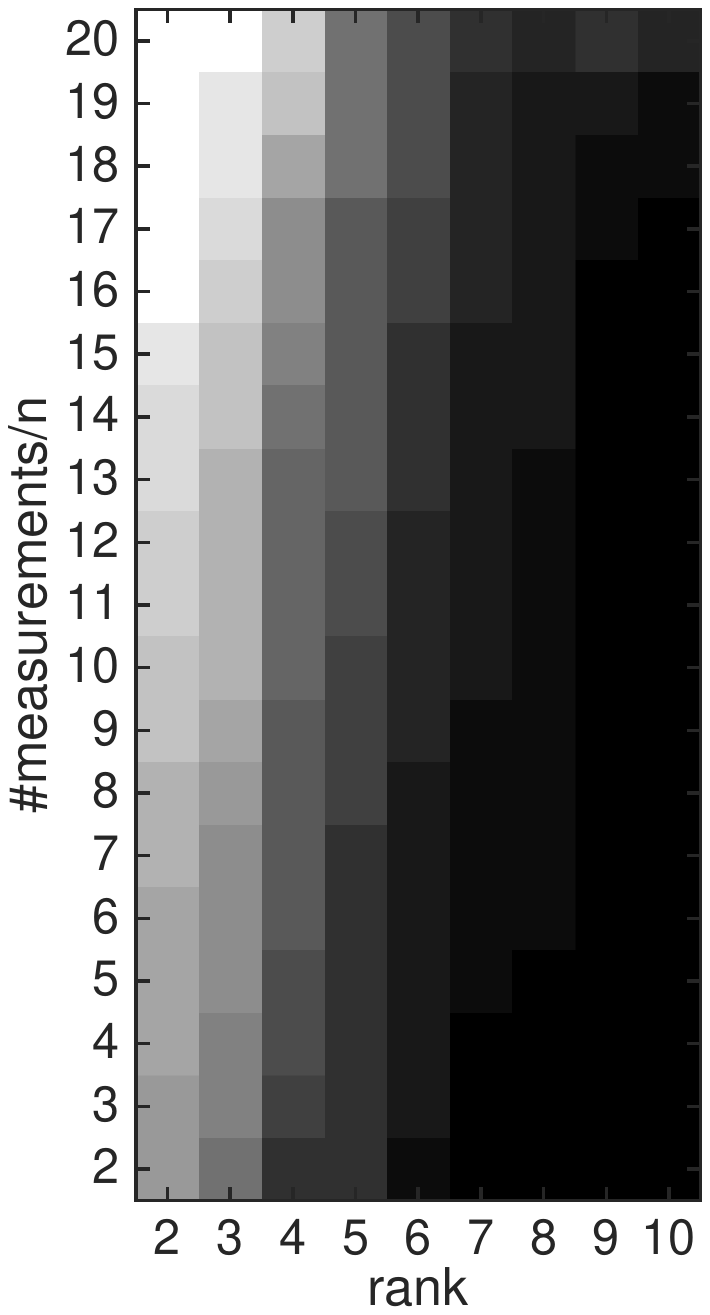}
}
\label{ten_rec}
\caption{Phase transition plots for tensor recovery. Results are averaged over 10 independent trials. White indicates success whereas black indicates failure.}
% \textcolor{red}{GT: 5 or 10 independent trials? And which one is based on T-ReCs?}} 
\end{figure}

\subsection{Speed Comparisons}
\label{sec:speed}

We finally compared the time taken to recover an $n \times n \times n$ tensor of rank 3. Figure \ref{time} shows that, T-ReCs with four smaller nuclear norm minimizations is far more scalable computationally as compared to the method of unfolding the tensor to a large matrix and then solving a single nuclear norm minimization program. This follows since matricizing the tensor involves solving for an $n^2 \times n$ matrix. Our method can thus be used for tensors that are orders of magnitude larger than competing methods. 

\begin{figure}[!h]
\begin{center}
\subfigure[Time taken to recover third order tensors. The numbers $5$ and $10$ in the legend refer to the cases where we obtain $5n$ and $10n$ measurements respectively]{
\includegraphics[width = 60mm, height = 50mm]{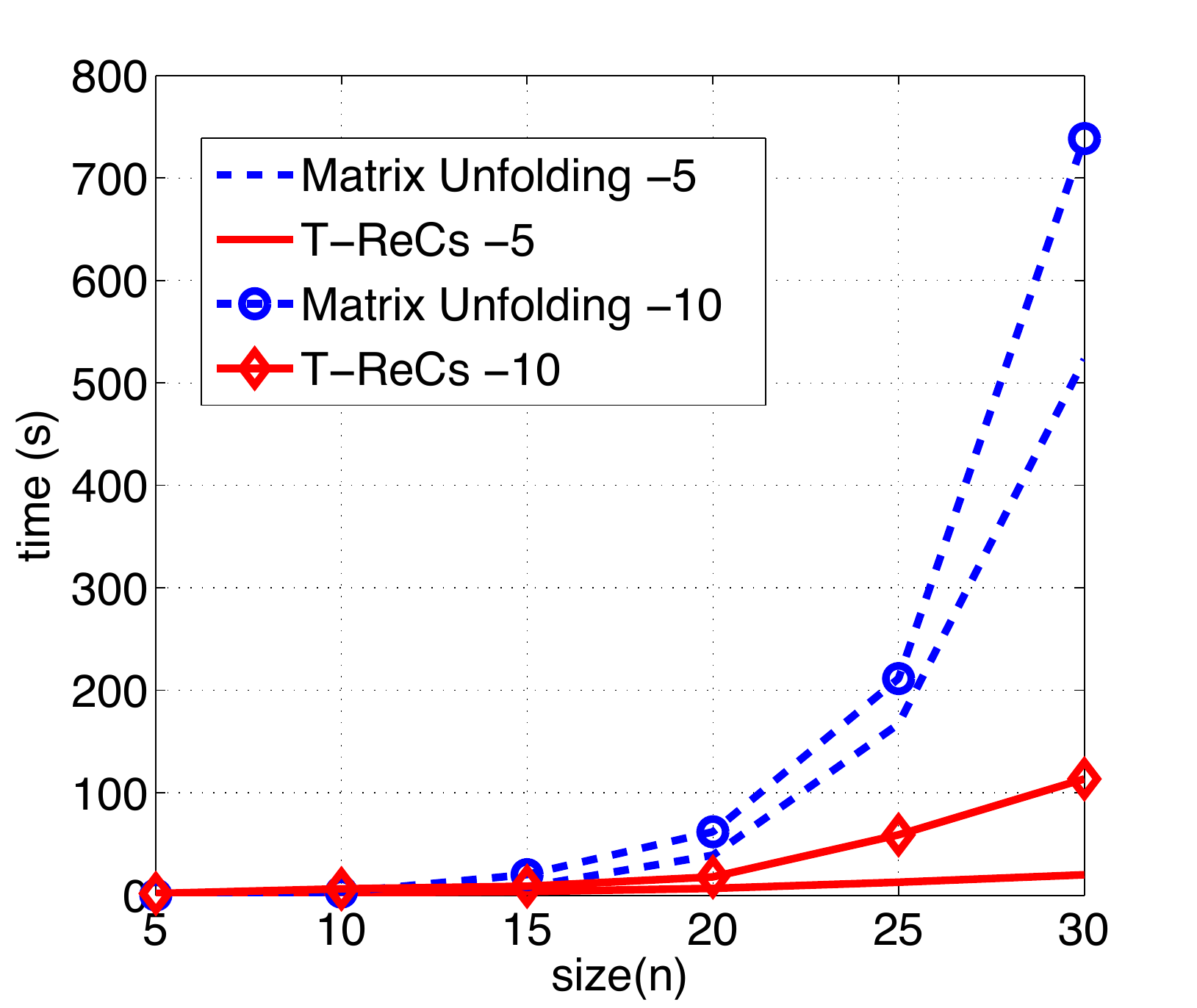}
\label{time}} \quad
\subfigure[Time taken for tensor completion by our method (T-ReCs) to that of flattening the tensor (Matrix Unfolding).]{\includegraphics[width = 60mm, height = 50mm]{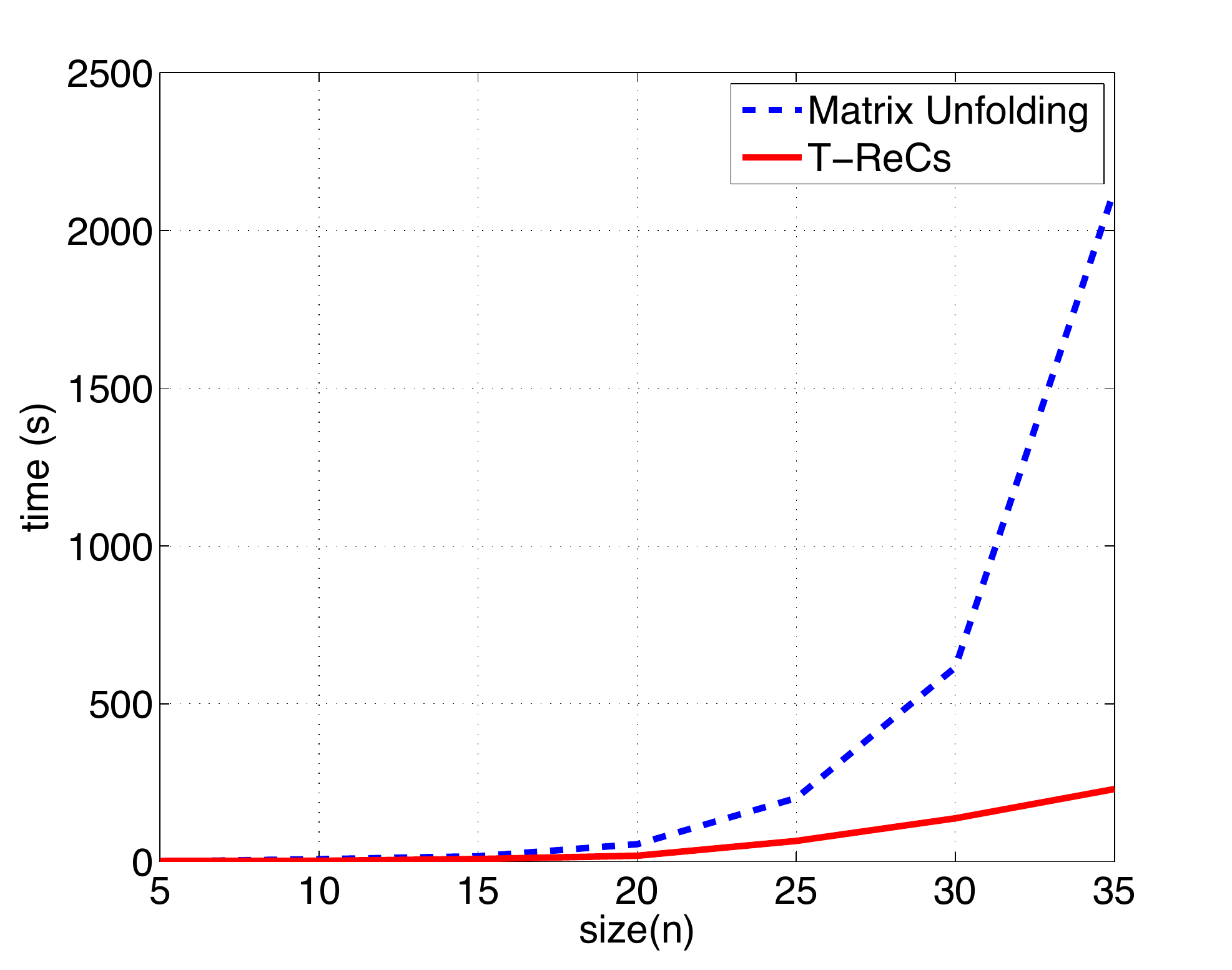}
\label{ten_time}}
\end{center}
\end{figure}

%\begin{figure}[!h]
%\centering
%%\includegraphics[width = 60mm, height = 50mm]{time.eps}
%\includegraphics[width = 60mm, height = 50mm]{ten_rec_new.eps}
%\caption{Time taken to recover third order tensors. The numbers $5$ and $10$ in the legend refer to the cases where we obtain $5n$ and $10n$ measurements respectively.} 
%%\textcolor{red}{GT: Shall we label the curves as "matricization" vs "T-ReCs"? "Matrix" vs "Tensor" is a bit confusing.}}
%\label{time}
%\end{figure}

Along lines similar to the recovery case, we compared execution times to complete a $35 \times 35 \times 35$ sized tensor. Figure \ref{ten_time} shows again that the matrix completion approach takes orders of magnitude more time than that taken by our method. We average the results over 10 independent trials, and set $r = \frac{n}{5}, ~\ m = 3nr$

%\begin{figure}[!h]
%\centering
%%\includegraphics[width = 60mm, height = 50mm]{time3.eps}
%\includegraphics[width = 60mm, height = 50mm]{time_comp_new.eps}
%\caption{Time taken for tensor completion by our method (T-ReCs) to that of flattening the tensor (Matrix Unfolding).}
%% \textcolor{red}{GT: Shall we label the curves as "matricization" vs "T-ReCs"? "Matrix" vs "Tensor" is a bit confusing. }}
%\label{ten_time}
%\end{figure}

\section{Conclusion and Future Directions} \label{sec:conclusion}
We introduced a computational framework for exact recovery of low rank tensors. A new class of measurements, known as \emph{separable} measurements was defined, and sensing mechanisms pf practical interest such as random projections and tensor completion with samples restricted to a few slices were shown to fit into the separable framework. Our algorithm, known as T-ReCs, built on the classical Leurgans' algorithm for tensor decomposition, was shown to be computationally efficient, and enjoy almost optimal sample complexity guarantees in both the random projection and the completion settings.
A number of interesting avenues for further research follow naturally as a consequence of this work:
\begin{enumerate}
\item \textbf{Robustness:} Our algorithm has been analyzed in the context of \emph{noiseless} measurements. It would be interesting to study variations of the approach and the resulting performance guarantees in the case when measurements are noisy, in the spirit of the matrix completion literature \cite{matcompnoise}.
\item \textbf{Non-separable measurements:} Our approach relies fundamentally on the measurements being separable. Tensor inverse problems, such as tensor completion in the setting when samples are obtained randomly and uniformly from the tensor do not fit into the separable framework. Algorithmic approaches for non-separable measurements thus remains an important avenue for further research.
\item \textbf{Tensors of intermediate rank:} Unlike matrices, the rank of a tensor can be larger than its (largest) dimension, and indeed increase polynomially in the dimension. The approach described in this paper addresses inverse problems where the rank is smaller than the dimension (low-rank setting). Extending these methods to the intermediate rank setting is an interesting and challenging direction for future work.
\item \textbf{Methods for tensor regularization:} Tensor inverse problems present an interesting dichotomy with regards to rank regularization. On the one hand, there is no known natural and tractable rank-regularizer (unlike the matrix case, the  nuclear norm is not known to be tractable to compute). While various relaxations for the same have been proposed, the resulting approaches (while polynomial time), are neither scalable nor known to enjoy strong sample complexity guarantees. On the other hand, matrix nuclear norm has been used in the past in conjunction with matrix unfolding, but the resulting sample complexity performance is known to be weak. Our work establishes a third approach, we bypass the need for unfolding and expensive regularization, yet achieve almost optimal sample complexity guarantees and a computational approach that is also far more scalable. However, the method applies only for the case of separable measurements. This raises interesting questions regarding the need/relevance for tensor regularizers, and the possibility to bypass them altogether.
%\item Do we need new regularizers for tensors? In what settings?
%\item Tensors of rank beyond $n$ are not organically handled here. We can probably do something along the lines of Moitra, Bhaskaran et al.
%\item Table of comparisons??
%\item Other to do: Notational consistency in higher order section, emphasize surprising simplicity of algo in intro, more discussion about separable measurements?,
\end{enumerate}

%%%%%%%%%%%%%%%%%%%%%%%%%%%%%%%%%%%%%%%%%%%%%%%%
%%%%%%%%%%%%%%%%%%%%%%%%%%%%%%%%%%%%%%%%%%%%%%%%
%%%%%%%%%%%%%%%%%%%%%%%%%%%%%%%%%%%%%%%%%%%%%%%%
%%%%%%%%%%%%%%%%%%%%%%%%%%%%%%%%%%%%%%%%%%%%%%%%
%%%%%%%%%%%%%%%%%%%%%%%%%%%%%%%%%%%%%%%%%%%%%%%%
%%%%%%%%%%%%%%%%%%%%%%%%%%%%%%%%%%%%%%%%%%%%%%%%
%%%%%%%%%%%%%%%%%%%%%%%%%%%%%%%%%%%%%%%%%%%%%%%%
%%%%%%%%%%%%%%%%%%%%%%%%%%%%%%%%%%%%%%%%%%%%%%%%
%%%%%%%%%%%%%%%%%%%%%%%%%%%%%%%%%%%%%%%%%%%%%%%%
%%%%%%%%%%%%%%%%%%%%%%%%%%%%%%%%%%%%%%%%%%%%%%%%
%\section{Conclusions and Discussion}
%\label{sec:conc}
%In this paper, we proposed a novel tensor recovery algorithm from random linear measurements. Our method is an order of magnitude faster than competing tensor recovery methods, and achieves order optimal sample complexity results. 
\bibliographystyle{plain}
\bibliography{tensor_v2}

\end{document}